\def\eqref#1{equation~\ref{#1}}
\def\1{\bm{1}}
\DeclareMathAlphabet{\mathsfit}{\encodingdefault}{\sfdefault}{m}{sl}
\SetMathAlphabet{\mathsfit}{bold}{\encodingdefault}{\sfdefault}{bx}{n}
\DeclareMathOperator*{\argmin}{arg\,min}
\newtheorem{theorem}{Theorem}
\newtheorem{proposition}{Proposition}
\newtheorem{assumption}{Assumption}
\newcommand{\cvec}[1]{\begin{bmatrix}#1\end{bmatrix}}
\newcommand{\defeq}{:=}
\newcommand{\normal}{\mathcal{N}}
\newcommand{\naturals}{\mathbb{N}}
\newcommand{\reals}{\mathbb{R}}
\newcommand{\Ex}{\mathbb{E}}
\newcommand{\Corr}{\text{Corr}}
\newcommand{\mean}{\bar{x}}
\title{Resonance in Weight Space: Covariate Shift Can Drive Divergence of SGD with Momentum}
\author{%
  Kirby Banman$^1$, Liam Peet-Pare$^2$, Nidhi Hegde$^1$, Alona Fyshe$^1$, Martha White$^1$\\
  Department of Computing Science$^1$, Department of Mathematical and Statistical Sciences$^2$\\
  University of Alberta\\
  Edmonton, AB T6G 2E8 \\
  \texttt{\{kdbanman, peetpare, nidhi.hegde, alona, whitem\}@ualberta.ca} \\
}
\begin{document}

\maketitle

\begin{abstract}
Most convergence guarantees for stochastic gradient descent with momentum (SGDm) rely on iid  sampling. Yet, SGDm is often used outside this regime, in settings with temporally correlated input samples such as continual learning and reinforcement learning. Existing work has shown that SGDm with a decaying step-size can converge under Markovian temporal correlation. In this work, we show that SGDm under covariate shift with a fixed step-size can be unstable and diverge. In particular, we show SGDm under covariate shift is a parametric oscillator, and so can suffer from a phenomenon known as resonance. We approximate the learning system as a time varying system of ordinary differential equations, and leverage existing theory to characterize the system's divergence/convergence as resonant/nonresonant modes. The theoretical result is limited to the linear setting with periodic covariate shift, so we empirically supplement this result to show that resonance phenomena persist even under non-periodic covariate shift, nonlinear dynamics with neural networks, and optimizers other than SGDm.
\end{abstract}

\section{Introduction}

Stochastic gradient descent (SGD) \citep{robbins1951stochastic} -- and its variants such as Adagrad \citep{duchi2011adagrad}, ADAM \citep{kingma2014adam} and RMSprop \citep{hinton2012rmsprop} -- are very widely used optimization algorithms across machine learning. SGD is conceptually straightforward, easy to implement, and often performs well in practice. Among the variants of SGD, accelerated versions based on Polyak's or Nesterov's acceleration \citep{polyak1964somemethods, nesterov1983amethod}, known generally as Stochastic Gradient Descent with Momentum (SGDm), are used widely due to the improvements in convergence rate they offer. SGDm can give up to a quadratic speedup to SGD on many functions, and is in fact optimal among all methods having only information about the gradient at consecutive iterates for convex and Lipschitz continuous optimization problems \citep{nesterov2004intro, goh2017why}. SGDm has the same computational complexity as SGD, but exhibits superior convergence rates under reasonable assumptions \citep{su2014differential}.

These convergence results for SGDm, however, rely on independent and identically distributed (iid) sampling. Little is known about the convergence properties of SGDm under non-iid sampling, yet the non-iid setting is critical. In many machine learning problems it is expensive or impossible to obtain iid samples. In online learning \citep{rakhlin2010online} and reinforcement learning (RL) \citep{sutton2018reinforcement}, the data becomes available in a sequential order and there is a temporal dependence among the samples. There is a particularly strong temporal dependence in RL, where observed states are sampled according to the transition dynamics of the underlying Markov decision process (MDP). Federated learning \citep{hsieh2020noniidquagmire} and time-series learning \citep{kuznetsov2014generalization} provide further examples of when non-iid sampling is essential to the learning problem.

Without momentum, SGD's convergence rate has been examined under non-iid sampling with the stochastic approximation framework in \citep{benveniste2012adaptive, kushner2003stochastic}, and more recently under specific assumptions of ergodicity \citep{duchi2012ergodic} or Markovian sampling \citep{nagaraj2020sgdmarkov, doan2020finite, sun2018markov}. Convergence rates under Markovian sampling are also known for ADAM-type algorithms when applied to policy gradient and temporal difference learning \citep{xiong2020nonasymptotic}. To our knowledge, however, there has been little work on providing convergence rates or guarantees for SGDm under non-iid sampling.  In \citep{doan2020convergence}, a progress bound is provided for SGDm under Markovian sampling based on mixing time---the time required for a distribution's convergence toward its stationary distribution---along with a convergence rate guarantee under decaying step-sizes.  In this work, we assume a fixed step-size, since it is a common choice in the online setting, especially when the practitioner is unsure of mixing rate or stationarity.  

There is a broad literature using ordinary differential equations (ODEs) to analyze gradient descent methods by approximating descent updates as continuous-time flows.  Early work is comprehensively discussed in \citep{kushner2003stochastic, benveniste2012adaptive}.  Despite the age and establishment of the linear setting, the gradient flow lens continues to reveal new insights (e.g. implicit rank reduction \citep{arora2019implicit}) and new perspectives on old insights (e.g. regularization of early stopping \citep{ali2019continuous}).  Recent work has paid particular attention to the flow induced by momentum accelerated methods \citep{su2014differential, wibisono2016variational, wilson2016lyapunov, scieur2017integration, muehlebach2021jordan, diakonikolas2019approximate, muehlebach2019dynamical, li2017stochastic, simsekli2020fractional, betancourt2018symplectic, kovachki2019analysis, attouch2018fast, shi2019acceleration, siegel2019accelerated, zhang2018direct, berthier2021continuized, kovachki2021continuous}.  In these works, it is demonstrated that linear regression under iid sampling with SGDm can be represented as an ODE resembling a harmonic oscillator, sometimes with a time-decaying damping coefficient.

\textbf{Contributions:} In this work, we show that non-iid sampling, due to covariate shift, induces a related system: the parametric oscillator, a harmonic oscillator having coefficients which vary over time in a manner capable of exponentially exciting the system \citep{mumford1960somehistoryparametric, csorgHo2010stability, halanay1966diffeqCH1}. We characterize the underlying time-varying ODE, which requires a novel approach to incorporate covariate shift, and develop specific conditions on covariate shift that lead to divergence in SGDm. 
We provide an empirical design, with synthetic data, to systematically test the response of the learning system to different covariate shift frequencies. We empirically validate that resonance-driven divergence occurs in a learning system which aligns well with theoretical assumptions. We follow with similar demonstrations on learning systems which progressively relax further and further away from our theoretical assumptions, including non-periodic covariate shift, learning with neural networks and optimizers other than SDGm. The implications of this work are that resonance can occur in learning systems with momentum, causing instability or poor accuracy, simply due to certain temporal correlations in the input sequence. Our results provide a novel direction to better understand the properties of our learning systems for time varying inputs---namely beyond iid sampling---and point to a gap in the robustness of our algorithms that merits further investigation.

\section{Problem Setting}
\begin{wrapfigure}[13]{r}{0.6\textwidth}
\centering
    \begin{subfigure}[b]{0.29\textwidth}
        \centering
        \includegraphics[width=\textwidth]{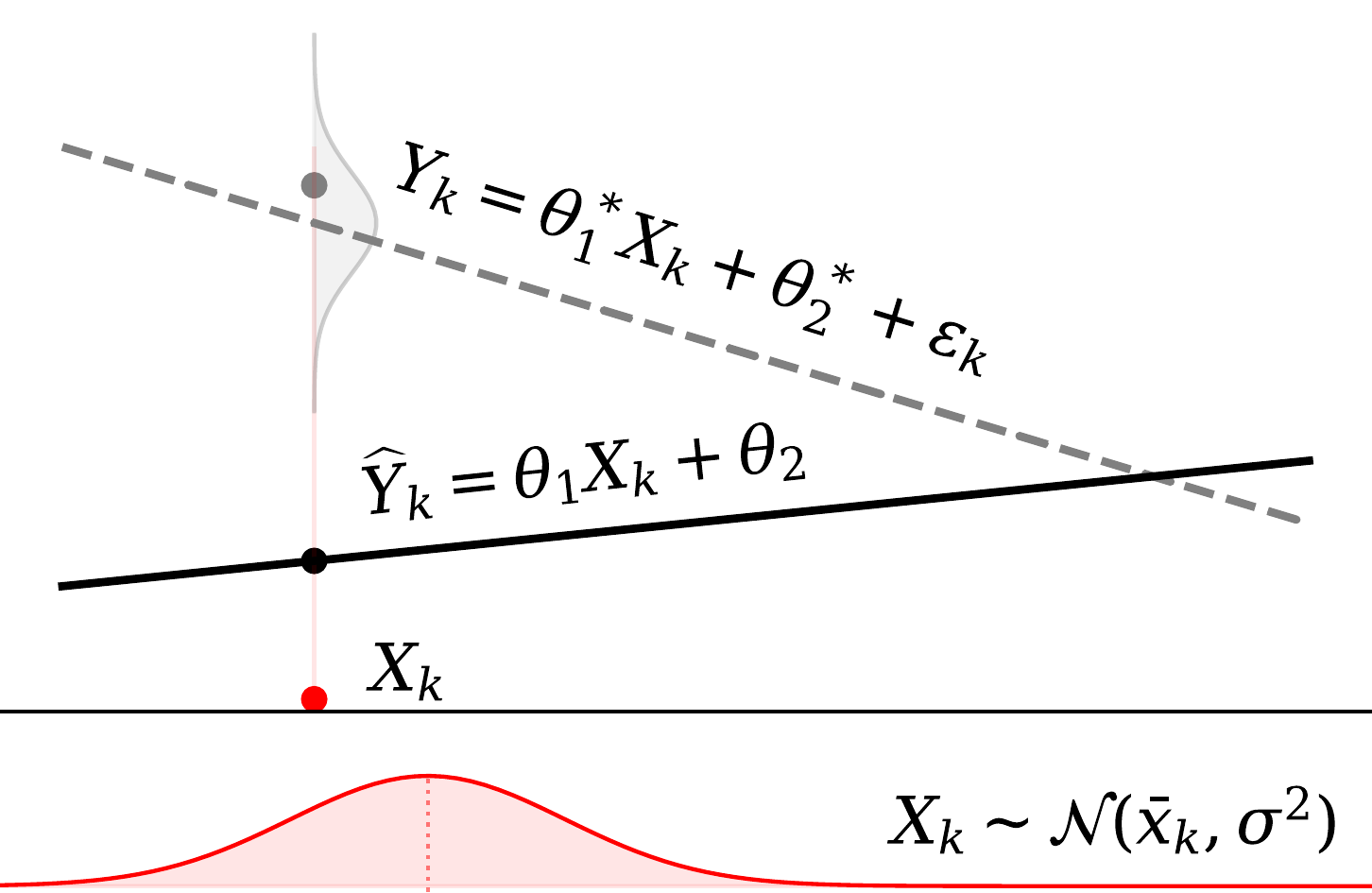}
    \end{subfigure}
    \begin{subfigure}[b]{0.29\textwidth}
        \centering
        \includegraphics[width=\textwidth]{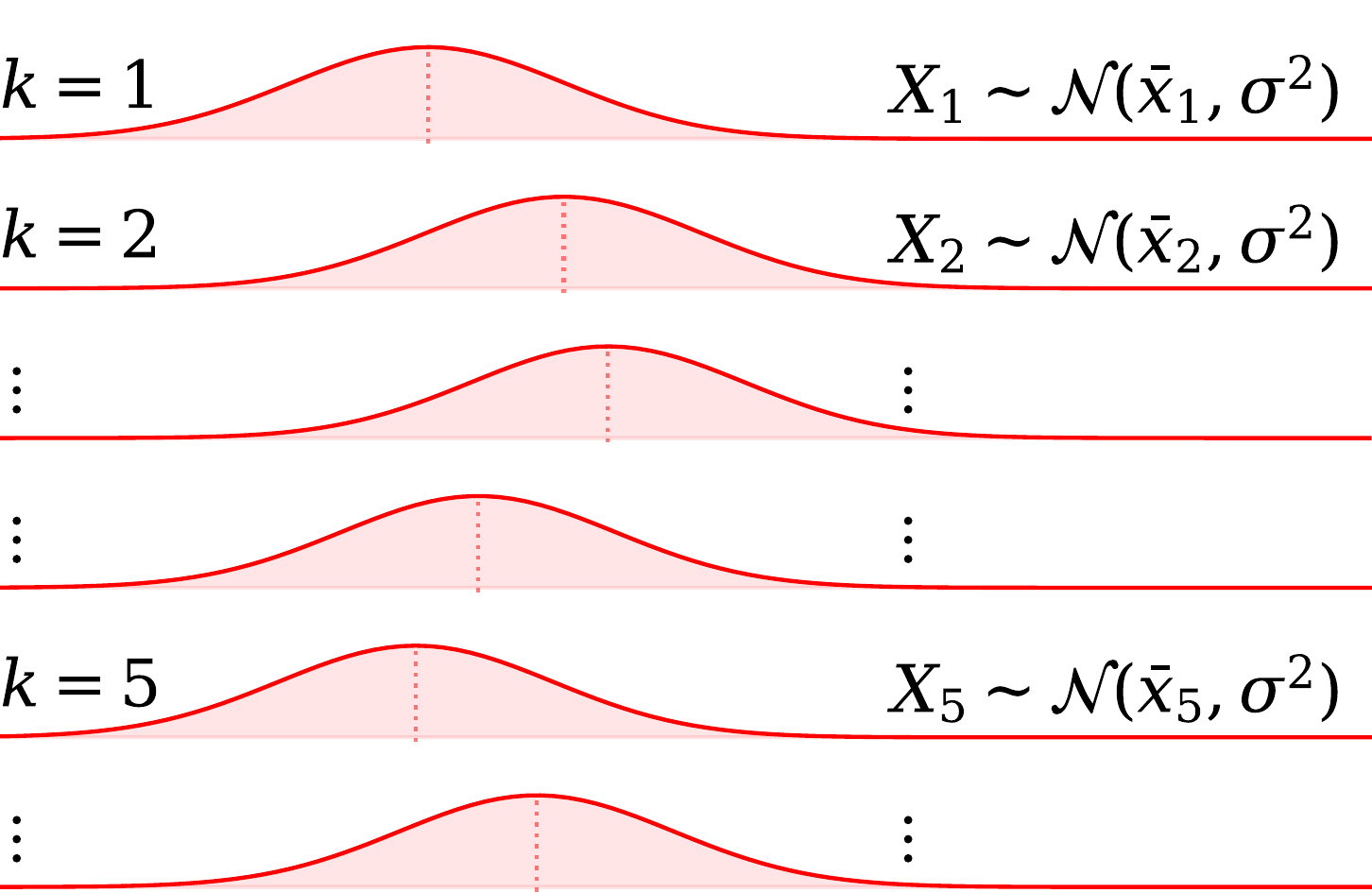}
    \end{subfigure}
    \caption{An example of the problem setting. 1D regression (left), Gaussian covariates $\{X_k\}$ with shifting mean $\bar{x}_k$ over time (right).} \label{fig:problem-setting-1d}
\end{wrapfigure}
We investigate the effect of non-iid sampling on a model optimized using SGDm. We assume labelled training data sampled from discrete-indexed, real-valued stochastic processes $\{X_k\}_{k \in \naturals}$ and $\{Y_k\}_{k \in \naturals}$ such that $\{X_k\}_{k \in \naturals}$ has a unique stationary distribution $\Pi$. $Y_k$ is a function of $X_k$ with zero-mean observation noise, $Y_k = f(X_k) + \epsilon_k$. 
The learning algorithm does not have access to iid samples from $\Pi$.  Instead, the learning algorithm may only update parameters $\theta_k$ at time $k$ using samples $z_k = (x_k, y_k)$, where $X_k$'s marginal over time converges to $\Pi$. 
The goal is to train a model with parameters $\theta$ to minimize an objective function $L(z; \theta)$ with respect to the stationary distribution $\Pi$: $\theta^* = \argmin_{\theta \in \reals^d} \Ex_\Pi[L(z; \theta)] = \int_\reals L(z; \theta) d\Pi(x)$. An example of this problem setting is illustrated in Figure \ref{fig:problem-setting-1d}.

This setting is identical to that explored in prior work on Markovian sampling \citep{sun2018markov, xiong2020nonasymptotic, nagaraj2020sgdmarkov}, but we do not require that our stochastic processes adhere to the Markov property. Note, also, that the underlying functional relationship between the inputs, $X_k$, and the targets, $Y_k$, does not change over time. That is, the non-iid sampling is a result of covariate shift rather than a changing relationship between the inputs and targets.

We consider Polyak's Heavy Ball method \citep{polyak1964somemethods} with the formulation from \citet{sutskever2013importance}, namely SGD with momentum (SGDm)
\begin{align}\label{sgdm}
&v_{k+1} =  \mu v_{k} - \eta \nabla_{\theta} L(z_k; \theta_k)
&&\theta_{k+1} = \theta_k + v_{k+1}
\end{align}
where $\eta \ge 0$ is the learning rate and $\mu \in (0,1)$ the momentum coefficient.
Our goal is to understand the behavior of SGDm when learning on this non-iid sequence $z_k$.

\section{Covariate Shift as a Driving Force} \label{sec:theory}

In this section we characterize SGDm as a discretization of a particular parametric oscillator in continuous time. A parametric oscillator resembles a harmonic oscillator ODE, but has time-varying system coefficients which are capable of driving the system.
It is well understood that parametric oscillators can suffer from global solution instability due to coefficients oscillating at particular frequencies \citep{halanay1966diffeqCH1}, a condition known as {\em parametric resonance}. We show parametric resonance conditions sufficient to induce exponential divergence in SGDm. We provide proof sketches and defer complete proofs to Appendix \ref{sec:proofs}.

\textbf{Notation: }
Capital letters are matrices and random variables.  
$\{X_k\}$ is a stochastic process, shorthand for $\{X_k\}_{k \in \naturals}$.
$k$ and $t$ index discrete and continuous time, respectively.  
When a discrete sequence and a function approximate each other, they share the same symbol and are differentiated by subscript $k$ and function argument $t$, e.g. the sequence $\{\theta_k\}$ and the function $\theta(t)$.
$P_k$ is the joint distribution of $X_k$ and $Y_k$.
For loss function $L(z; \theta)$ on training pair $z = (x, y)$ and weights $\theta$, we denote the time-varying expected gradient  $g_k(\theta) \defeq \Ex_{P_k} [ \nabla_{\theta} L(z; \theta) ]$

\subsection{Covariate Shift Induces a Linear Time-Varying Expected Gradient}

We begin by showing that linear least squares regression with covariate shift and a fixed target induces a linear time-varying expected loss gradient.  That is, the gradient function $g_k(\theta) = B_k (\theta - \theta^*)$ for some matrix $B_k$ that varies with time, due to covariate shift.

\begin{assumption} \label{ass:non-iid-covariates}
    The covariate generating process $\{X_k\}_{k \in \naturals}$ 
    is not identically distributed: $\Corr(X_{k_1}, X_{k_1}) \neq \Corr(X_{k_2}, X_{k_2})$ for some $k_1, k_2 \in \naturals$.
\end{assumption}

\begin{assumption} \label{ass:fixed-linear-target}
    The targets $Y_k$ are a fixed linear function of $X_k$ with iid zero-mean observation noise, $\epsilon_k$ such that $\Ex[\epsilon_k] = 0$.  That is, $Y_k = \langle \theta^*, X_k \rangle + \epsilon_k$ where $\theta^* \in \reals^d$ is fixed for all $k$.
\end{assumption}

\begin{proposition} \label{thm:cov-shift-gives-ltv-grads}
     Under Assumptions \ref{ass:non-iid-covariates}, \ref{ass:fixed-linear-target}, a linear model with weights $\theta_k$ making predictions $\widehat{Y}_k = \langle \theta_k, X_k \rangle$ with a mean squared error (MSE) objective will induce time-varying linear expected loss gradients $g_k(\theta_k) = B_k (\theta_k - \theta^*)$ for all $k \in \naturals$, where $B_k \in \reals^{d \times d}$ and $B_{k_1} \neq B_{k_2}$ for some $k_1, k_2$.
\end{proposition}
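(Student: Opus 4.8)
The plan is a direct computation: derive the expected MSE gradient of the linear model, substitute the fixed linear target to collapse it into the affine form $B_k(\theta - \theta^*)$, and then invoke Assumption~\ref{ass:non-iid-covariates} to show $B_k$ genuinely changes with $k$.

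First I would write the per-example squared-error loss for a pair $z=(x,y)$ and weights $\theta$ as $L(z;\theta) = \tfrac{1}{2}\big(\langle\theta,x\rangle - y\big)^2$ (the $\tfrac12$ is cosmetic; any positive MSE normalization only rescales $B_k$ by a constant), so that $\nabla_\theta L(z;\theta) = \big(\langle\theta,x\rangle - y\big)x$. Using Assumption~\ref{ass:fixed-linear-target} to substitute $y = \langle\theta^*,x\rangle + \epsilon$ yields
\begin{equation*}
\nabla_\theta L(z;\theta) = \big(\langle\theta - \theta^*, x\rangle - \epsilon\big)x = x x^\top(\theta - \theta^*) - \epsilon\, x ,
\end{equation*}
where the last step uses that $\langle\theta-\theta^*,x\rangle$ is a scalar and hence commutes past $x$.

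Next I would take the expectation under $P_k$, the joint law of $(X_k,Y_k)$, assuming the covariates have finite second moments. By linearity,
\begin{equation*}
g_k(\theta) = \Ex_{P_k}\!\big[X_k X_k^\top\big](\theta - \theta^*) - \Ex_{P_k}\!\big[\epsilon_k X_k\big],
\end{equation*}
and since by Assumption~\ref{ass:fixed-linear-target} the noise $\epsilon_k$ is independent of $X_k$ with $\Ex[\epsilon_k]=0$, the cross term factors as $\Ex[\epsilon_k]\,\Ex[X_k]=0$. Setting $B_k \defeq \Ex_{P_k}[X_k X_k^\top]\in\reals^{d\times d}$ then gives $g_k(\theta) = B_k(\theta - \theta^*)$ for every $\theta\in\reals^d$ and every $k\in\naturals$; specializing to $\theta = \theta_k$ is the claimed identity.

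Finally, for the non-constancy of $B_k$ I would appeal to Assumption~\ref{ass:non-iid-covariates}: since the (uncentered) second-moment/correlation structure of $X_k$ differs for some $k_1,k_2$, we get $\Ex_{P_{k_1}}[X_{k_1}X_{k_1}^\top]\neq\Ex_{P_{k_2}}[X_{k_2}X_{k_2}^\top]$, i.e.\ $B_{k_1}\neq B_{k_2}$. I do not anticipate a substantive obstacle; the only bookkeeping subtlety is whether Assumption~\ref{ass:non-iid-covariates} is read in terms of the uncentered moment $\Ex[X_kX_k^\top]$ (which is exactly $B_k$) or the centered covariance $\Cov(X_k)$ --- in the latter reading one additionally needs that the distributional shift is not precisely cancelled by a compensating change of mean, which holds for the mean-shifting construction of Figure~\ref{fig:problem-setting-1d} and could in any case be folded directly into the assumption. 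Everything else reduces to the one-line gradient identity above.
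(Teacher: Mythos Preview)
Your proposal is correct and follows essentially the same route as the paper: compute the MSE gradient, substitute the linear target, use independence and zero-mean noise to kill the cross term, and identify $B_k$ with the uncentered second-moment matrix $\Ex_{P_k}[X_kX_k^\top]$ (the paper carries the factor $2$ from differentiating without the $\tfrac12$, but this is exactly the cosmetic scaling you flagged). Your caveat about centered versus uncentered moments is well taken but unnecessary here, since the paper's Assumption~\ref{ass:non-iid-covariates} is stated in terms of $\Corr(X_k,X_k)=\Ex[X_kX_k^\top]$, which is precisely your $B_k$.
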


{\em Proof Sketch:} We fix a time step $k$ and take the expectation of the MSE loss over observation noise, and over the distribution of $X_k$.  A time-varying expected loss surface remains, and the linear setting implies the loss gradient is a linear function for all steps $k$, with $B_k$ depending on $X_k$:
\begin{equation} \label{eqn:grad-matrix-defn}
    B_k = 2 \Corr(X_k, X_k) = 2 \text{Cov}(X_k, X_k) + 2\Ex_{P_k}[X_k]\Ex_{P_k}[X_k]^T
\end{equation}
\subsection{ODE Correspondence}

Next, we show that the iterates $\{\theta_k\}$ generated by SGDm are a first order numerical integration of a particular ODE.  The procedure is similar to \citep{muehlebach2021jordan}, but due to the time-varying loss gradient, we must pay specific attention to the conditions on the ODE necessary to have integration consistency, . Let $B(t)$ be a matrix-valued function, Lipschitz continuous in $t$, such that $\{B_k\}$ are samples spaced $\sqrt{\eta}$ apart, i.e. $B_k = B(\sqrt{\eta}k)$. The existence and uniqueness of this $B(t)$ are ensured if we assume the $\{X_k\}$ are sampled from an underlying continuous-time $X(t)$.

\begin{proposition} \label{thm:sgdm-integrates-tv-ode}
    The SGDm iterates $\{\theta_k\}$ numerically integrate the ODE system in \eqref{eqn:gradient-flow} with integration step $\sqrt{\eta}$ and first order consistency.
    \begin{equation}
        \ddot{\theta}(t) + \frac{1 - \mu}{\sqrt{\eta}} \dot{\theta}(t) + B(t)(\theta(t) - \theta^*) = 0 \label{eqn:gradient-flow}
    \end{equation}
\end{proposition}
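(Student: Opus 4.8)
The plan is to rewrite the SGDm recursion, after a time rescaling $t_k = \sqrt{\eta}\,k$ and an auxiliary-variable change, so that it coincides exactly with a first-order (symplectic, i.e.\ semi-implicit) Euler integrator for the first-order system equivalent to \eqref{eqn:gradient-flow}, and then to read off first-order consistency from the standard local-truncation-error estimate for that integrator. First I would invoke Proposition~\ref{thm:cov-shift-gives-ltv-grads} to replace the stochastic gradient by its expectation $g_k(\theta_k) = B_k(\theta_k - \theta^*)$ (the MSE stochastic gradient is $\widehat B_k(\theta_k-\theta^*) - 2X_k\epsilon_k$ with $\widehat B_k = 2X_kX_k^T$, $\Ex[\widehat B_k] = B_k$, and $\Ex[\epsilon_k]=0$, so the zero-mean term drops from the deterministic flow), which leaves the mean recursion $v_{k+1} = \mu v_k - \eta B_k(\theta_k - \theta^*)$, $\theta_{k+1} = \theta_k + v_{k+1}$ with $B_k = B(\sqrt{\eta}\,k)$ as in \eqref{eqn:grad-matrix-defn}. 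Writing \eqref{eqn:gradient-flow} as $\dot\theta = p$, $\dot p = -\tfrac{1-\mu}{\sqrt\eta}p - B(t)(\theta-\theta^*)$, choosing step size $h = \sqrt\eta$, grid $t_k = hk$, and discrete momentum $p_k \defeq v_k/\sqrt\eta$, a short algebraic check shows that the update ``$p_{k+1} = p_k + h\bigl(-\tfrac{1-\mu}{\sqrt\eta}p_k - B(t_k)(\theta_k-\theta^*)\bigr)$, then $\theta_{k+1} = \theta_k + h\,p_{k+1}$'' is, term for term, exactly the mean SGDm recursion above; the crucial cancellation is $h\cdot\tfrac{1-\mu}{\sqrt\eta} = 1-\mu$, which is what turns the momentum update into $p_{k+1} = \mu p_k - \sqrt\eta\,B_k(\theta_k-\theta^*)$.

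This identifies $\{(\theta_k,p_k)\}$ with the iterates of symplectic Euler applied to the system with step $h = \sqrt\eta$. First-order consistency is then the textbook local-truncation-error bound: substituting the exact solution $(\theta(t),p(t))$ into one step of the scheme and Taylor-expanding about $t_k$ gives $p(t_{k+1}) - \bigl(p(t_k) + h\,\dot p(t_k)\bigr) = O(h^2)$ and, since $p(t_{k+1}) = p(t_k) + O(h)$, also $\theta(t_{k+1}) - \bigl(\theta(t_k) + h\,p(t_{k+1})\bigr) = \theta(t_{k+1}) - \theta(t_k) - h\,p(t_k) + O(h^2) = O(h^2)$, so the per-step error is $O(h^2)$, i.e.\ order one. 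This uses that the solution is $C^2$, which holds because the vector field is affine in $(\theta,p)$ and depends on $t$ only through $B(t)$, assumed Lipschitz in $t$; that same Lipschitz assumption is what makes $B_k = B(\sqrt\eta\,k)$ a well-posed sampling of a genuine continuous-time coefficient, and it is precisely the ``condition on the ODE needed for integration consistency'' alluded to just before the statement. Matching initial data --- $\theta(0) = \theta_0$ and $\sqrt\eta\,p(0) = v_0$ --- closes the correspondence.

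The main obstacle is conceptual rather than computational: getting the time rescaling and the form of the integrator right. One has to recognize that the correct integration step is $\sqrt\eta$, not $\eta$, so that the damping coefficient carries a $1/\sqrt\eta$ factor and the recursion is a consistent discretization at all; and that the scheme is symplectic Euler (update momentum, then position with the \emph{updated} momentum), not explicit Euler, since explicit Euler would give $\theta_{k+1} = \theta_k + h\,p_k$, which is not SGDm. Naively reading the second-order scalar recursion $\theta_{k+1} - (1+\mu)\theta_k + \mu\theta_{k-1} + \eta B_k(\theta_k-\theta^*) = 0$ as a central-difference scheme would instead produce a spurious $\tfrac{1+\mu}{2}\ddot\theta$ leading coefficient and obscure the stiff damping term, so that viewpoint must be avoided. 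Finally, compared with the autonomous analyses this construction is adapted from (e.g.\ \citep{muehlebach2021jordan}), one must carry the explicit $t$-dependence of $B(t)$ through the truncation-error estimate and verify it does not spoil the order --- which is exactly where the Lipschitz hypothesis on $B(t)$ is used.
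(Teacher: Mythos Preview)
Your proposal is correct and arrives at the same integrator as the paper, but the route to first-order consistency is genuinely different. The paper frames the scheme as a \emph{sequentially split operator}: it writes $\dot\xi = f^{[1]}(\xi,t) + f^{[2]}(\xi,t)$, applies implicit Euler to the $f^{[1]}$ block (which moves $\theta$ using $\dot\theta$) and explicit Euler to the $f^{[2]}$ block (which moves $\dot\theta$), and then cites external results \citep{farago2011ontheorder,csomos2008error,iserles2008afirstcourse} to conclude that the composed flow has splitting error of order 1 for linear time-varying systems and that the overall order is the minimum of the splitting and sub-integrator orders. You instead recognize the composed map directly as semi-implicit (``symplectic'') Euler and establish consistency by a hands-on Taylor expansion of the local truncation error, using Lipschitz continuity of $B(t)$ to control the remainder. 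The two integrators are literally identical once one unwinds the paper's composition, so the algebraic equivalence with SGDm is the same in both; what differs is that your consistency argument is self-contained and elementary, while the paper's is modular and leans on operator-splitting theory, which arguably generalizes more readily to other decompositions but at the cost of black-boxing the step where time-variation must be handled. A minor terminological note: since the system is damped (non-Hamiltonian), ``semi-implicit Euler'' is the more accurate name than ``symplectic''; the scheme is not actually symplectic here, though this does not affect the order-one consistency claim.
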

{\em Proof Sketch:} First, the ODE in \eqref{eqn:gradient-flow} is converted to first order linear form, and split into a sum of two separate systems.  The sum's terms have implicit and explicit Euler integrations; using operator splitting, these can be composed into a consistent numerical integrator for linear time-varying systems.  Then, it is shown that the composed integrator is precisely equivalent to SGDm when the integration time step is $\sqrt{\eta}$. $\square$

The first order consistency guarantee in Proposition \ref{thm:sgdm-integrates-tv-ode} means that when the step-size $\eta$ is small and initial conditions agree between continuous and discrete time, i.e. $\theta_0 = \theta(0)$, the continuous and discrete time trajectories approximate each other as $\theta_k \approx \theta(k\sqrt{\eta})$, where the difference between them depends on the integration time step $h = \sqrt{\eta}$ \citep{hairer2006geometric}. Therefore, assuming $\theta_0 = \theta(0)$, the difference accrued in one step of $k$ (local error) is $\theta_1 = \theta(\sqrt{\eta}) + O(\eta)$ and over arbitrarily many steps (global error) is $\theta_k = \theta(\sqrt{\eta} k) + O(\eta k)$.

\subsection{Parametric Resonance for ODE Convergence and Divergence}

The conditions sufficient for convergence and divergence in \eqref{eqn:gradient-flow} may be shown using established dynamical systems theory.  The conditions sufficient for divergence are precisely the conditions for parametric resonance, when $B(t)$ is periodic. Note that periodicity in the first or second moment of $\{X_k\}$ is sufficient to induce periodicity in $B(t)$. As per elementary results in linear ODE theory, the system in \eqref{eqn:gradient-flow} can be transformed into a linear time-varying first order form
\begin{equation}\label{eqn:ltv-first-order}
    \dot{\xi}(t) = A(t) \xi(t) \quad \quad \quad \quad\quad \quad
    A(t) = \cvec{0_{d \times d} & I_{d \times d} \\ B(t) & \frac{1 - \mu}{\sqrt{\eta}} I_{d \times d}}
\end{equation}
where solution trajectories $\theta(t)$ of \eqref{eqn:gradient-flow} are embedded in solution trajectories $\xi(t)$ of \eqref{eqn:ltv-first-order}.  Equation \ref{eqn:ltv-first-order} admits a fundamental solution matrix\footnote{A fundamental solution matrix for system \eqref{eqn:ltv-first-order} is any matrix-valued function of time $\psi(t)$ whose columns are linearly independent solutions to \eqref{eqn:ltv-first-order}.  We choose $\psi(t)$ such that $\psi(0) = I_{2d \times 2d}.$} $\psi(t)$ such that the spectral radius $\rho$ of $\psi(T)$ characterizes ODE instability, which implies divergence for SGDm, in Theorem \ref{thm:floquet-restated}.

\begin{theorem} \label{thm:floquet-restated}
    When $B(t)$ is periodic such that $B(t) = B(t + T)$ for some $T > 0$, the spectral radius $\rho$ of $\psi(T)$ characterizes the stability of solution trajectories of \eqref{eqn:gradient-flow} as follows:
    \begin{itemize}[leftmargin=*,noitemsep,topsep=0pt]
        \item $\rho > 1 \implies$ trivial solution $\theta(t) = \theta^*$ is unstable.  All other solutions diverge as $\theta(t) \to \infty$ exponentially with rate $\rho$.
        \item $\rho < 1 \implies$ trivial solution is asymptotically stable, all other solutions converge as $\theta(t) \to \theta^*$ exponentially with rate $\rho$.
    \end{itemize}
\end{theorem}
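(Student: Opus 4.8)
The plan is to read \eqref{eqn:ltv-first-order} as a linear system with $T$-periodic coefficient matrix $A(t)$ --- periodicity of $B(t)$ transfers to $A(t)$ because its remaining blocks are constant --- and to invoke Floquet theory, so that the long-run behaviour of every solution is governed by the \emph{monodromy matrix} $M \defeq \psi(T)$, and in particular by its spectral radius $\rho$.

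First I would establish the semigroup identity $\psi(t+T) = \psi(t)\,M$ for all $t$. Regarded as matrix-valued functions of $t$, both sides solve $\dot\Phi(t)=A(t)\Phi(t)$ --- the left-hand side because $A(t+T)=A(t)$, the right-hand side because post-multiplication by the constant matrix $M$ commutes with the flow --- and both take the value $M$ at $t=0$; uniqueness of solutions of linear ODEs (available since $B$, hence $A$, is Lipschitz) then forces the two to coincide. Iterating gives $\psi(nT)=M^{n}$, and writing an arbitrary time as $t=nT+s$ with $s\in[0,T)$ gives $\psi(t)=\psi(s)\,M^{n}$. Since $\psi$ is continuous it is bounded on $[0,T]$, so up to a bounded factor the growth of $\|\psi(t)\|$ as $t\to\infty$ is the growth of $\|M^{n}\|$, which by the Jordan form of $M$ is of order $n^{p}\rho^{n}$ for some integer $0\le p\le 2d-1$. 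I would also record that $M$ is invertible --- Abel's formula gives $\det\psi(T)=\exp\!\big(\int_0^{T}\Tr A(s)\,ds\big)\neq 0$ --- so $\rho>0$ and it is legitimate to speak of an eigenvalue of maximal modulus.

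The two cases then follow. If $\rho<1$, then $\|M^{n}\|\le \tilde C\,n^{2d-1}\rho^{n}\to 0$, so every solution $\xi(t)=\psi(t)\xi(0)$ of \eqref{eqn:ltv-first-order} tends to $0$, exponentially, with per-period contraction factor $\rho$. If $\rho>1$, choosing $\xi(0)$ with a nonzero component along an eigenvector of $M$ for an eigenvalue of modulus $\rho$ produces a solution with $\|\xi(nT)\|$ of order $\rho^{n}\to\infty$, so arbitrarily small perturbations of the origin escape and the trivial solution is unstable; indeed every solution outside the (lower-dimensional) Floquet subspace spanned by the modes of modulus $<1$ diverges exponentially with rate $\rho$ per period. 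Finally I would translate back to \eqref{eqn:gradient-flow}: that system yields \eqref{eqn:ltv-first-order} under $\zeta=\theta-\theta^{*}$, $\xi=(\zeta,\dot\zeta)^{\top}$, so $\xi\equiv 0\Leftrightarrow\theta\equiv\theta^{*}$, $\xi(t)\to 0\Leftrightarrow\theta(t)\to\theta^{*}$, and $\xi$ unbounded $\Leftrightarrow$ $\theta$ unbounded; the dichotomy $\rho>1$ versus $\rho<1$ is exactly parametric resonance versus non-resonance of the periodic coefficient $B(t)$.

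I expect the main obstacle to be the careful justification of the semigroup identity $\psi(t+T)=\psi(t)M$, which is the one genuinely non-formal step and rests on uniqueness together with periodicity of $A$; a secondary nuisance is the bookkeeping for non-semisimple eigenvalues of $M$, needed so that the strict inequalities $\rho>1$ or $\rho<1$ deliver true exponential growth or decay rather than mere unboundedness or boundedness, with the borderline case $\rho=1$ (correctly) left outside the statement.
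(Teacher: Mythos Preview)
Your proposal is correct and follows the same Floquet-theoretic route as the paper: reduce \eqref{eqn:gradient-flow} to the first-order periodic system \eqref{eqn:ltv-first-order}, identify the monodromy matrix $M=\psi(T)$, and read off stability from its spectral radius, then translate back via $\xi=(\theta-\theta^*,\dot\theta)$.

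The only real difference is in how much Floquet theory is proved versus cited. The paper treats the stability dichotomy as a black box, invoking Theorems~1.9--1.10 of Halanay (\emph{Differential Equations}, 1966) for ``$\rho>1\Rightarrow$ unstable, $\rho<1\Rightarrow$ asymptotically stable'' and then only does the phase-space translation. You instead reconstruct the core of Floquet theory from scratch: the semigroup identity $\psi(t+T)=\psi(t)M$ via uniqueness, the iteration $\psi(nT)=M^n$, invertibility of $M$ via Abel's formula, and the Jordan-form growth estimate $\|M^n\|\sim n^p\rho^n$. Your version is self-contained and makes explicit why the strict inequalities $\rho\gtrless 1$ are needed (the polynomial factor from non-semisimple blocks), at the cost of being longer. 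You are also more careful than the paper's own statement in the $\rho>1$ case: you correctly note that only solutions with a component outside the stable Floquet subspace diverge, whereas the theorem as stated asserts that \emph{all} nontrivial solutions diverge, which is only literally true when every Floquet multiplier has modulus $\ge 1$.
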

{\em Proof Sketch:} 
ODE \ref{eqn:gradient-flow}'s stability depends on the relationship between the coefficient $\frac{1-\mu}{\sqrt{\eta}}$ and the expected gradient signal because they determine the spectral radius of $\psi(T)$. 
The spectral radius $\rho$ is the convergence/divergence rate per period $T$ towards/away from the stationary point $\theta^*$. $\square$

The spectral radius conditions characterize when ODE solution trajectories $\theta(t)$ will converge or diverge, and Proposition \ref{thm:sgdm-integrates-tv-ode} tells us that these solution trajectories are approximations of discrete time SGDm trajectories $\{\theta_k\}$ under identical initialization. But does convergence or divergence of $\theta(t)$ imply the same for SGDm?  Experiment \ref{exp:1-sinusoid-heatmap} empirically suggests that the approximation is sufficient, since the boundary at $\rho = 1$ in Figure \ref{fig:heatmap-sinusoid-batched} agrees with both continuous and discrete time.
However, given Theorem \ref{thm:floquet-restated} and Proposition \ref{thm:sgdm-integrates-tv-ode}, we have a theoretical guarantee of SGDm's divergence, but not its convergence.  This is because the divergence rate of $\theta(t)$ is exponential, and Proposition \ref{thm:sgdm-integrates-tv-ode} provides a bound on long tail behaviour as $\theta_k = \theta(\sqrt{\eta}k) + O(\eta k)$.
Since the approximation error is linear in time $k$, and the divergence rate is exponential, the divergence rate dominates, and we are guaranteed that a diverging $\theta(t)$ corresponds to a diverging $\{\theta_k\}$.  However, the same argument does not imply that a convergent $\theta(t)$ corresponds to a convergent $\{\theta_k\}$, because the linear error bound technically permits the discrete trajectory to escape $\theta^*$ at a linear rate.  Empirically, we see agreement for both convergent and divergent cases, but we defer theoretical proof to future work.

\textbf{Summary:}  There is a chain of dependencies starting from the non-iid sampling in $\{X_k\}$, and ending at the monodromy’s spectral radius.

$\{X_k\} \to X(t) \to B(t) \to A(t) \to \psi(t) \to \psi(T) \to \rho$

In order, we have the discrete stochastic process $\{X_k\}$ from which training inputs are sampled, and its underlying continuous stochastic process $X(t)$.  If $\{X_k\}$ and $X(t)$ were iid, then the matrix $B$ would be constant, but the non-iid nature means $B(t)$ is a function of time.  The matrix $A(t)$ is the matrix describing SGDm’s continuous time dynamics as a linear time-varying ODE $\dot{\xi} = A(t) \xi$, and the matrix $B(t)$ is a simply a submatrix of $A(t)$.  Since we have a linear ODE, the space of solution trajectories is spanned by the columns of the ODE’s fundamental solution matrix $\psi(t)$.  Moreover, since we have a periodic $A(t)$ with period $T$, the stability of all solutions can be determined simply from the largest eigenvalue of $\psi(T)$, a.k.a. its spectral radius $\rho$.

The intuition behind $\rho$’s importance comes from the following fact: after each elapsed period $T$, the phase space (including weight space) is subject to the linear transformation $\psi(T)$.  So as time increases, say $n$ periods, the linear transformation is applied iteratively as $\psi(T)^n$, so it is clear that any eigenvalue larger than unity means all solutions eventually diverge exponentially.  This is precisely the parametric resonance condition.  For a detailed example of all these quantities characterizing the divergence of a simple learning system, see Appendix \ref{sec:numerical-determine-stab}.

\textbf{Technical Novelty:} There are two key novel technical contributions to obtain the results above. (1) The ODE correspondence for non-iid data in combination with momentum methods is new. The non-iid data, which induces time variation in ODE dynamics, is subtle to address, because elementary ODE correspondence (i.e. consistency guarantees) require time invariant ODE dynamics. Our proof specifically addresses this difficulty using operator splitting theory from recent numerical integration literature.
(2) Once we have this ODE, it is straightforward to recognize that it is a parametric oscillator. The utility of this connection, however, is that it opens up entirely new avenues for analysis, since there is deeply established literature on the dynamics of these systems. The primary second technical novelty, therefore, is to leverage this connection, and use Floquet theory to give precise divergence conditions for a given $\eta$ and $\mu$, under periodic covariate shift.

\section{Validating Theory \& Ablating Towards Conditions in the Wild} \label{sec:experiments}

We now empirically evaluate the effect of resonance in learning problems.  First, Experiment \ref{exp:1-sinusoid-heatmap} validates the theoretical predictions by investigating the dynamics of a learning problem which closely matches the theoretical assumptions in Section \ref{sec:theory}.  Specifically, we assess whether or not the spectral radius $\rho$ predicts optimizer convergence or divergence.  Then, 
the remaining Experiments \ref{exp:2-ar2-heatmap} - \ref{exp:6-neural-net} test for this phenomena in settings beyond our theory.
Each experiment makes a single step away from a theoretical assumption, first relaxing the periodicity assumption, then using stochastic rather than expected gradients and then moving to nonlinear models (neural networks) and optimizers (ADAM). 

Across all experiments, input samples at each training step $k$ are drawn from Gaussian distributions with diagonal covariance matrices, i.e. $X_k \sim \normal(\mean_k, cI_{d \times d})$, and we induce covariate shift by constructing a time-varying mean sequence $\{\mean_k\}$.  Each mean sequence is designed such that (a) its frequency content can be swept with a single parameter $f$ or $T$ and (b) iid sampling is induced by setting $f=0$ or $T=0$, so that each experiment has an iid baseline for comparison.  The specification of $\mean_k$ is provided in Table \ref{tab:frequency-sweep-specification} for each experiment. This empirical design allows us to systematically test response to different frequencies. See Appendix \ref{sec:experiment-details} for specification of all experiment details, including visual depictions of all synthetic $\{X_k\}$ and their frequency content.

\begin{table}[ht]
    \caption{Covariate shift details for each experiment.}
    \label{tab:frequency-sweep-specification}
    \centering
    \begin{tabular}{lll}
        \toprule
        Experiment & Sweep Param. & Mean Sequence $\{\mean_k\}_{k \in \naturals}$\\
        \midrule
        \ref{exp:1-sinusoid-heatmap} Validating Theory &
        $f \in [0, 0.05]$ &
        $\mean_k = 0.5 \sin(2 \pi f k)$ , $T = 1/f$\\
        \midrule
        \ref{exp:2-ar2-heatmap} Ablating Periodicity &
        $f \in [0, 0.05]$ &
        $\begin{aligned}
            &\mean_k = \phi_1 \mean_{k-1} + \phi_2 \mean_{k-2} + \xi_k \\
            &\phi_1 = 4 \phi_2(\phi_2 - 1)^{-1} \cos(2 \pi f) \\
            &\phi_2, \xi_k, \mean_1, \mean_2 \; \text{see Table \ref{tab:linear-regression-ar2}}
        \end{aligned}$ \\
        \midrule
        \ref{exp:3-to-stochastic-updates} Ablating Expected Gradient &
        $T \in [0, 120]$ &
        $\begin{aligned}
            &\mean_k = \begin{cases}
                &\frac{\xi}{2 ||\xi||} \; \text{if} \; \lfloor \frac{2 k}{T} \rfloor \equiv 0 \mod{1} \\
                &\frac{-\xi}{2 ||\xi||} \; \text{if} \; \lfloor \frac{2 k}{T} \rfloor \equiv 1 \mod{1} \\
            \end{cases} \\
            &\xi \sim \normal(0, I_{d \times d}) \; \text{iid}
        \end{aligned}$ \\
        \midrule
        \ref{exp:4-signal-amplitude} Ablating Periodicity Further &
        $T \in [0, 50]$ &
        $\begin{aligned}
            &\mean_k = \xi_i \; \text{where} \; i = \left\lfloor \frac{k}{T} \right\rfloor \\
            &\xi_i \sim \normal(0, v I_{d \times d}) \; \text{iid}
        \end{aligned}$ \\
        \midrule
        \ref{exp:5-adam} Ablating Optimizer Linearity &
        $T \in [0, 100]$ &
        $\mean_k$ same as above. \\
        \midrule
        \ref{exp:6-neural-net} Ablating Model Linearity &
        $T \in [0, 100]$ &
        $\mean_k$ same as above. \\
        \bottomrule
    \end{tabular}
\end{table}

\subsection{Validating Theory}
\label{exp:1-sinusoid-heatmap}

We start in a setting as close as possible to the theoretical predictions, with linear regression for a quadratic loss, and covariate shift such that the mean $\Ex[X_k]$ varies as a strict sinusoid. We perform regression in two weights (i.e. inputs $X_k$ and labels $Y_k$ are scalar-valued). We draw 20 samples from each $X_k$, so that the loss gradient for each time step $k$ is close to its expected value. 

Since we have only two weights, the learning system's underlying ODE can be easily specified, such that the fundamental solution matrix can be numerically computed.  As per Theorem \ref{thm:floquet-restated}, we can use the spectral radius of the fundamental solution matrix evaluated at time $T$ to make theoretical predictions of where the system should converge or diverge.  As depicted in Figure \ref{fig:heatmap-sinusoid-batched}, the theory agrees very well with empirical results.  This suggests that parametric resonance
is indeed the dominant mechanism behind SGDm divergence under covariate shift.  In the appendix, refer to example \ref{sec:numerical-determine-stab} for the procedure used to compute the theoretical predictions (i.e. spectral radii $\rho$), and Figure \ref{fig:full-theory-heatmap} for the full surface from which the contour lines in Figure \ref{fig:heatmap-sinusoid-batched} are rendered.

\subsection{Ablating Periodicity}
\label{exp:2-ar2-heatmap}

We repeat Experiment \ref{exp:1-sinusoid-heatmap}, but with the mean of $X_k$ varying stochastically instead of deterministically. The result in Theorem \ref{thm:floquet-restated} assumes the expected gradient varies periodically over time given fixed weights $\theta$. 
But even with aperiodic and/or stochastic time variation, our LTV system \eqref{eqn:ltv-first-order} might be similarly susceptible to instability; the theory does not have periodicity as a necessary condition of divergence. 
To test this, we replace our periodic covariate shift mean with a mean that moves according to an AR(2) process. 
The AR(2) process is tuned to have a frequency peak exactly at the frequency of the sinusoidal covariate shift of Experiment \ref{exp:1-sinusoid-heatmap}, to change only the nature of the shift---aperiodicity---rather than the frequency. 

We can see from the heatmap in Figure \ref{fig:ar2-heatmap-batched} that under this setting we observe very similar resonance behaviour in the learning system, which aligns well with the identical predicted stability regions.
Note that both theoretical and empirical results in Figures \ref{fig:heatmap-sinusoid-batched} and \ref{fig:ar2-heatmap-batched} suggest that sufficiently low momentum values $\mu$ mitigate the resonance phenomenon, which agrees with the role it plays in the ODE system: decreasing $\mu$ increases damping.  We also observe that resonance regions shrink as step size $\eta$ is decreased; see the Appendix for plots demonstrating this trend.  Analytically characterizing the bounds of stable $\mu, \eta$ in terms of system properties is an interesting future direction.

\begin{figure}
    \centering
    \begin{subfigure}[b]{0.45\textwidth}
        \centering
        \includegraphics[width=\textwidth]{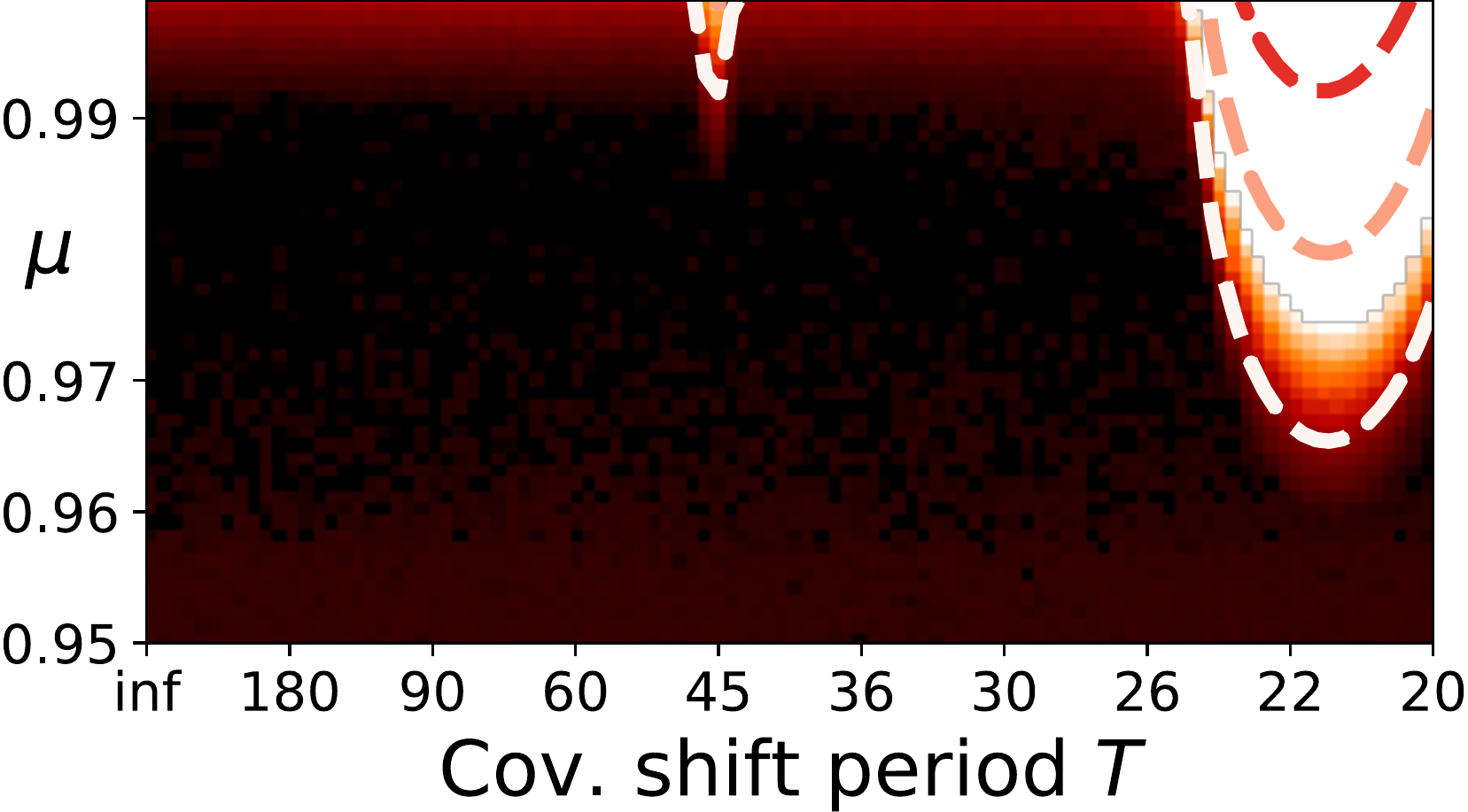}
        \caption{Resonance regions for sinusoidal $\mean_k$}
        \label{fig:heatmap-sinusoid-batched}
    \end{subfigure}
    \hfill
    \begin{subfigure}[b]{0.45\textwidth}
        \centering
        \includegraphics[width=\textwidth]{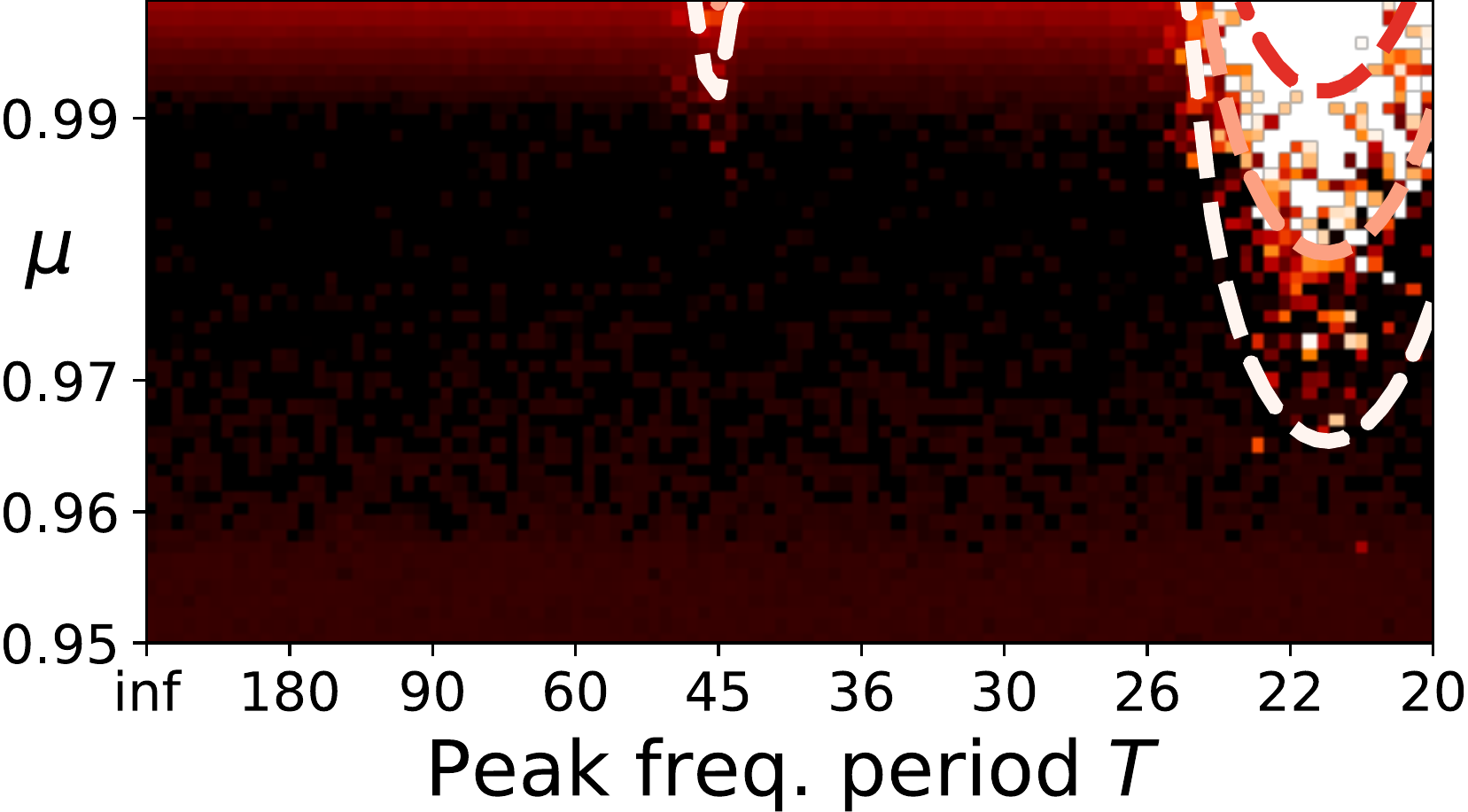}
        \caption{AR(2) Stochastic Mean}
        \label{fig:ar2-heatmap-batched}
    \end{subfigure}
    \vspace{-0.2cm}
    \caption{Empirical heatmap of momentum $\mu$ versus period $T$ for SGDm for linear regression, overlaid by contours of theoretical prediction. Each pixel is the distance $||\theta_k - \theta^*||$ averaged over the final 500 steps $k$ and 10 runs. 
    Dark pixels converge quickly and stably, bright pixels diverge exponentially. The contours show divergence predictions from Theorem \ref{thm:floquet-restated}: the white contour has $\rho = 1$, with $\rho$ increasing with redness.}
\end{figure}

\subsection{Ablating Expected Gradient}
\label{exp:3-to-stochastic-updates}

Here we perform linear regression with periodic mean covariate shift, and ablate the number of samples drawn from each $X_k$ to show the effect of increasing noise in the gradient signal.  Linear regression is performed mapping $\reals^5 \to \reals$, with the weights learned via SGDm.  The mean sequence $\mean_k$ oscillates with strict periodicity between $\pm \mean$, where $\mean$ is a unit norm 5-vector randomly chosen for each run.  i.e. the mean signal is a square wave in $\reals^5$ with period $T$. 
\begin{wrapfigure}[19]{r}{0.45\textwidth}
    \setlength{\belowcaptionskip}{-.5\baselineskip}
    \includegraphics[width=0.43\textwidth]{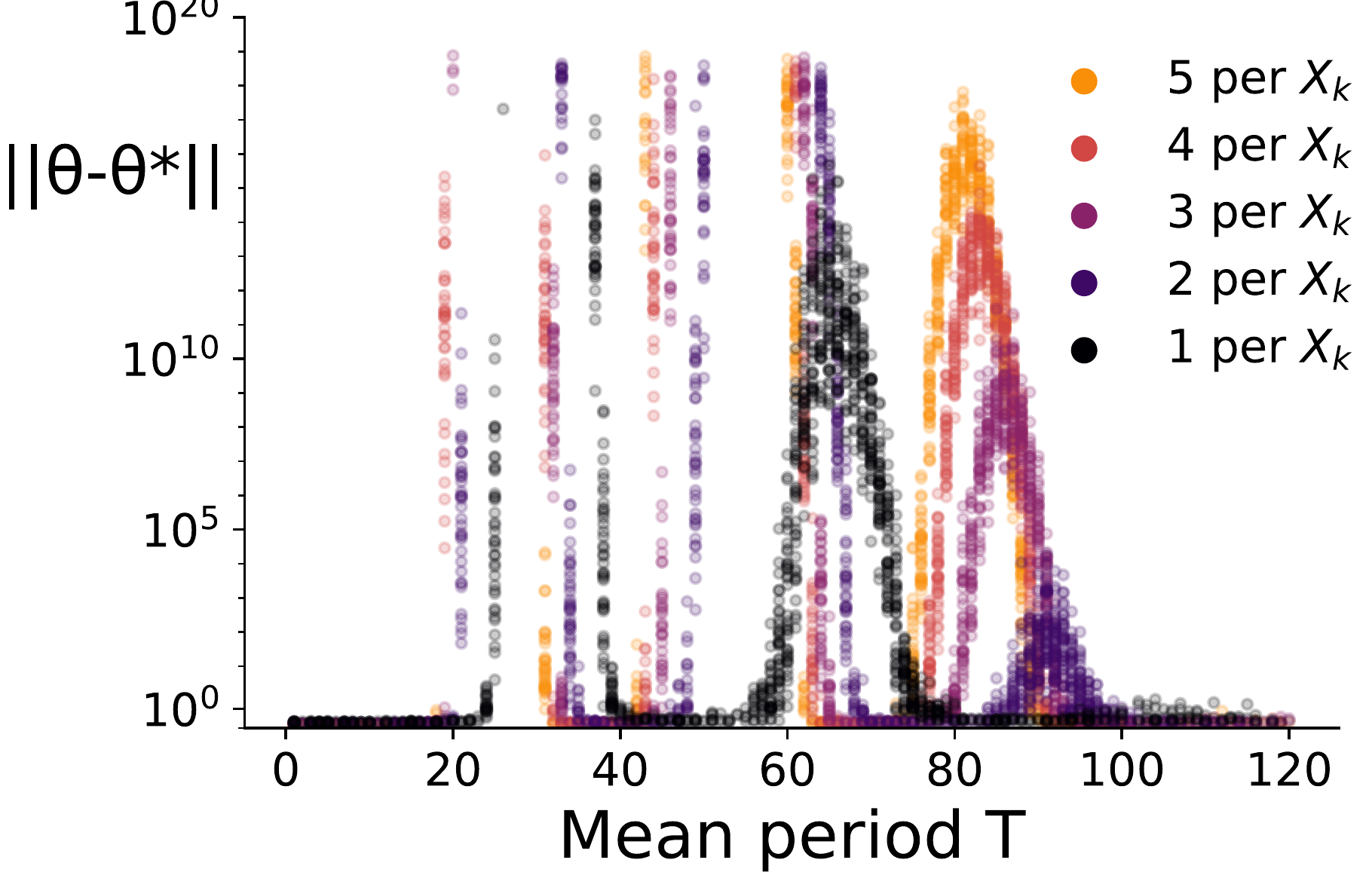}
    \vspace{-0.1cm}
    \caption{Regression w/ periodic $\mean_k$.  Each dot is avg. dist. $||\theta_k - \theta^*||$ over final 500 steps.  Each color has three peaks: the left peak exceeds y scale for all curves, the center peak is small enough to appear only for the black curve, and the right peak appears for all curves (vanishingly small for the black curve.)  Resonance is dampened and right-shifted by decreasing samples per $X_k$.}
    \label{fig:rainbow-periodic-ablation-to-stochastic}
\end{wrapfigure}
Linear regression implies a quadratic loss, and periodic covariate shift implies a loss with periodic time variation in expectation, so we are very near to the setting in which Theorem \ref{thm:floquet-restated} is directly applicable.  However, as we ablate from 5 samples to 1 sample drawn from each $X_k$, we move further away from the expected gradient, towards the fully stochastic gradient setting.  As we can see in Figure \ref{fig:rainbow-periodic-ablation-to-stochastic} resonance is dampened by stochasticity in the gradient signal, but nonetheless is still present.  These findings are further supported in Appendix \ref{sec:fully-stochastic-heatmaps}, where Experiments \ref{exp:1-sinusoid-heatmap} and \ref{exp:2-ar2-heatmap} are also ablated from expected to stochastic gradient signals.

\subsection{Ablating Periodicity Further}
\label{exp:4-signal-amplitude}

We further depart from strictly periodic covariate shift by having piece-wise stationary means and nonsmooth changes: we randomly sample a new mean from a normal distribution $\normal(0, v^2)$ after every $T$ update steps. We further investigate the impact of this covariate shift under (a) increasing variance $v^2$ and (b) increasing dimensionality of the inputs.   

We observe divergence characteristic of the parametric resonance, with smaller $T$ having a highly divergent response.
In Figure \ref{fig:rainbow-switching-increasing-amplitude} we can see that a higher variance results in much more instability, though even for smaller variance the distance to $\theta^*$ is still on the order of $10^3$. This results makes sense, as the variance is connected to the driving signal amplitude, on which parametric resonance has a strong dependence. 
We also observe a strong dependence on the number of input space dimensions, in Figure  \ref{fig:rainbow-switching-increasing-dimensionality}. This aligns with the fact that the expected norm of samples drawn from a multivariate Gaussian increases with dimensionality, even with a fixed covariance scale (of 0.1). This result suggests that resonance phenomena may actually be exacerbated for higher-dimensional inputs, which is the setting we expect to see in practice. 

\begin{figure}[t]
\vspace{-0.5cm}
    \begin{minipage}[c]{0.45\textwidth}
    \begin{subfigure}{\textwidth}
        \includegraphics[width=\textwidth]{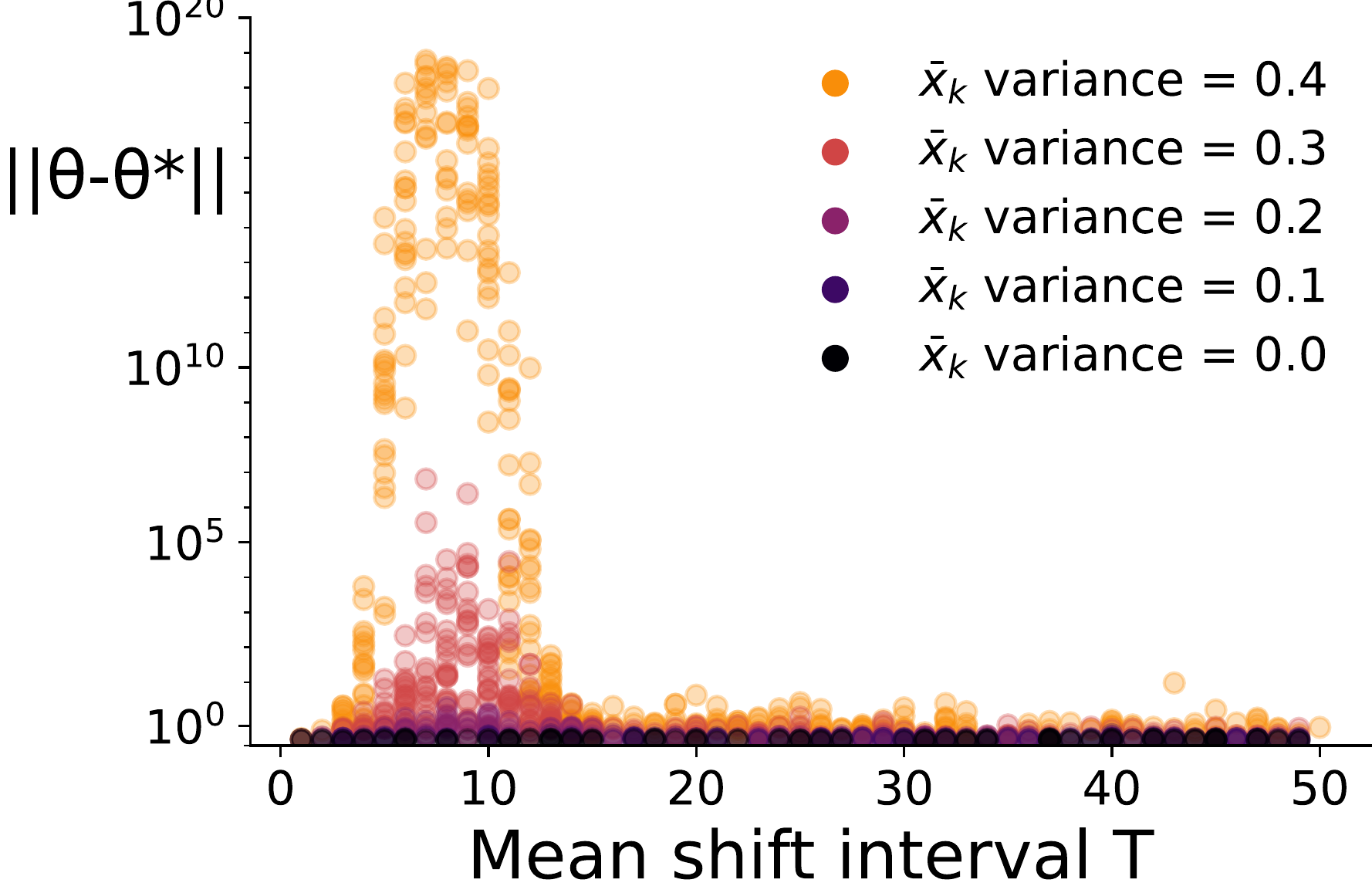}
        \caption{SGDm w/ stochastic $\mean_k$, variance sensitivity}
        \label{fig:rainbow-switching-increasing-amplitude}
    \end{subfigure}
    \end{minipage}
    \hfill
  \vspace{-0.3cm}
    \begin{minipage}[c]{0.45\textwidth}
    \begin{subfigure}{\textwidth}
        \includegraphics[width=\textwidth]{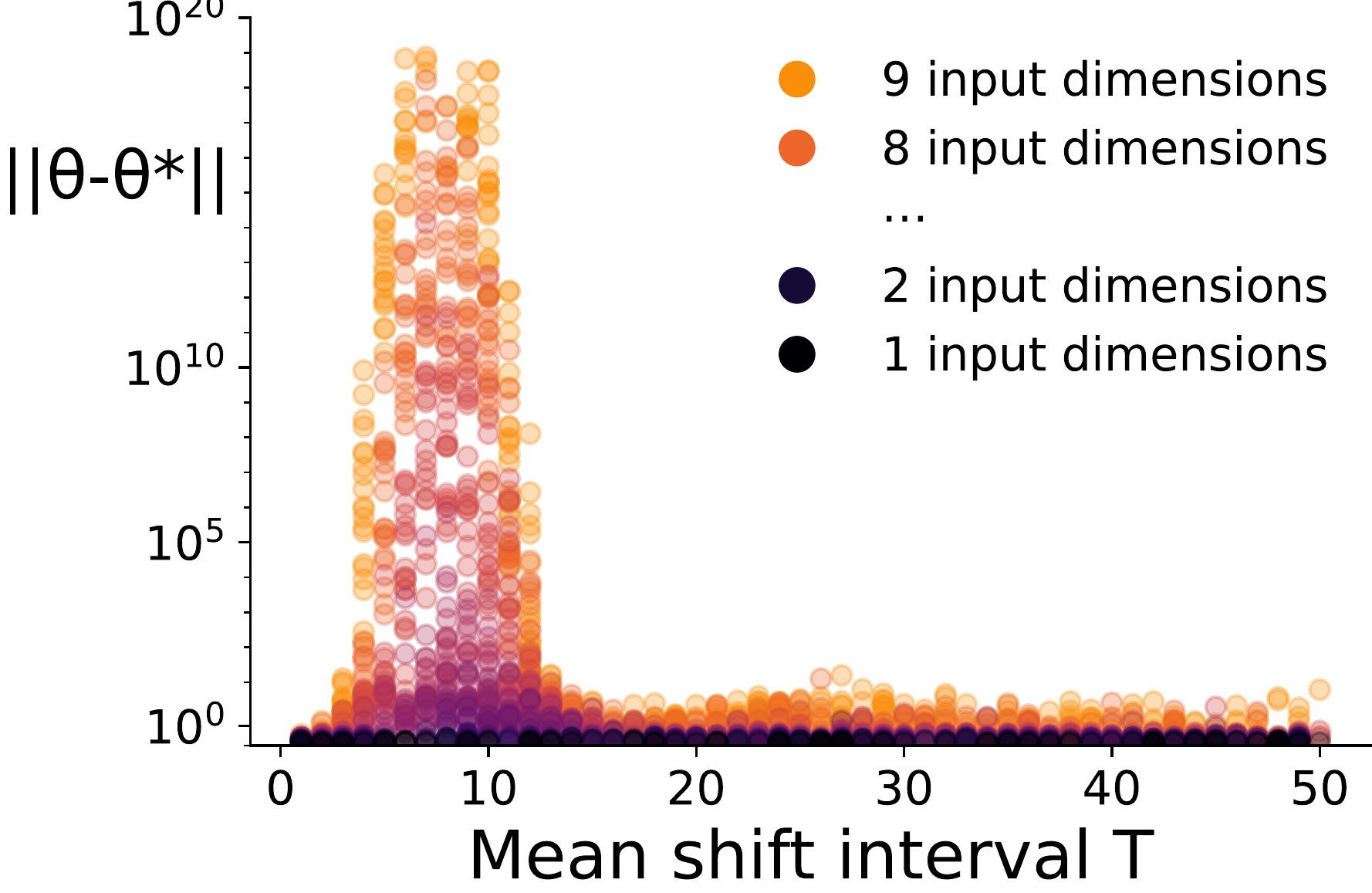}
        \caption{SGD w/ stochastic $\mean_k$, sensitivity to $d$}
        \label{fig:rainbow-switching-increasing-dimensionality}
    \end{subfigure}
    \end{minipage}
    \hfill 
    \begin{minipage}[c]{0.45\textwidth}
    \begin{subfigure}{\textwidth}
        \vspace{1cm}
        \includegraphics[width=\textwidth]{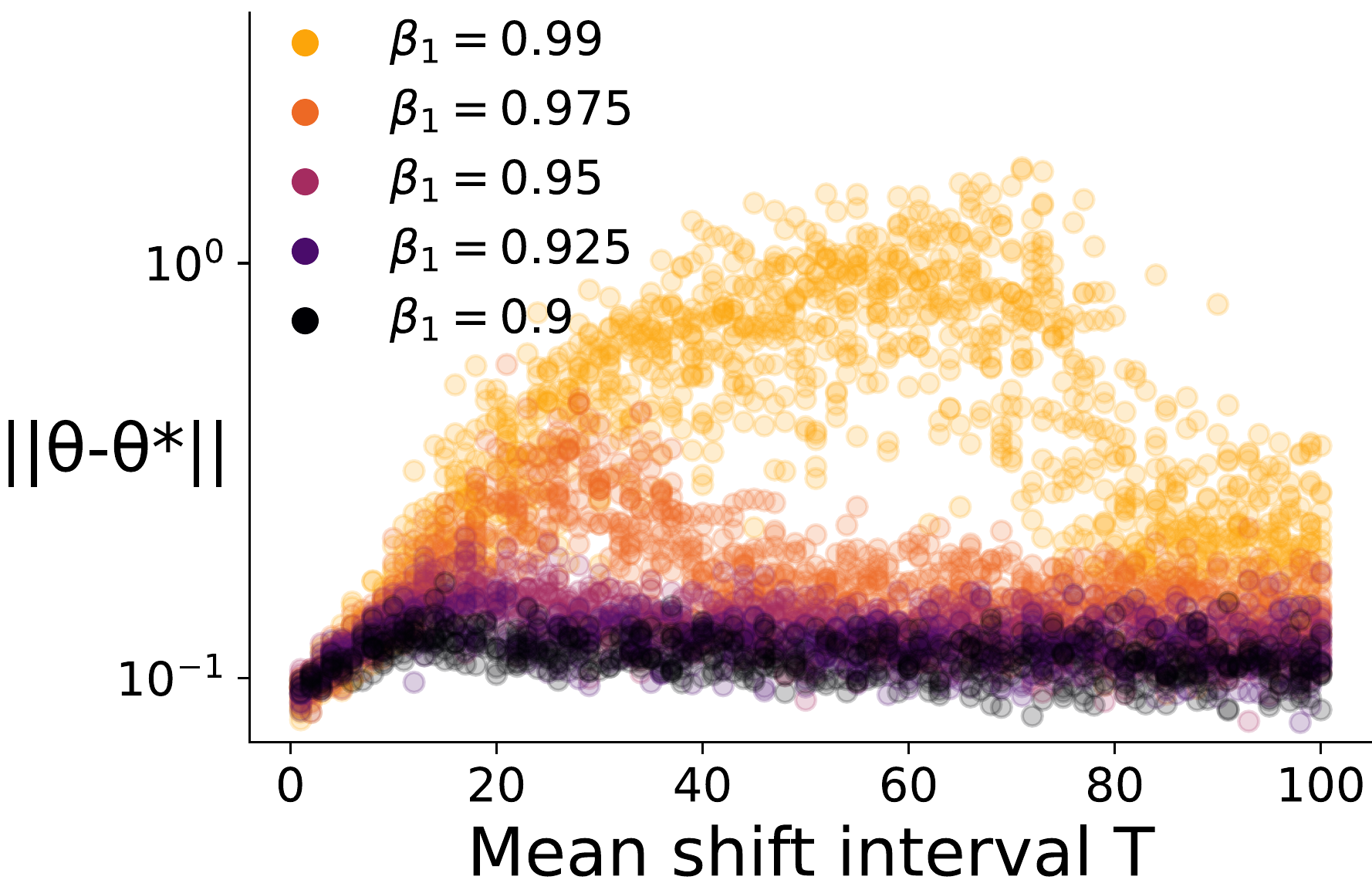}
        \caption{ADAM w/ stochastic $\mean_k$, $\beta_1$ sensitivity}
        \label{fig:rainbow-switching-adam}
    \end{subfigure}
    \end{minipage}
    \hfill
    \begin{minipage}[c]{0.45\textwidth}
        \vspace{.9cm}
        \caption{Regression w/ stochastic $\mean_k$.  Each marker is avg. distance $||\theta_k - \theta^*||$ over final 500 steps (\subref{fig:rainbow-switching-increasing-amplitude} and \subref{fig:rainbow-switching-increasing-dimensionality}) or 2000 steps (\subref{fig:rainbow-switching-adam}). For SGDm, resonance is very sensitive to the $\mean_k$ signal variance (i.e. amplitude) in (\subref{fig:rainbow-switching-increasing-amplitude}), and to the number of input dimensions in (\subref{fig:rainbow-switching-increasing-dimensionality}).  For ADAM in (\subref{fig:rainbow-switching-adam}), convergence is affected by specific frequency content (x-axis) but does not diverge, suggesting that the frequency response is significantly damped. Similar to SGDm, response is higher for larger momentum parameters $\beta_1$. 
        }
    \end{minipage}
    \vspace{-0.5cm}
\end{figure}

\subsection{Ablating Optimizer Linearity}
\label{exp:5-adam}

So far our experiments have ablated periodicity, input dimensionality, and gradient stochasticity.  As per the conditions in Section \ref{sec:theory}, each of these ablated systems still corresponds to a discretization of an LTV ODE.  We now ask whether resonance can be observed in systems which do not correspond to a discretization of our ODE. To investigate this hypothesis, we replace SGDm with ADAM as an update rule for our learning algorithm. We no longer have the ODE representation of this learning system, but can empirically investigate the behaviour of the system under covariate shift.

We use the same stochastic mean switching problem as the previous experiment, where we regress from 5 dimensions to 1 with input samples. Again, we measure the distance $||\theta_k - \theta^*||$, and we vary the ADAM parameter $\beta_1$.  Similar to the SGDm optimizer, we can see in Figure \ref{fig:rainbow-switching-adam} that there is a band of mean switching intervals $T$ for which convergence is worse.

Unlike SGDm, there is no divergence.  Proper parametric resonance induces exponential divergence, which is why the previous results were presented with Euclidean distances between weights on the order of $10^{19}$.  Note that in Figure \ref{fig:rainbow-switching-adam}, the frequency response in weight space distance is instead measured on the order of $10^0$.

\subsection{Ablating Model Linearity}
\label{exp:6-neural-net}

Our final step away from ODE \eqref{eqn:gradient-flow} abandons the quadratic loss surface. We fit a nonlinear target function using a fully connected ReLU network, reusing the previous covariate shift.
In previous experiments, we measured the distance $||\theta_k - \theta^*||$.  Now a unique target is not guaranteed, so frequency response is in terms of loss.
In order to amplify frequency response, 10 samples were drawn from each $X_k$ to reduce gradient signal noise.

\begin{wrapfigure}[16]{R}{0.46\textwidth}
\vspace{-0.5cm}
    \setlength{\belowcaptionskip}{-.5\baselineskip}
    \includegraphics[width=0.45\textwidth]{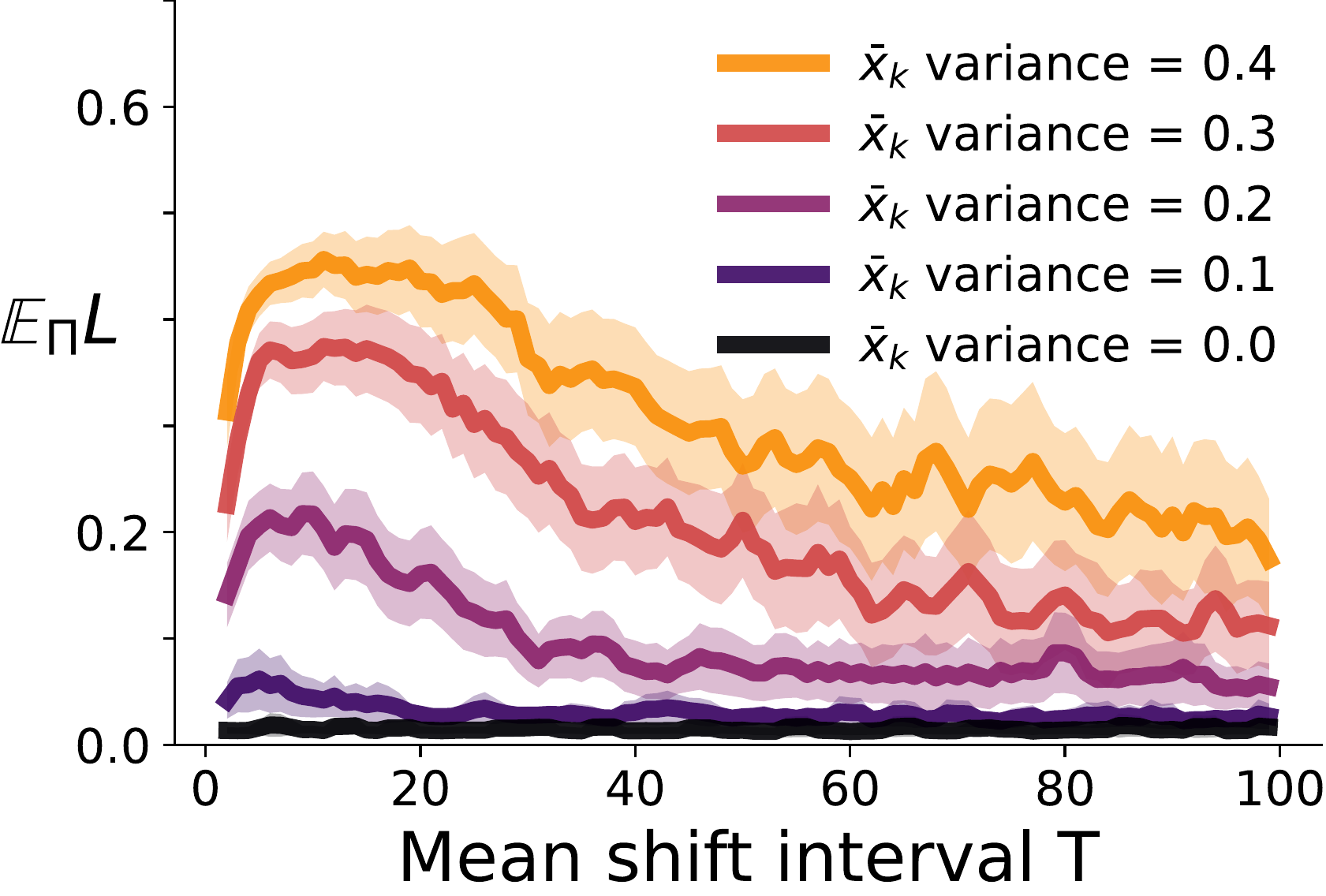}
    \caption{Training a neural network with SGDm shows a peak response in the loss around the band $T \in [5, 40]$. The y-axis is average test loss over the final 2000 training steps over 20 runs, with test set obtained via the stationary distribution of $\{X_k\}$.  Shaded regions are 95\% confidence intervals.}
    \label{fig:rainbowlines-stochastic-neural}
\end{wrapfigure}

Figure \ref{fig:rainbowlines-stochastic-neural} shows a band of mean switching intervals $T$ which damage convergence, which is characteristic of resonance. Though we do not see divergence, the impact on accuracy is problematic.  In this case, models with loss $>0.3$ fit extremely poorly, but those with $<0.05$ perform well.

\section{Discussion}
\label{sec:discussion-limitations}

This work is a first step towards understanding frequency response due to covariate shift. Here we discuss future directions, limitations, and practicality.

\textbf{Limitations and Next Steps: }
In our theoretical analysis, we take expectation over observation noise and over the distribution of inputs $X_k$, so the gradient coefficient matrix $B(t)$ in the ODE \eqref{eqn:gradient-flow} is deterministic.  This allows rigorous characterization of resonance for periodic $B(t)$.  Experiments \ref{exp:2-ar2-heatmap} and \ref{exp:4-signal-amplitude} demonstrate that resonance still drives divergence when $B(t)$ is nondeterministic, so rigorous exploration in future work might employ stochastic differential equations. In particular, Experiment \ref{exp:2-ar2-heatmap} aligns exactly with the theoretical predictions, suggesting that the resonance is in response to the frequency content of $X_t$, not its periodicity.

For the sake of consistency across experiments, our non-iid sampling is always induced by Gaussian $X_k$ with fixed diagonal covariance matrix. But Assumption \ref{ass:non-iid-covariates} is far more relaxed, suggesting resonance is possible with time-varying covariance and non-Gaussian distributions, which may be explored in future empirical work

When assessing a new phenomenon like covariate shift resonance, assessment on simple problems is a critical first step, and we have endeavored to be comprehensive in that assessment.  But we only scratch the surface of more complex optimizers and nonlinear models. Similarly, for the sake of experimental control and interpretability, we assess resonance on synthetic data instead of real data.  While it is obvious that many sources of real data have nontrivial frequency content (e.g. audio samples, machine sensors, etc.) it is not yet clear when realistic frequency content will resonate with the system trying to learn from it. Optimizers, models, and data all pose exciting future directions for this work: to extend into the frequency {\em response} of more complex optimizers and models, and into the frequency {\em content} of real data.

Finally, the theory in this work is for linear models. This is the correct setting to understand first, and is of independent interest. Our theory applies to any linear models that are augmented with a nonlinear basis (such as a Fourier basis, polynomials or radial basis functions) by examining the frequency content of the basis output signal.

\textbf{Practical Implications: }
One potential conclusion from our results is that though resonance could potentially arise in our learning systems, it may not be problematic.
First, the heatmaps in Figures \ref{fig:heatmap-sinusoid-batched} and \ref{fig:ar2-heatmap-batched} indicate that SGDm converges much more often than it diverges, and that we can practically avoid resonance simply by reducing the momentum parameter $\mu$.  As per Appendix \ref{sec:low-mu-no-resonance}, we find that sufficiently low $\mu$ does mitigate resonance. Second, we may simply be able to avoid covariate shift in the first place, by keeping buffers of data and shuffling samples. 

However, the reality is not so simple. Higher levels of momentum are typically more effective in practice. 
There is a tension between the level of momentum and avoiding resonance, that cannot easily be addressed by simply picking low momentum by default. Instead, smarter approaches which recognize potential resonance and then reduce momentum may be a much more effective way to guard against this issue; the development of such methods can leverage the insights in this work. In addition, relying on well-behaved or shuffled inputs may be impractical in certain settings. Algorithms in signal processing and the streaming literature often process data sequentially. Reinforcement learning approaches use sliding window buffers (in replay), where covariate shift may still persist across buffers. Further, it is even possible that adversarial attacks could be designed to exploit this weakness of SGDm, causing poor performance or even instability from attackers attempting to generate data that results in resonant behavior.  

Our work also suggests avenues towards the development of diagnostic tools. 
Indeed, our experimental approach to investigate this phenomena mimics those in the modal analysis literature \citep{avitabile2017modal}. We treat the input sampling process as an input signal which is noisy and/or stochastic, vary the signal's frequency content across a wide band, and measure the learning system's output as a response across input frequencies. This is analogous to a mechanical engineer triggering an oscillatory vibration within a machine's engine compartment, sweeping across frequencies, and measuring the resulting vibration amplitude in the machine's housing.

\section{Conclusion}

To our knowledge, this work is the first to investigate SGDm under covariate shift from the ODE perspective. This perspective reveals new insights, in particular the revelation that SGDm under covariate shift numerically integrates a parametric oscillator, rather than the simple undriven harmonic oscillator induced by iid sampling. We leverage dynamical systems theory to analyze the stability of a learning system using SGDm under covariate shift, and are able to provide conditions for exponential divergence due entirely to the frequency content induced by this non-iid sampling.
Despite its age and establishment, understanding SGDm is an ongoing process. We contribute to this process in a way that provides physical intuition, rather than a merely algebraic proof.  We hope that the connection we have drawn to parametric oscillation in dynamical systems theory will help to further propel understanding of non-iid sampling and SGDm.

\section{Acknowledgements}
We would like to thank Yurij Salmaniw and Ryan Thiessen for their discussions regarding Floquet theory and helping us makes sense of our differential equations. We would also like to thank Csaba Szepesvari for pointing us to excellent resources for numerical integration, algorithmic convergence, stochastic processes, and various future directions for this work. 

\section{Ethics Statement}
Our work is theoretical and attempts to provide greater understanding of an optimization algorithm. Due to the remoteness of this work from direct ethical concerns, it is difficult to reason about any ethical implications of the work. We hope that the insights into SGDm could help to make machine learning algorithm more robust, but beyond that we are not aware of any ethical concerns or implications of this work.

\section{Reproducibility Statement}
We have endeavored to be very clear and thorough in our explanation of the theoretical results, and all assumptions and proofs are explicitly stated and contained in either the main body of the paper or in the appendix. The code for all of the experiments is included in the supplementary material and the instructions for reproduction are in the README file. The data used is synthetic and code to generate the data is contained in the supplementary material. All of the experimental details (e.g. hyperparameter selection) are contained in tables in the paper and appendix and the supplementary material. All of the experiments should be easily run on modest commodity hardware in the space of a few days, at most.

\bibliography{refs.bib}
\bibliographystyle{iclr2022_conference}

\newpage
\appendix

\section{Appendix}

\subsection{A Procedure to Numerically Determine System Stability}
\label{sec:numerical-determine-stab}

Here we expand on the example given at the end of Section \ref{sec:theory}. As a reminder, we consider online linear least squares regression using SGDm with a fixed $\eta, \mu$ under a periodic covariate shift.  Data is sampled from the sequence of random variables $\{X_k, Y_k\}_{k \in \naturals}$ where $X_k$ is normally distributed and shifts periodically in expectation. $Y_k$ is an unchanging linear function of $X_k$ with zero-mean observation noise. For ease of exposition we consider the one dimensional regression problem with $X_k \in \reals$ and $Y_k \in \reals$, however the theory applies to any learning system representable as an LTV ODE satisfying the assumptions required for Theorem \ref{thm:floquet-restated}.

Under this covaiate shift, $\mean_k$ varies sinusoidally in $[-a, a]$ with period $T = \frac{1}{f}$. Note that although $X_k$ is periodic in expectation it admits a stationary distribution. The full specification of the data generating process is as follows
\begin{align*}
    X_k &\sim \mathcal{N}(\mean_k, 1) \\
    \mean_k &= a \cos(2 \pi f k) \\
    Y_k &= \theta_1^* X_k + \theta_2^* + \epsilon \\
    \epsilon &\sim \normal(0, 1) \\
\end{align*}
where $\theta^* = [\theta_1^*, \theta_2^*]^T$ are the target weights.

We consider a squared error loss function,
\begin{align*}
    L(z; \theta) &= (\hat{y} - y)^2 \\
        &= [\langle \theta, \mathbf{x} \rangle - (\langle \theta^*, \mathbf{x} \rangle + \epsilon)]^2 \\
        &= [(\theta_1 x + \theta_2) -  (\theta_1^* x + \theta_2^* + \epsilon)]^2
\end{align*}
Hence, taking gradients at time $k$ w.r.t. $\theta_k$ we have
\begin{align*}
    \nabla_{\theta_k} L(z_k; \theta_k)  &= \cvec{2x_k[(\theta_{k_1} x_k + \theta_{k_2}) -  (\theta_{k_1}^* x_k + \theta_{k_2}^* + \epsilon)] \\ 2[(\theta_{k_1} x_k + \theta_{k_2}) -  (\theta_{k_1}^* x_k + \theta_{k_2}^* + \epsilon)]}
\end{align*}
Taking expected gradients,
\begin{align*}
    \Ex [\nabla_{\theta_k} L(z_k; \theta_k)]  &= \cvec{\Ex[2[(\theta_{k_1} x_k^2 + \theta_{k_2}x_k) -  (\theta_{k_1}^* x_k^2 + \theta_{k_2}^*x_k + \epsilon x_k)]] \\ \Ex[2[(\theta_{k_1} x_k + \theta_{k_2}) -  (\theta_{k_1}^* x_k + \theta_{k_2}^* + \epsilon)]]}  \\
    &= \cvec{2 [(\theta_{k_1}(1 + \mean_k^2) + \theta_{k_2}\mean_k) - (\theta_{k_1}^* (1 + \mean_k^2) + \theta_{k_2}^* \mean_k + \Ex[\epsilon] \mean_k)] \\ 2[(\theta_{k_1} \mean_k + \theta_{k_2}) -  (\theta_{k_1}^* \mean_k + \theta_{k_2}^* + \Ex[\epsilon])]} \\
    &= 2 \cvec{(\theta_{k_1}(1 + \mean_k^2) + \theta_{k_2}\mean_k) - (\theta_{k_1}^* (1 + \mean_k^2) + \theta_{k_2}^* \mean_k) \\ (\theta_{k_1} \mean_k + \theta_{k_2}) -  (\theta_{k_1}^* \mean_k + \theta_{k_2}^*)} \\
    &= 2 \cvec{\theta_{k_1}(1 + \mean_k^2) - \theta_{k_1}^* (1 + \mean_k^2) + \theta_{k_2} \mean_k - \theta_{k_2}^* \mean_k \\ \theta_{k_1} \mean_k - \theta_{k_1}^* \mean_k + \theta_{k_2} - \theta_{k_2}^* }  \\
    &= 2 \cvec{
                (1 + \mean_k^2) & \mean_k \\
                \mean_k & 1}
        \cvec{\theta_{k_1} - \theta_{k_1}^* \\ \theta_{k_2} - \theta_{k_2}^* }
\end{align*}
In the notation of Proposition \ref{thm:cov-shift-gives-ltv-grads}, we can write this as
\begin{align*}
    &g_k(\theta_k) = B_k [\theta_k - \theta^*]^T & \text{with} \quad B_k = 2 \cvec{
                (1 + \mean_k^2) & \mean_k \\
                \mean_k & 1}
\end{align*}
If we let $\bar{x}(t) = a \cos(2 \pi f \eta^{-\frac{1}{2}} t)$, then we induce an $X(t)$ from which $\{X_k\}$ is sampled, as per Section \ref{sec:ode-correspondence-appdx}. This results in $B_k = B(\sqrt{\eta} k$ with the matrix-valued function $B(t)$
\begin{equation*}
    B(t) = 2 \cvec{
        (1 + \mean(t)^2) & \mean(t) \\
        \mean(t) & 1}
\end{equation*}
From Proposition \ref{thm:sgdm-integrates-tv-ode}, we have that our learning system numerically integrates an LTV of the form
\begin{align}
    &\ddot{\theta}(t) + \frac{1 - \mu}{\sqrt{\eta}} \dot{\theta}(t) + B(t)(\theta(t) - \theta^*) = 0 &\text{with} \quad (\theta(t) - \theta^*) = \cvec{\theta_1(t) - \theta_1^*(t) \\ \theta_2(t) - \theta{_2}{^*}(t) }
    \label{example_ode}
\end{align}
Note that $B(t)$ is piecewise continuous and periodic with period $T = \frac{1}{f}$.

This second order ODE can be represented as a system of first order ODEs with the transformation
\begin{align*}
    \xi_1 &= \theta_1 - \theta^*_1 \\
    \xi_2 &= \theta_2 - \theta^*_2 \\
    \xi_3 &= \dot{\theta_1} \\
    \xi_4 &= \dot{\theta_2}
\end{align*}
so that the equation \eqref{example_ode} can be written in standard first order linear form $\dot{\xi} = A(t)\xi$:
\begin{align}
    \cvec{\dot{\xi_1} \\ \dot{\xi_2} \\ \dot{\xi_3} \\ \dot{\xi_4}} =
    \cvec{
        0 & 0 & 1 & 0 \\
        0 & 0 & 0 & 1 \\
        -2(1+\mean^2(t)) & -2 \mean(t) & \frac{\mu - 1}{\sqrt{\eta}} & 0 \\
        -2 \mean(t) & -2 & 0 & \frac{\mu - 1}{\sqrt{\eta}}  \\
    }
    \cvec{\xi_1 \\ \xi_2 \\ \xi_3 \\ \xi_4}  \label{eqn:standard-linear-form}
\end{align}
The matrix $B(t)$ is piecewise continuous and periodic, which implies the matrix $A(t)$ also shares these properties. The ODE \eqref{example_ode} satisfies all the assumptions needed to make use of Theorem \ref{thm:floquet-restated} to determine the stability of solution trajectories. We can obtain a fundamental solution matrix, $\psi(t)$, of \eqref{eqn:standard-linear-form} satisfying initial conditions $\psi(0) = I_{4 \times 4}$ by numerical computation through the use of an ODE solver.

Once we have this matrix $\psi(t)$, we evaluate it at time $T = \frac{1}{f}$ to obtain the system's monodromy matrix. Now we only need to determine the spectral radius, $\rho$, of $\psi(T)$ to determine the stability of solution trajectories for the system. Details regarding why this is true are given in the proof of Theorem \ref{thm:floquet-restated}.

We repeat this process, sweeping over values of $\mu$, $\eta$, and $f$ to determine the stability of solution trajectories for systems to triples of fixed values for these hyperparameters. The results of this process are given in Figure \ref{fig:theory-heatmaps} below.

\begin{figure}
    \centering
    \begin{subfigure}{0.45\textwidth}
        \includegraphics[width=\textwidth]{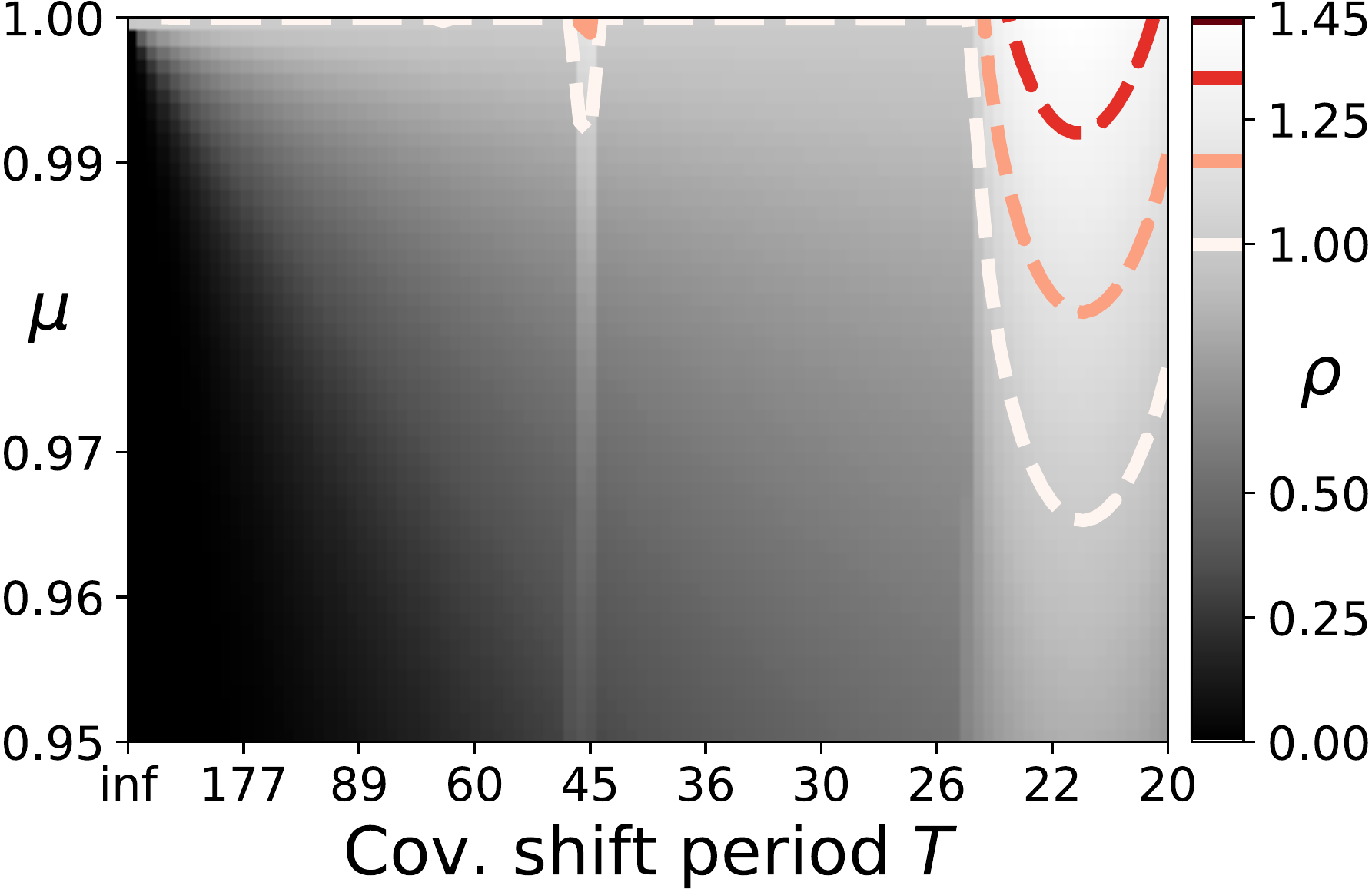}
        \caption{Spectral radius heatmap, $\eta=0.01$}
        \label{fig:full-theory-heatmap}
    \end{subfigure}
    \hfill
    \begin{subfigure}{0.45\textwidth}
        \includegraphics[width=\textwidth]{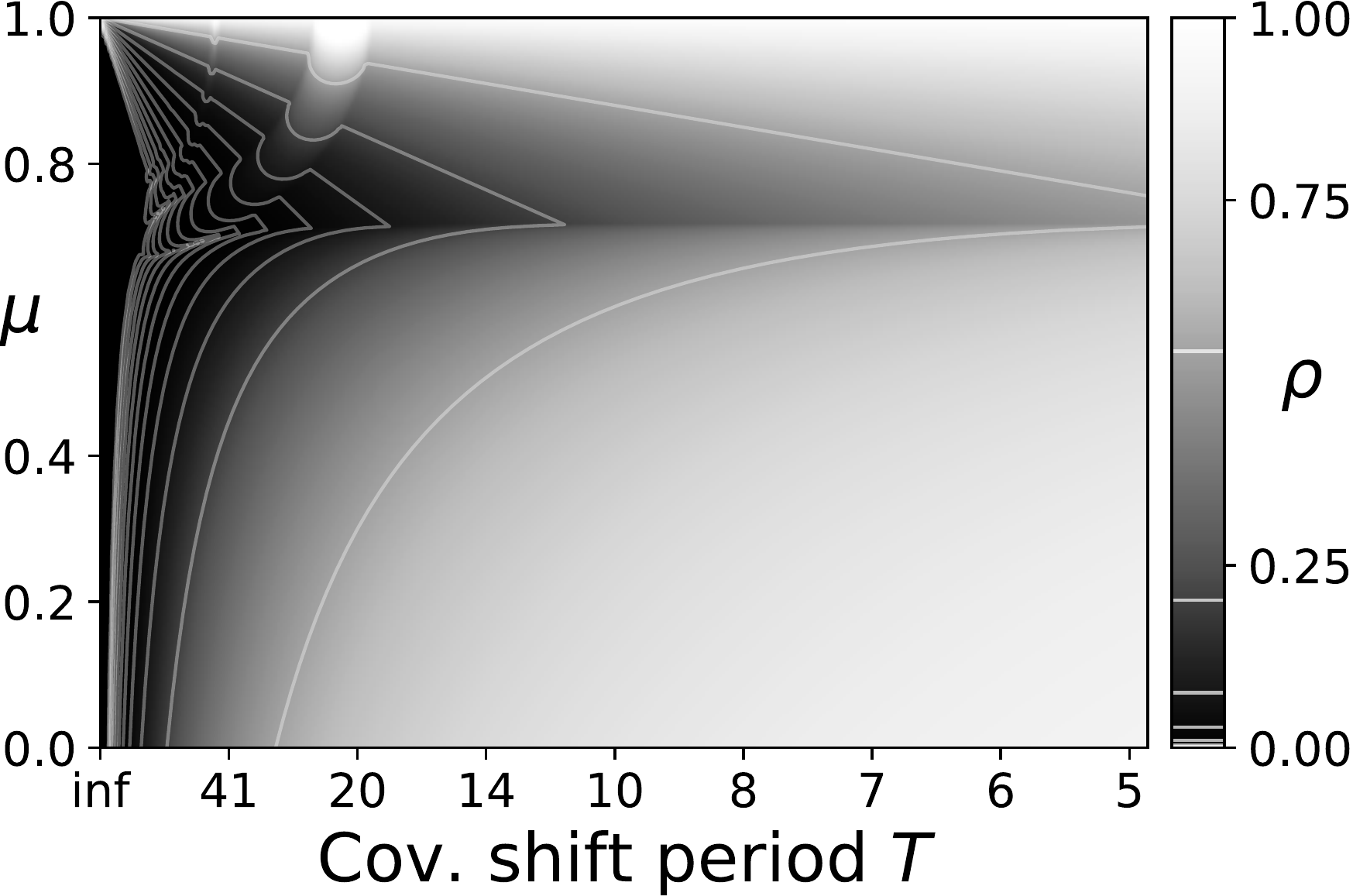}
        \caption{Spectral radius heatmap, wide range, $\eta=0.01$}
        \label{fig:full-theory-heatmap-wide-range}
    \end{subfigure}
    \begin{subfigure}{0.45\textwidth}
        \vspace{1cm}
        \includegraphics[width=\textwidth]{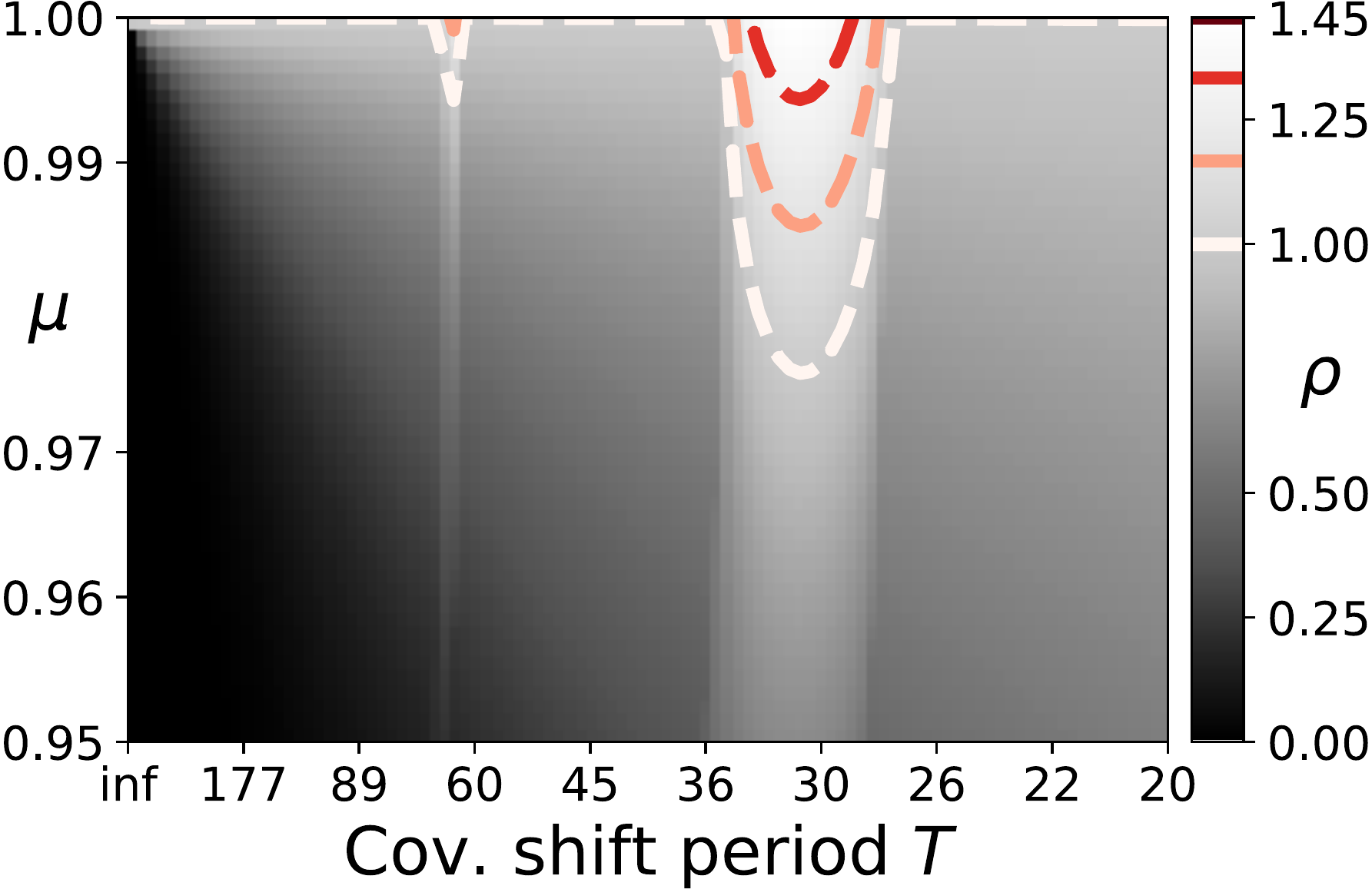}
        \caption{Spectral radius heatmap, $\eta=0.005$}
        \label{fig:full-theory-heatmap-0.005}
    \end{subfigure}
    \hfill
    \begin{subfigure}{0.45\textwidth}
        \vspace{1cm}
        \includegraphics[width=\textwidth]{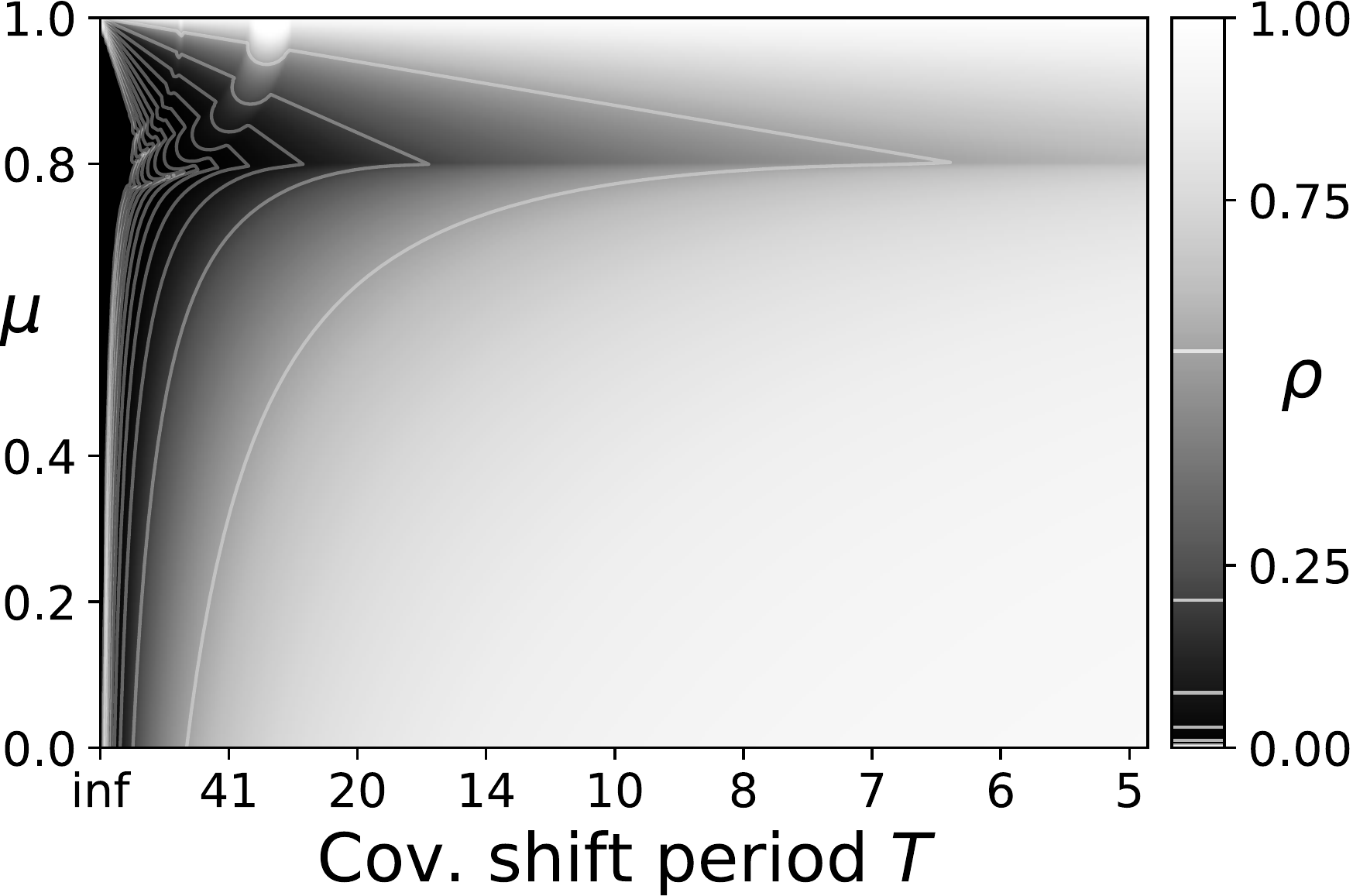}
        \caption{Spectral radius heatmap, wide range, $\eta=0.005$}
        \label{fig:full-theory-heatmap-wide-range-0.005}
    \end{subfigure}
    \begin{subfigure}{0.45\textwidth}
        \vspace{1cm}
        \includegraphics[width=\textwidth]{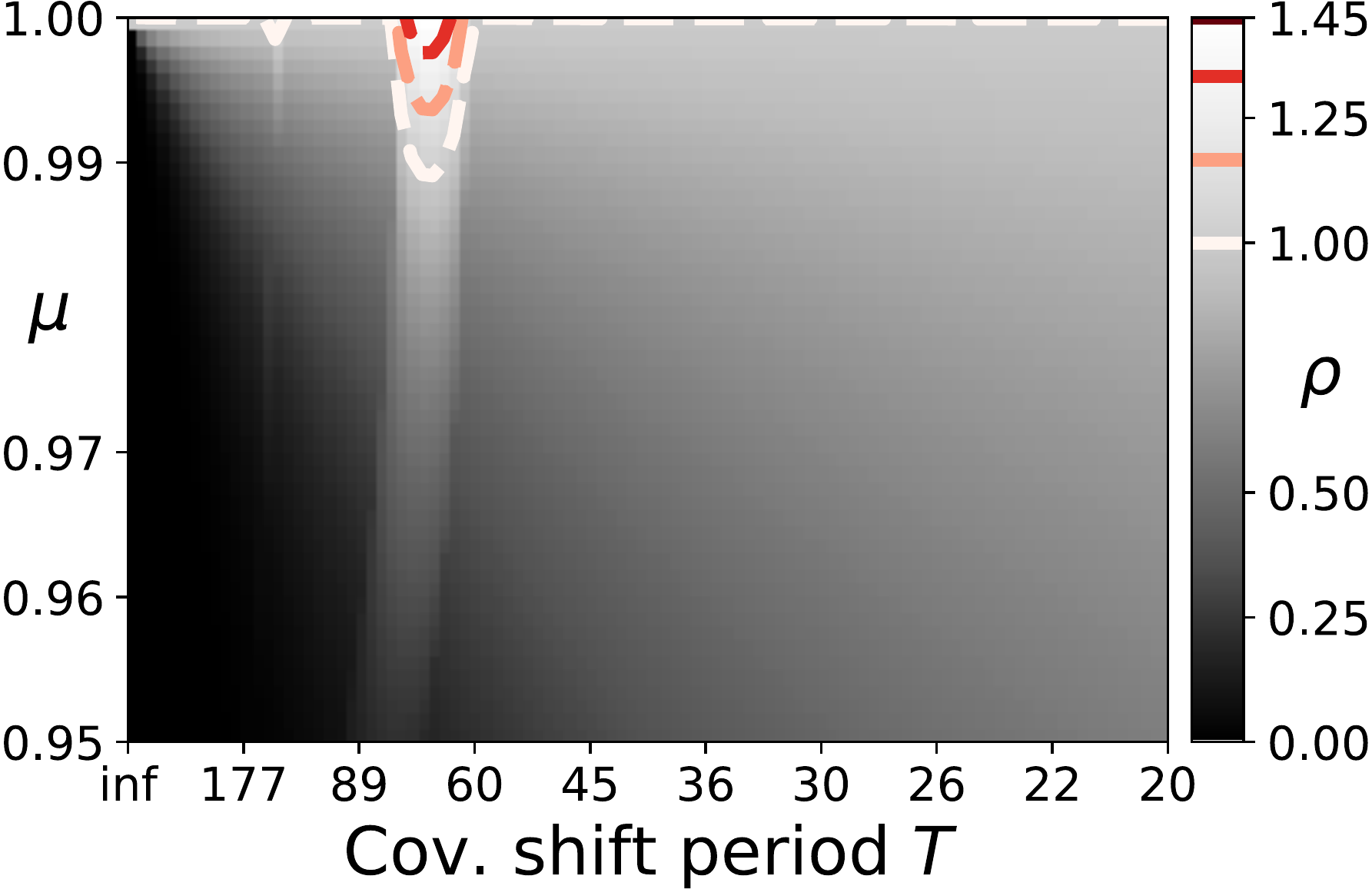}
        \caption{Spectral radius heatmap, $\eta=0.001$}
        \label{fig:full-theory-heatmap-0.001}
    \end{subfigure}
    \hfill
    \begin{subfigure}{0.45\textwidth}
        \vspace{1cm}
        \includegraphics[width=\textwidth]{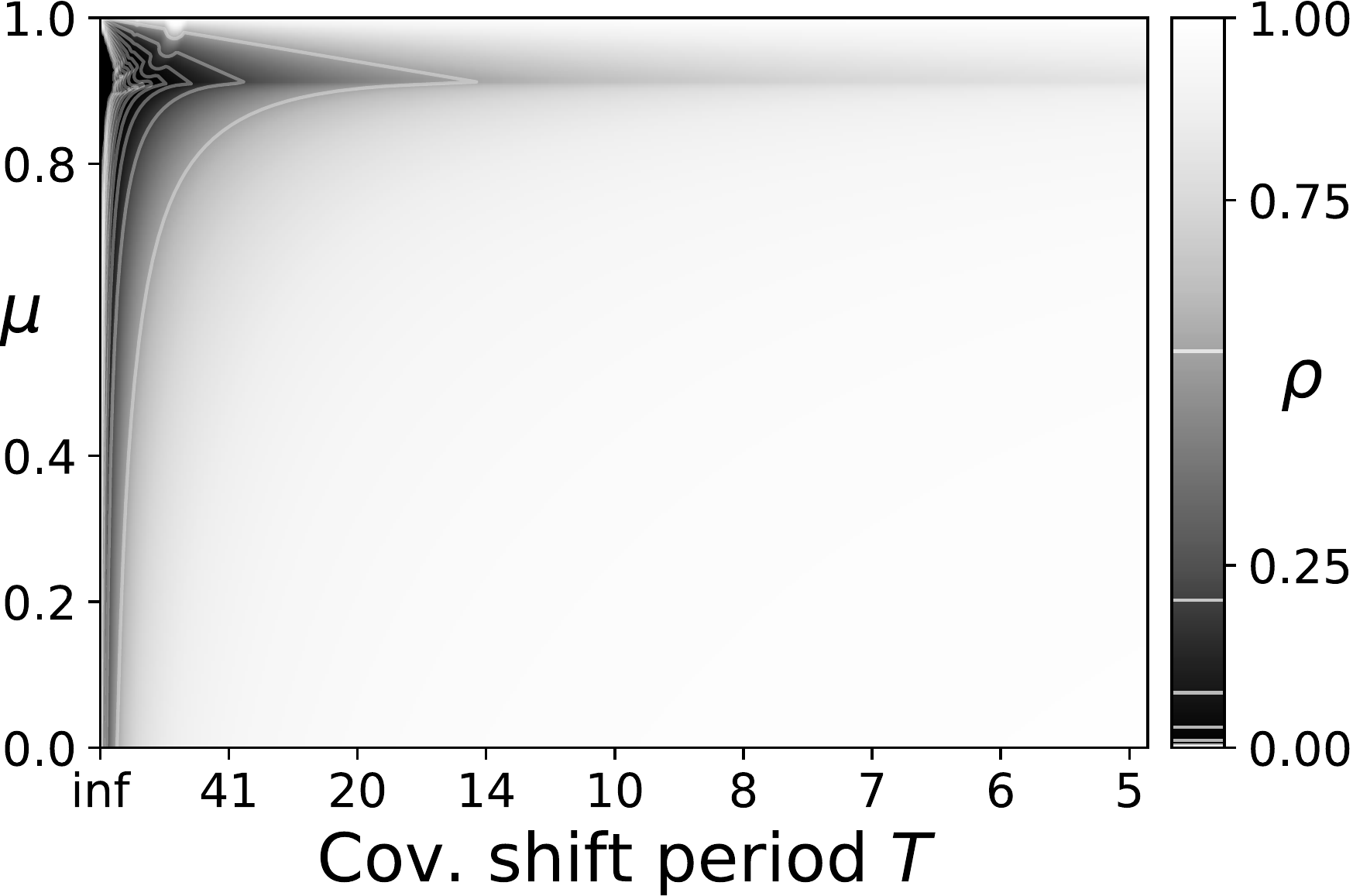}
        \caption{Spectral radius heatmap, wide range, $\eta=0.001$}
        \label{fig:full-theory-heatmap-wide-range-0.001}
    \end{subfigure}
    \caption{Spectral radii of the monodromy matrices induced by particular momentum $\mu$ and period $T$ values (x and y pixel coordinates, respectively).  Step-size $\eta$ decreasing with each row.  Each column shares the same range of $\mu, T$.  The right column shows the full range of momentum $\mu \in [0, 1]$ and a much wider range of periods $T$ than the left.  The left column figures simply `zoom in` to the upper left corner of the figure to its right. (\subref{fig:full-theory-heatmap}) corresponds to the $\mu, \eta, T$ as Experiments \ref{exp:1-sinusoid-heatmap} and \ref{exp:2-ar2-heatmap}, with contour lines identical to \ref{fig:heatmap-sinusoid-batched} \ref{fig:ar2-heatmap-batched} such that the white line separates the convergent region below from the divergent regions above. (\subref{fig:full-theory-heatmap-wide-range}), with the extent of (\subref{fig:full-theory-heatmap}) indicated by dotted lines.  The figures in the right column show a horizontal band of minimum spectral radius (i.e. max convergence rate), which suggests that there exists a least-resonating $\mu_{\text{best}}$ across nearly all frequencies, with only minor deviations toward the far left, where data approaches iid.  Though the location of $\mu_{\text{best}}$ clearly changes with step-size $\eta$, and likely depends upon other aspects of the specific learning problem.}
    \label{fig:theory-heatmaps}
\end{figure}

\subsection{Reducing Resonant Responses}
\label{sec:low-mu-no-resonance}

In experiments \ref{exp:1-sinusoid-heatmap} and \ref{exp:2-ar2-heatmap}, we see that as the momentum parameter is reduced, the tendency to resonate is mitigated.  This aligns with theoretical predictions, which (in the context of those particular experiments) suggest that a sufficiently low $\mu$ makes SGDm convergent across all frequencies in the band we evaluate.  Intuitively, this aligns with the role $\mu$ plays in the ODE, as it appears in the following coefficient on the first order derivative of system \eqref{eqn:gradient-flow} (i.e. the system's `damping' coefficient), repeated below with the damping coefficient $\alpha$ explicitly labelled:
\begin{align}
&\ddot{\theta}(t) + \alpha \dot{\theta}(t) + B(t)(\theta(t) - \theta^*) = 0  &\alpha \defeq \frac{1 - \mu}{\sqrt{\eta}} \label{eqn:damping-coefficient}
\end{align}
When $\alpha = 0$, the system has no damping (i.e. friction is zero, in the physical analogue) so resonant responses are maximized.  As $\alpha$ increases, damping increases, and resonant responses are reduced.  There are two ways to increase $\alpha$: reduce $\mu$ or reduce $\eta$.

\subsubsection{Reducing the momentum parameter}

Figure \ref{fig:full-theory-heatmap-wide-range} shows the theoretical heatmap across all possible momentum values, and across a much wider band of frequencies, which suggests a trend: setting $\mu$ to a sufficiently low value will completely mitigate resonance, though setting $\mu$ too low will worsen convergence rate.  For now, we will set aside the observation that $\mu$ too low worsens convergence rate, and we will now show that reducing $\mu$ will reliably dampen resonance across all other experiments.

In \ref{exp:5-adam}, ADAM's parameter $\beta_1$ was varied, and we see that reducing $\beta_1$ decreases the tendency to resonate.  Since $\beta_1$ is the nearest parameter to $\mu$ in SGDm, the desired trend has already been demonstrated.  For the remainder of this section, we show the resonance-damping behaviour of $\mu$ in the remaining experiments:  \ref{exp:3-to-stochastic-updates}, \ref{exp:4-signal-amplitude}, and \ref{exp:6-neural-net}.  In particular, from each experiment we choose the configuration which had the highest tendency to resonate, and we modify the experiment by running them with several decreasing values of momentum $\mu$.  See Figure \ref{fig:nonresonant-responses} for results.  In Section \ref{sec:experiments}, these experiments used momentum $\mu = 0.95$, and here we run with $\mu \in [0.85, 0.95]$, with all other experimental parameters identical.

\begin{figure}
    \centering
    \begin{subfigure}{0.45\textwidth}
        \includegraphics[width=\textwidth]{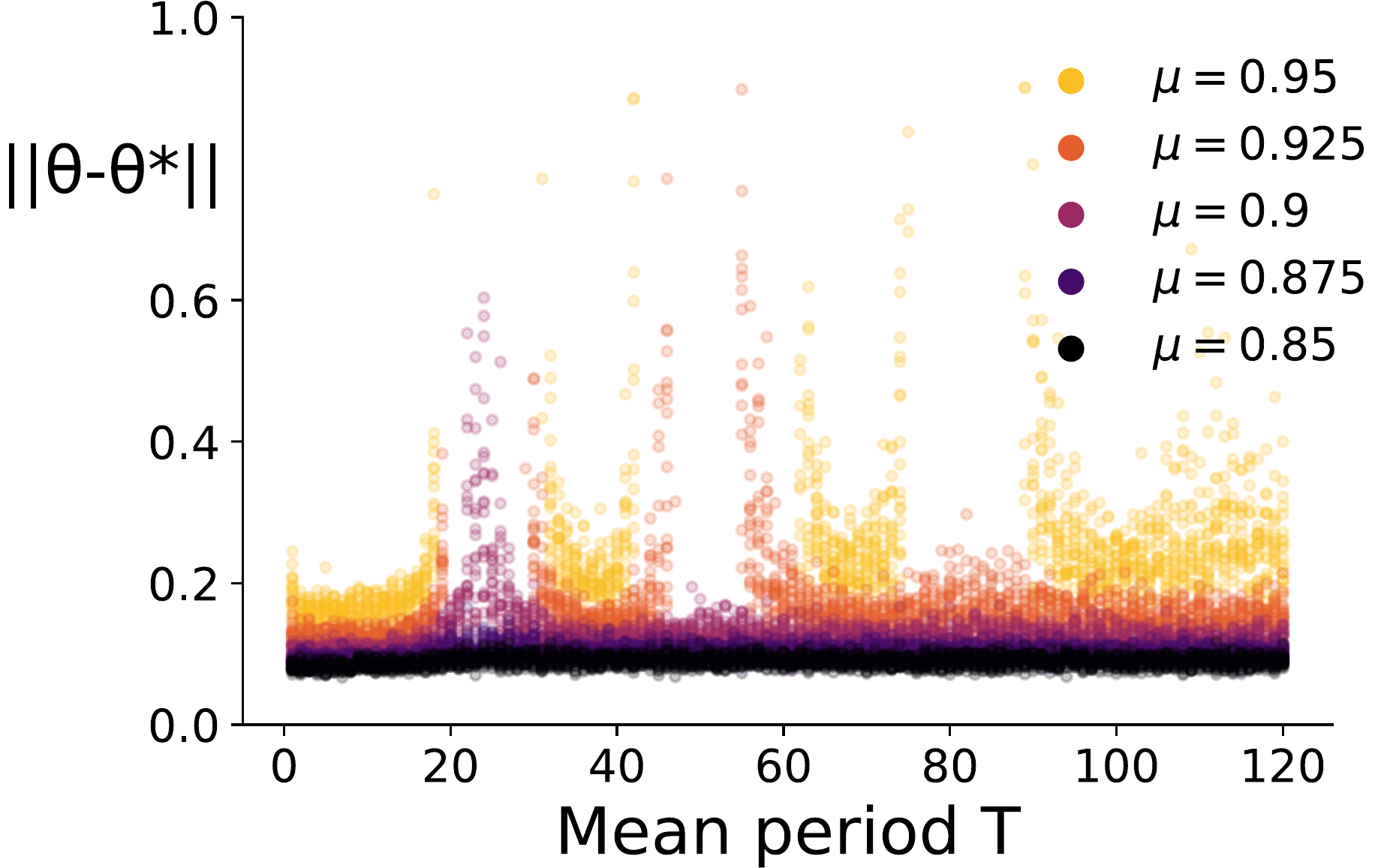}
        \caption{Regression w/ periodic $\mean_k$, 5 samples per $X_k$}
        \label{fig:rainbow-periodic-nonresonant}
    \end{subfigure}
    \hfill
    \begin{subfigure}{0.45\textwidth}
        \includegraphics[width=\textwidth]{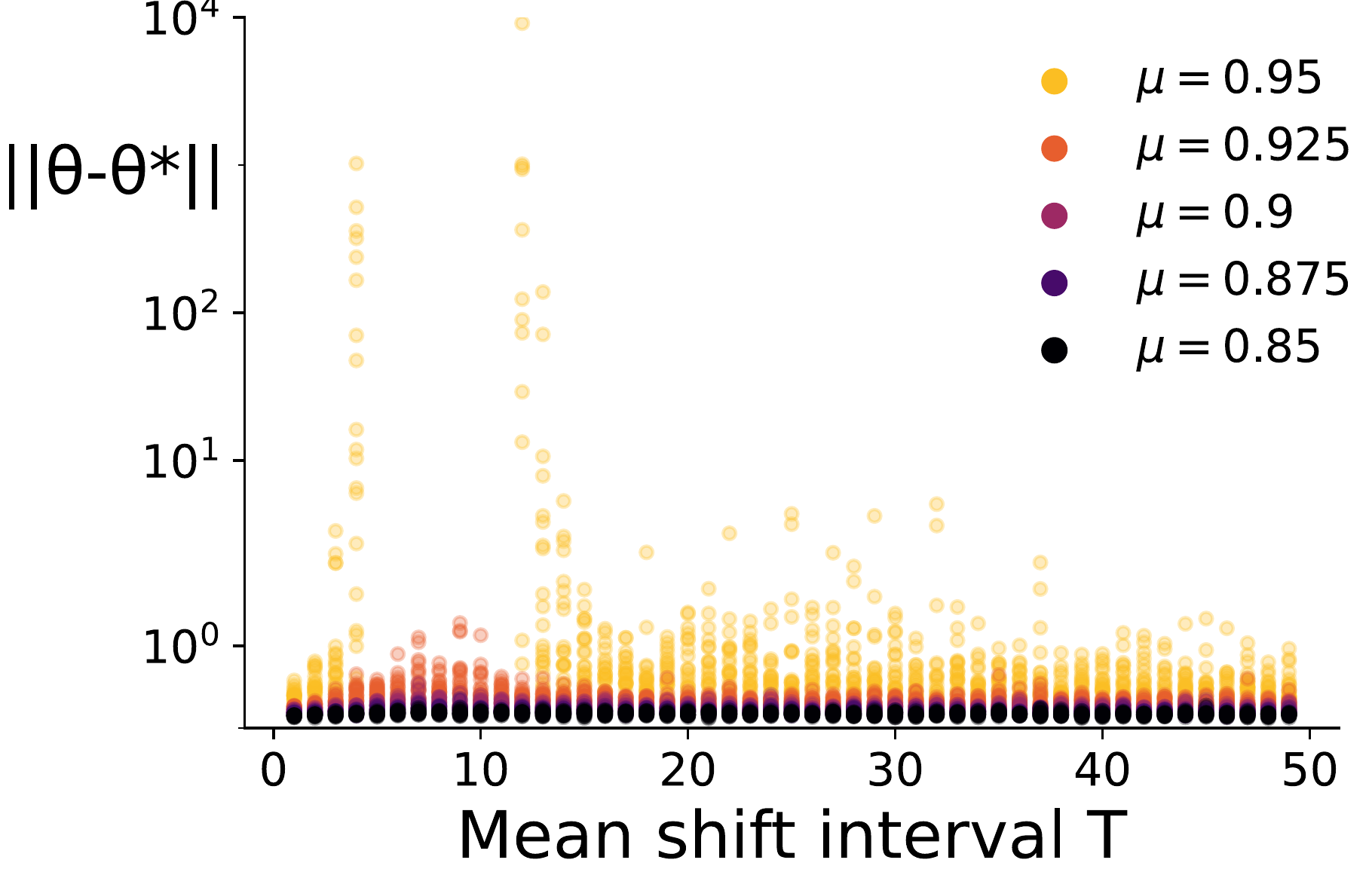}
        \caption{Regression w/ stochastic $\mean_k$, 0.4 $\mean_k$ variance}
        \label{fig:rainbow-switching-amplitude-nonresonant}
    \end{subfigure}
    \hfill
    \begin{subfigure}{0.45\textwidth}
        \vspace{1cm}
        \includegraphics[width=\textwidth]{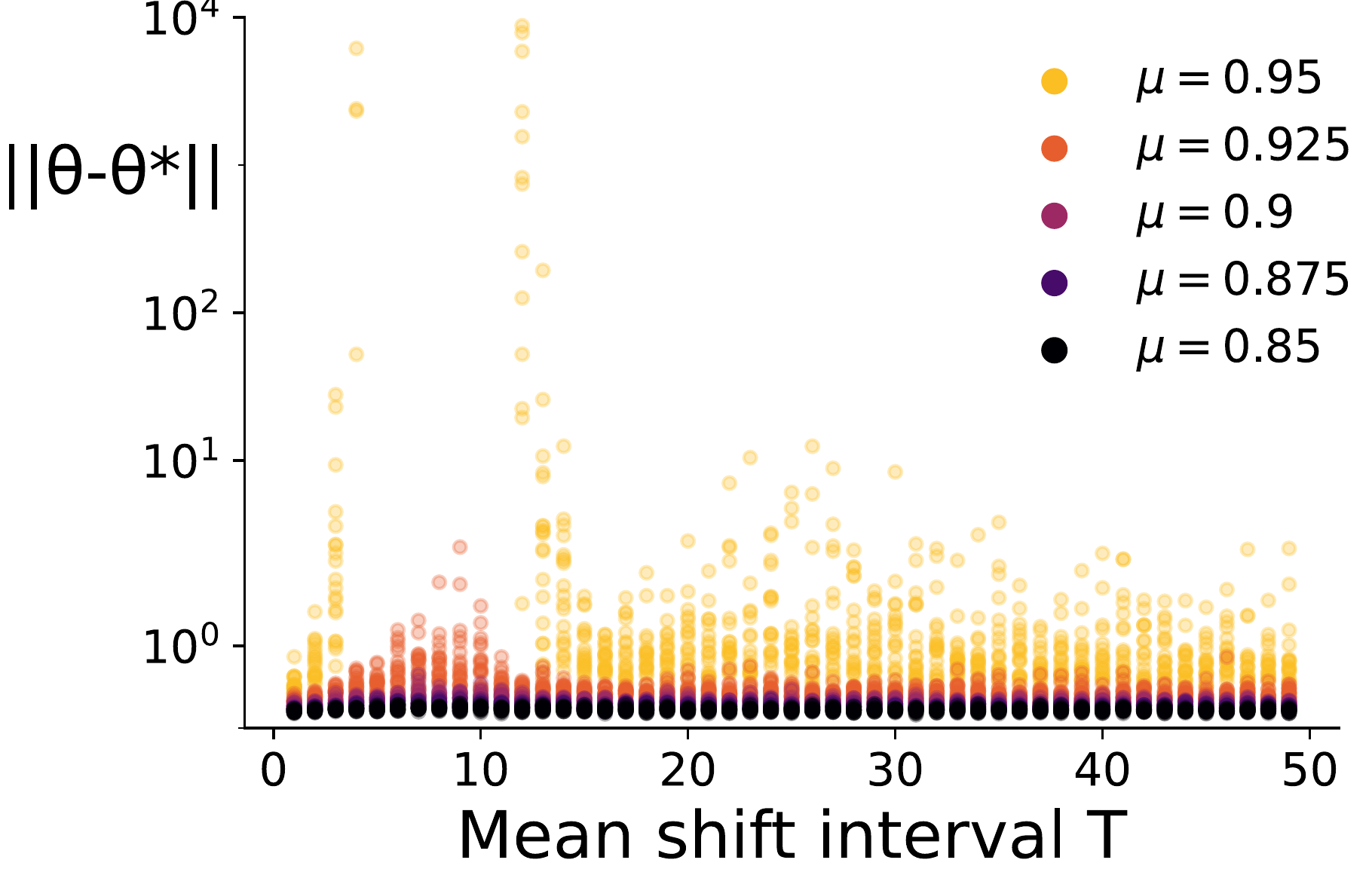}
        \caption{Regression w/ stochastic $\mean_k$, $d=9$}
        \label{fig:rainbow-switching-dimensionality-nonresonant}
    \end{subfigure}
    \hfill
    \begin{subfigure}{0.45\textwidth}
        \vspace{1cm}
        \includegraphics[width=\textwidth]{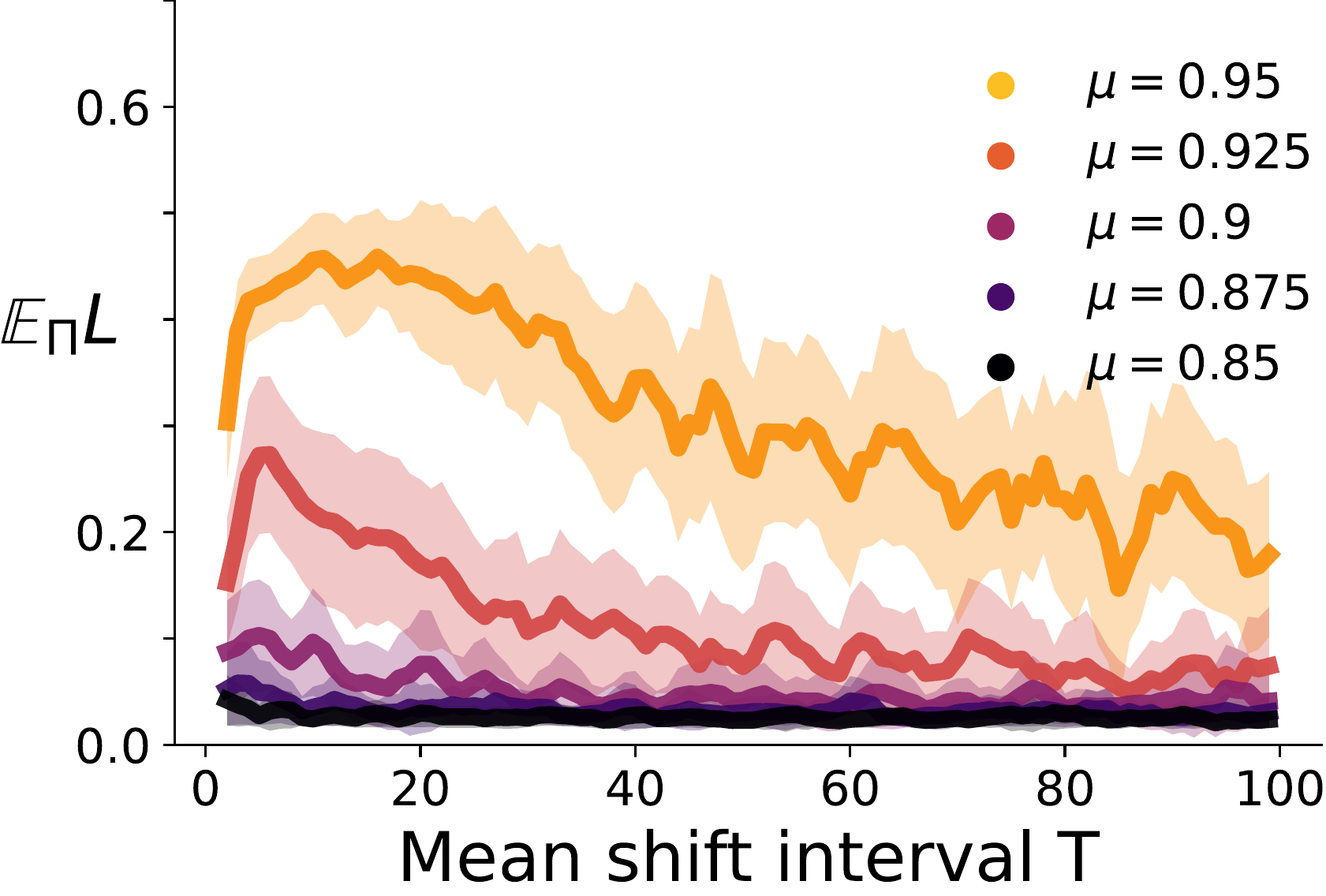}
        \caption{Neural net w/ stochastic $\mean_k$, $\mean_k$ variance $=0.4$}
        \label{fig:rainbowlines-stochastic-neural-nonresonant}
    \end{subfigure}
    \hfill
    \caption{Re-running with reduced momentum values for highest resonance configurations chosen from Experiments \ref{exp:3-to-stochastic-updates} (\subref{fig:rainbow-periodic-nonresonant}), \ref{exp:4-signal-amplitude} (\subref{fig:rainbow-switching-amplitude-nonresonant}, \subref{fig:rainbow-switching-dimensionality-nonresonant}), and \ref{exp:6-neural-net} (\subref{fig:rainbowlines-stochastic-neural-nonresonant}).  In all cases, reducing momentum significantly dampens resonant response, with $\mu=0.85$ completely mitigating resonant response.}
    \label{fig:nonresonant-responses}
\end{figure}

\subsubsection{Reducing the step-size}

Another way to reduce tendency to resonate is suggested by the damping coefficient \eqref{eqn:damping-coefficient}: reducing step-size $\eta$.  Here we repeat Experiments \ref{exp:1-sinusoid-heatmap} and \ref{exp:2-ar2-heatmap}, including two smaller step-sizes $\eta$ to empirically demonstrate that trend.  Specifically, Figure \ref{fig:step-size-reducing-resonance} shows the resonance heatmap and spectral radius contour lines for regression in two weights with covariate shift, identically to Experiments \ref{exp:1-sinusoid-heatmap} and \ref{exp:2-ar2-heatmap}.  The left column shows sinusoidal covariate shift, and the right column the AR(2) covariate shift.  Each row corresponds to a fixed step-size $\eta \in \{0.01, 0.005, 0.001\}$, and it is clear that resonant, diverging regions are significantly reduced in size as step-size is decreased, with a narrower band of frequencies diverging, and the minimum momentum $\mu$ required for resonance increasing towards 1.  This trend is reflected in both the empirical heatmap results, as well as the theoretically predicted spectral radius contour lines.

\begin{figure}
    \centering
    \begin{subfigure}{0.45\textwidth}
        \includegraphics[width=\textwidth]{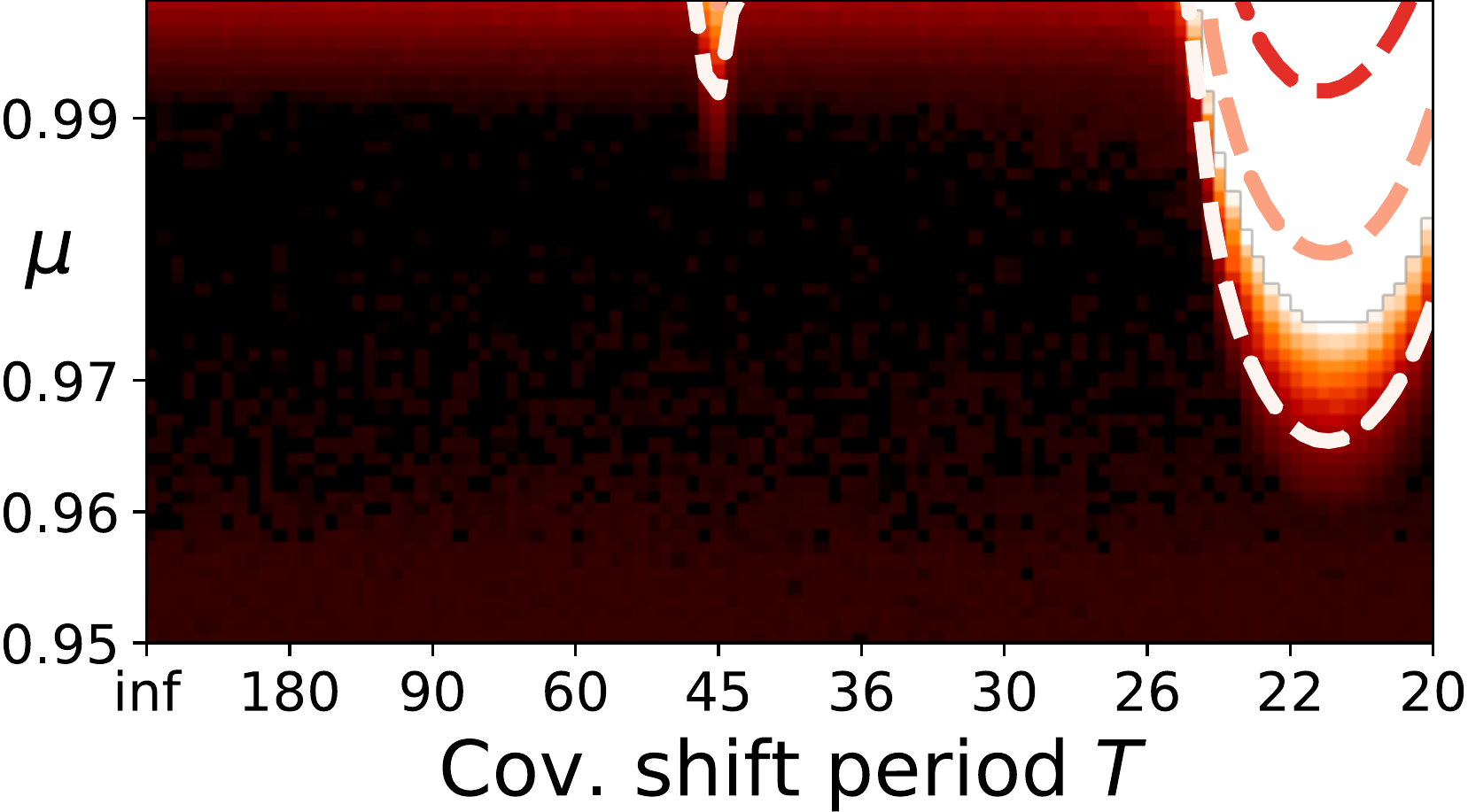}
        \caption{Resonance regions, sinusoidal $\mean_k$, $\eta=0.01$}
        \label{fig:sinusoid-reducing-eta-0.01}
    \end{subfigure}
    \hfill
    \begin{subfigure}{0.45\textwidth}
        \includegraphics[width=\textwidth]{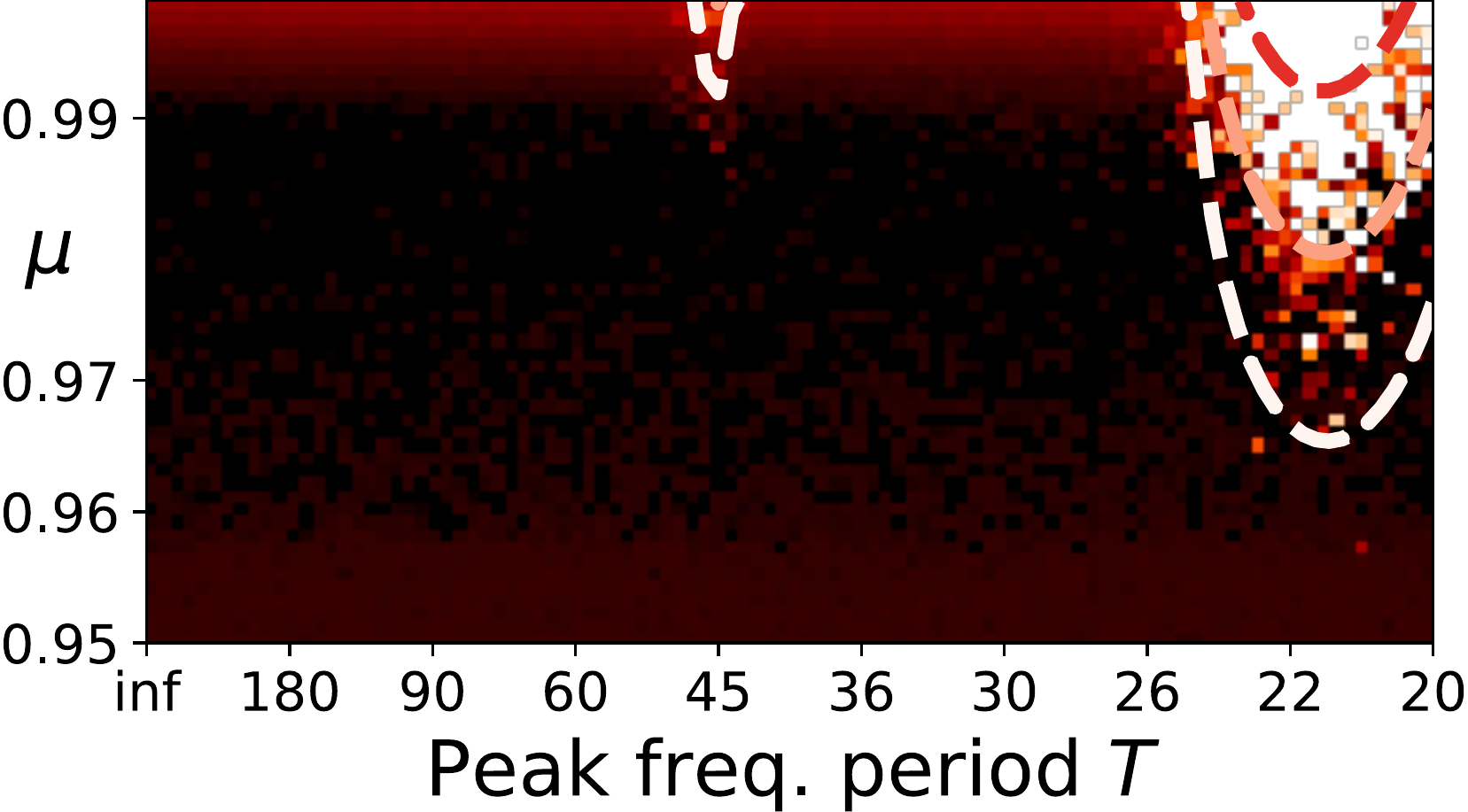}
        \caption{Resonance regions, AR(2) $\mean_k$, $\eta=0.01$}
        \label{fig:ar2-reducing-eta-0.01}
    \end{subfigure}
    \hfill
    \begin{subfigure}{0.45\textwidth}
        \vspace{1cm}
        \includegraphics[width=\textwidth]{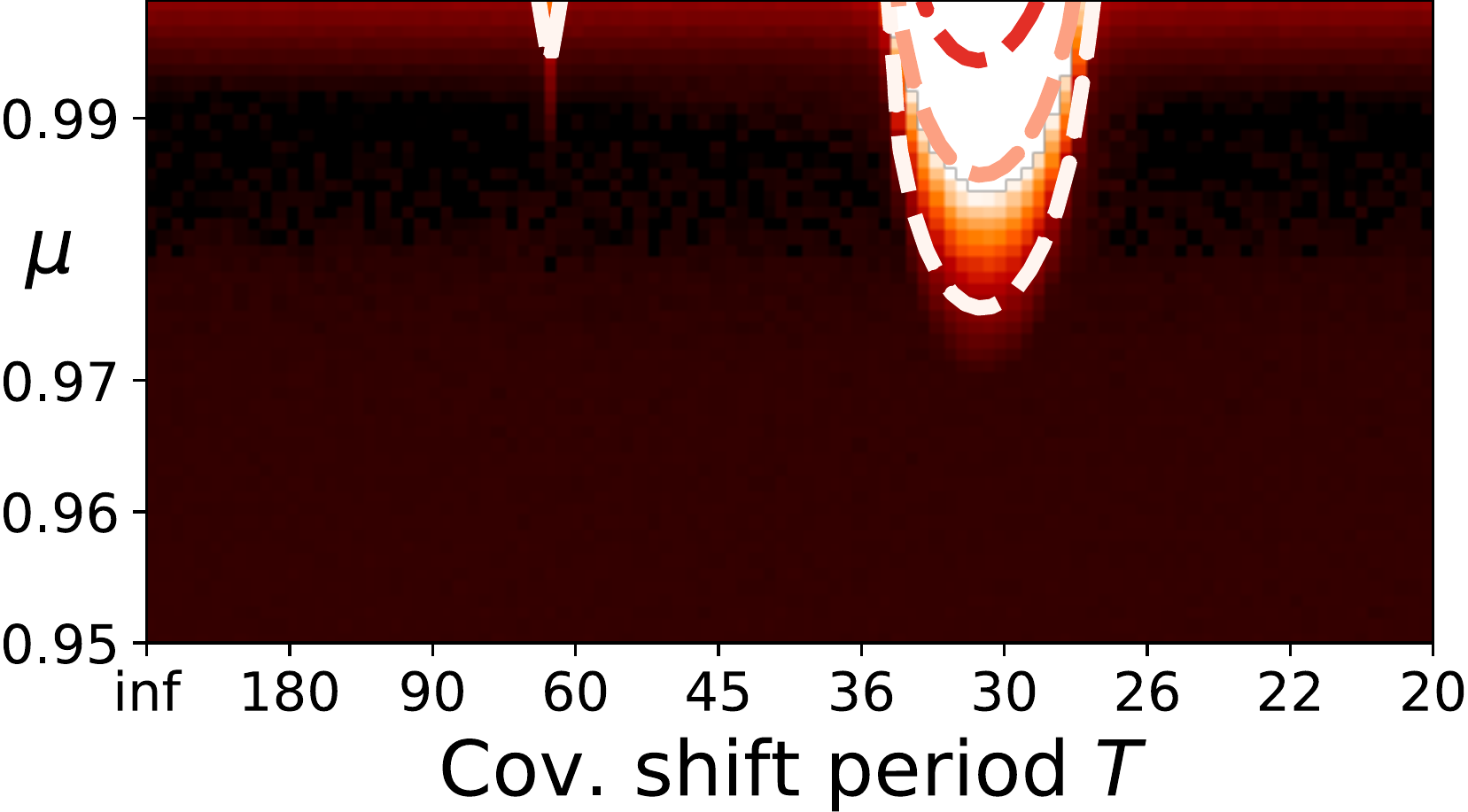}
        \caption{Resonance regions, sinusoidal $\mean_k$, $\eta=0.005$}
        \label{fig:sinusoid-reducing-eta-0.005}
    \end{subfigure}
    \hfill
    \begin{subfigure}{0.45\textwidth}
        \vspace{1cm}
        \includegraphics[width=\textwidth]{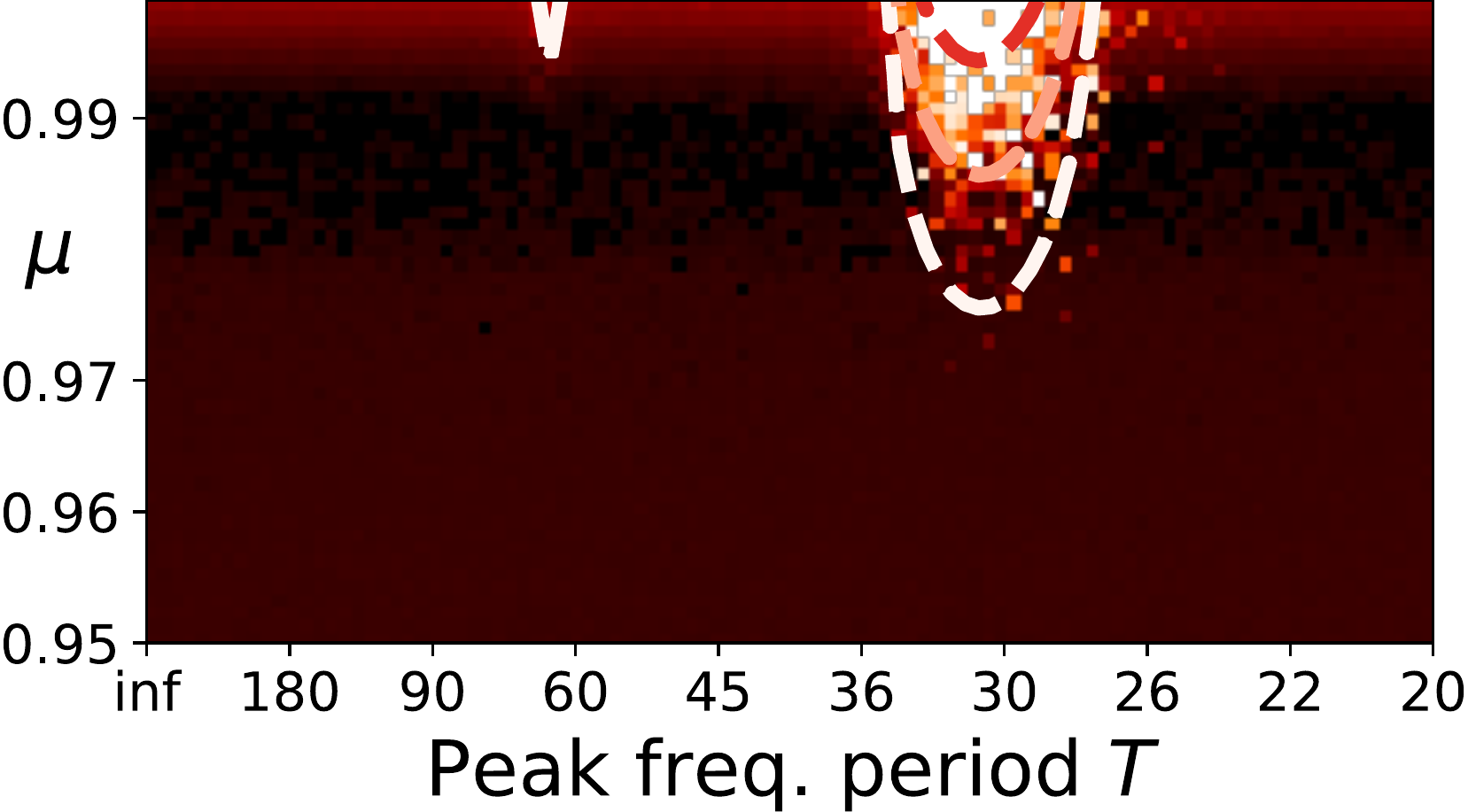}
        \caption{Resonance regions, AR(2) $\mean_k$, $\eta=0.005$}
        \label{fig:ar2-reducing-eta-0.005}
    \end{subfigure}
    \hfill
    \begin{subfigure}{0.45\textwidth}
        \vspace{1cm}
        \includegraphics[width=\textwidth]{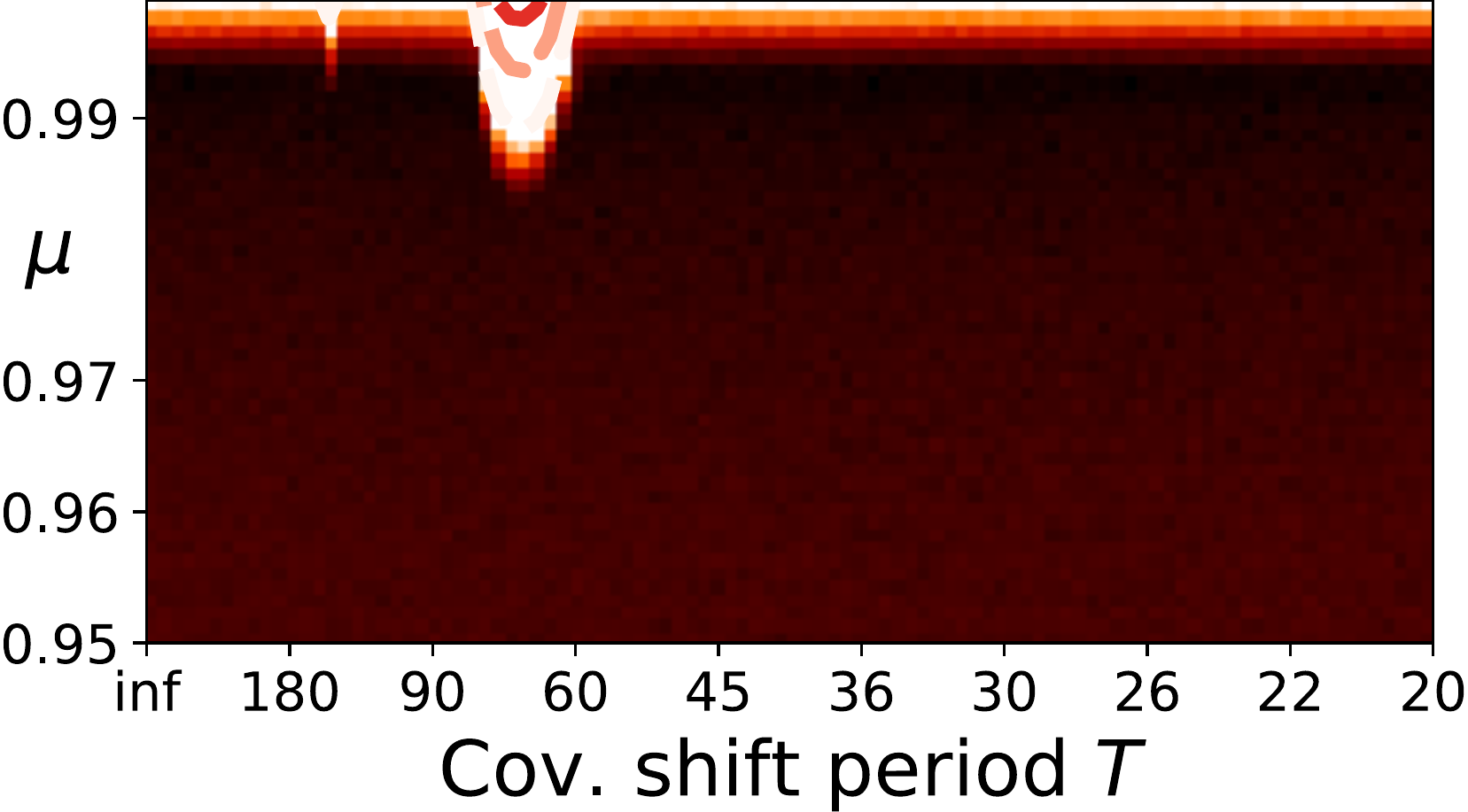}
        \caption{Resonance regions, sinusoidal $\mean_k$, $\eta=0.001$}
        \label{fig:sinusoid-reducing-eta-0.001}
    \end{subfigure}
    \hfill
    \begin{subfigure}{0.45\textwidth}
        \vspace{1cm}
        \includegraphics[width=\textwidth]{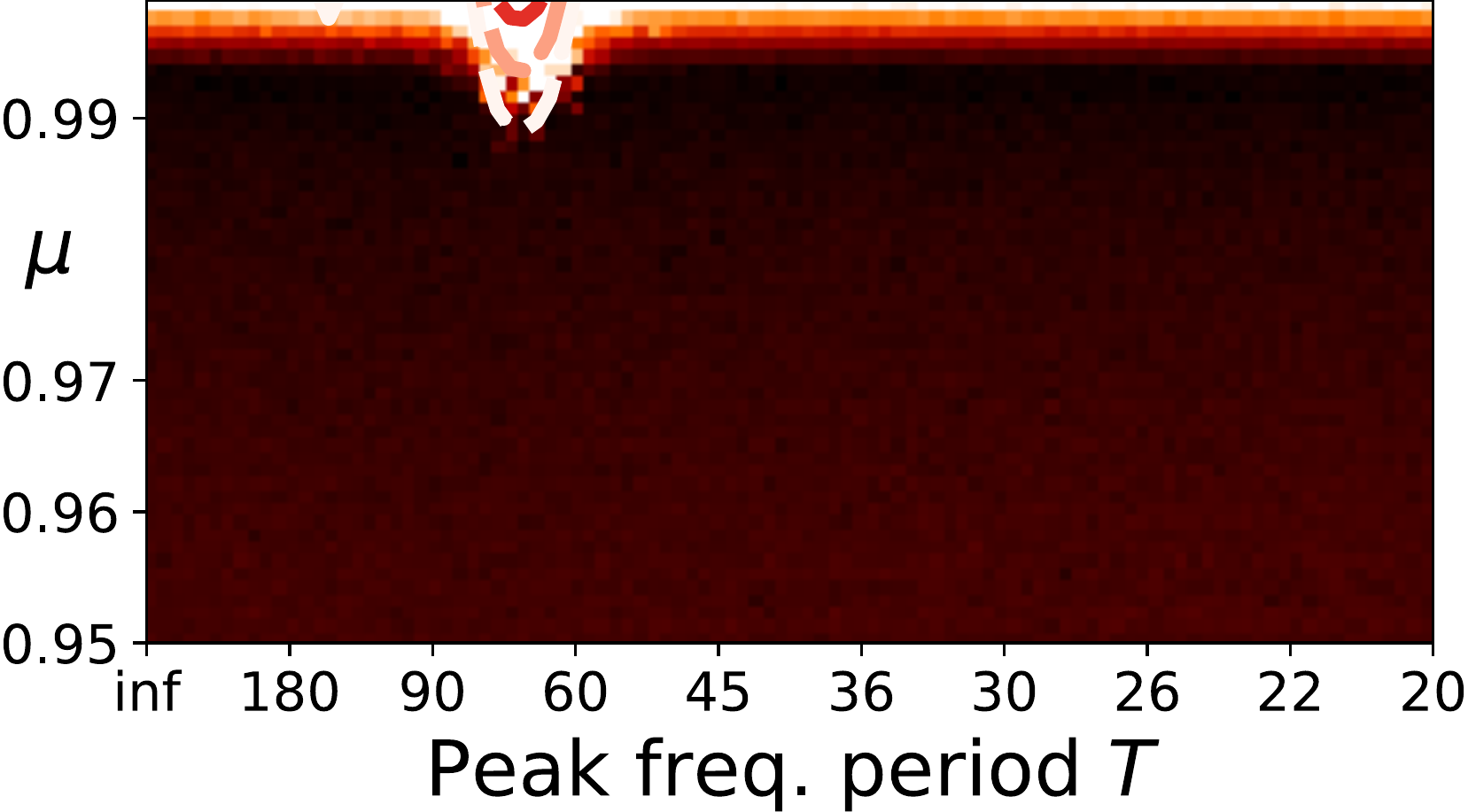}
        \caption{Resonance regions, AR(2) $\mean_k$, $\eta=0.001$}
        \label{fig:ar2-reducing-eta-0.001}
    \end{subfigure}
    \hfill
    \caption{Re-running with reduced step-size for Experiments \ref{exp:1-sinusoid-heatmap} (\subref{fig:sinusoid-reducing-eta-0.01}, \subref{fig:sinusoid-reducing-eta-0.005}, \subref{fig:sinusoid-reducing-eta-0.001}) and \ref{exp:2-ar2-heatmap} (\subref{fig:ar2-reducing-eta-0.01}, \subref{fig:ar2-reducing-eta-0.005}, \subref{fig:ar2-reducing-eta-0.001}).  In both cases, reducing step-size significantly dampens resonant response.}
    \label{fig:step-size-reducing-resonance}
\end{figure}

\subsection{Experiments \ref{exp:1-sinusoid-heatmap} and \ref{exp:2-ar2-heatmap} with Stochastic Gradients}
\label{sec:fully-stochastic-heatmaps}

Experiment \ref{exp:1-sinusoid-heatmap} 1 serves as a simple setting, as close to the theory of Section \ref{sec:theory} as possible.  In particular, each descent step made at time $k$ uses the gradients induced by 20 samples from each $X_k$ so that the gradient signal is much closer to the expected gradient than the stochastic setting.  Experiment \ref{exp:2-ar2-heatmap} uses the same technique.

Here we repeat Experiments \ref{exp:1-sinusoid-heatmap} and \ref{exp:2-ar2-heatmap}, but drawing only a single sample from each $X_k$, so that the gradient signal is fully stochastic.  That is, we significantly increase gradient signal's variance. As Figure \ref{fig:fully-stochastic-heatmaps} shows, the resonant response is significantly damped by increasing gradient signal variance.  We observe no other change in the empirical heatmap: regions of instability are neither introduced nor removed, and the existing regions do not change shape or location.  Only the size of instability regions is affected, which is most apparent under comparison to Figure \ref{fig:step-size-reducing-resonance}, where the empirically observed instability regions are expanded to closer match the theoretical predictions derived from expected gradients.

Also the bright band of instability in all subfigures of Figure \ref{fig:fully-stochastic-heatmaps} are wider than those of \ref{fig:step-size-reducing-resonance}.  This aligns with intuition from the conventional setting with iid data, where increasing gradient variance reduces optimizer stability.

\begin{figure}
    \centering
    \begin{subfigure}{0.45\textwidth}
        \includegraphics[width=\textwidth]{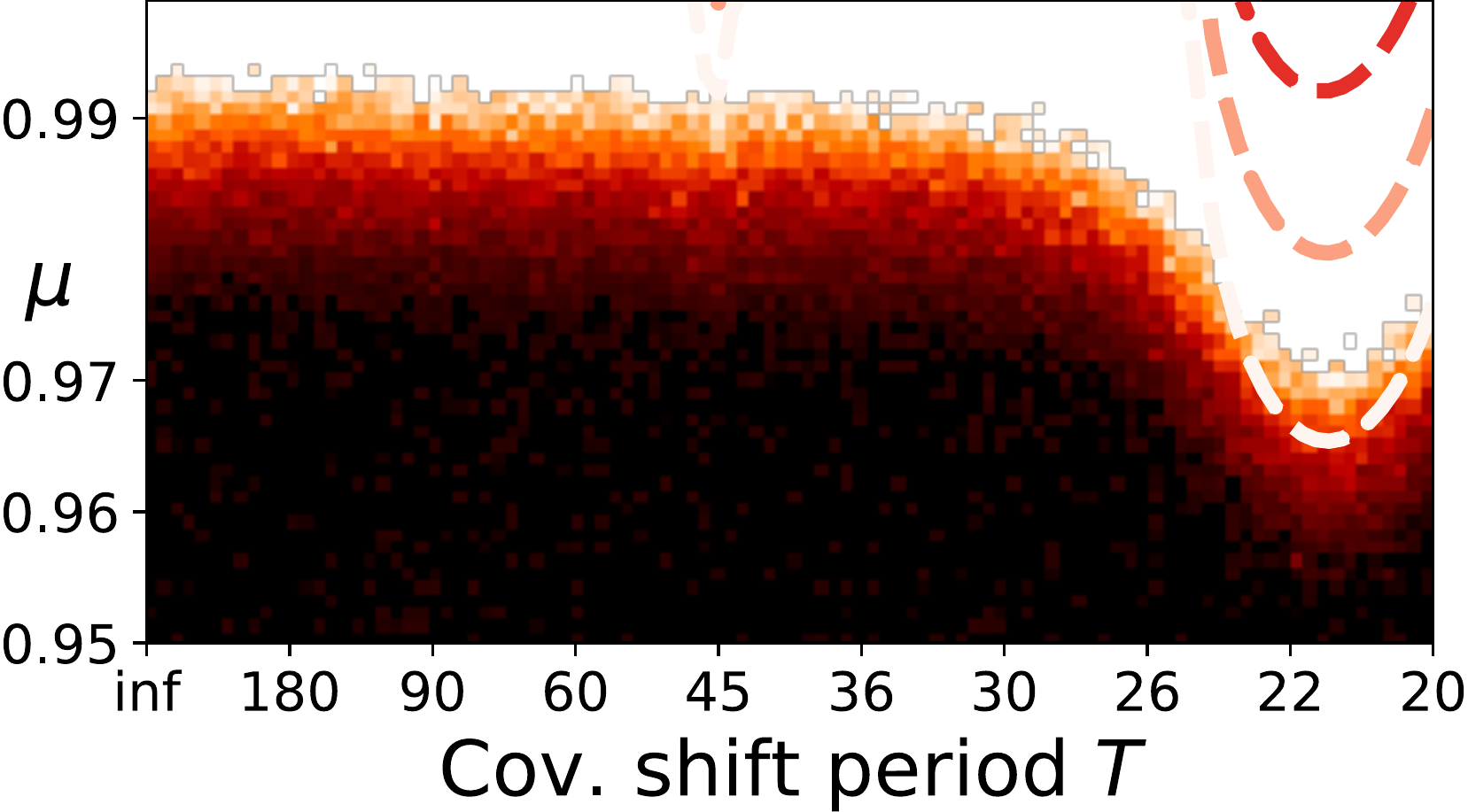}
        \caption{Resonance regions, sinusoidal $\mean_k$, $\eta=0.01$}
        \label{fig:sinusoid-stochastic-eta-0.01}
    \end{subfigure}
    \hfill
    \begin{subfigure}{0.45\textwidth}
        \includegraphics[width=\textwidth]{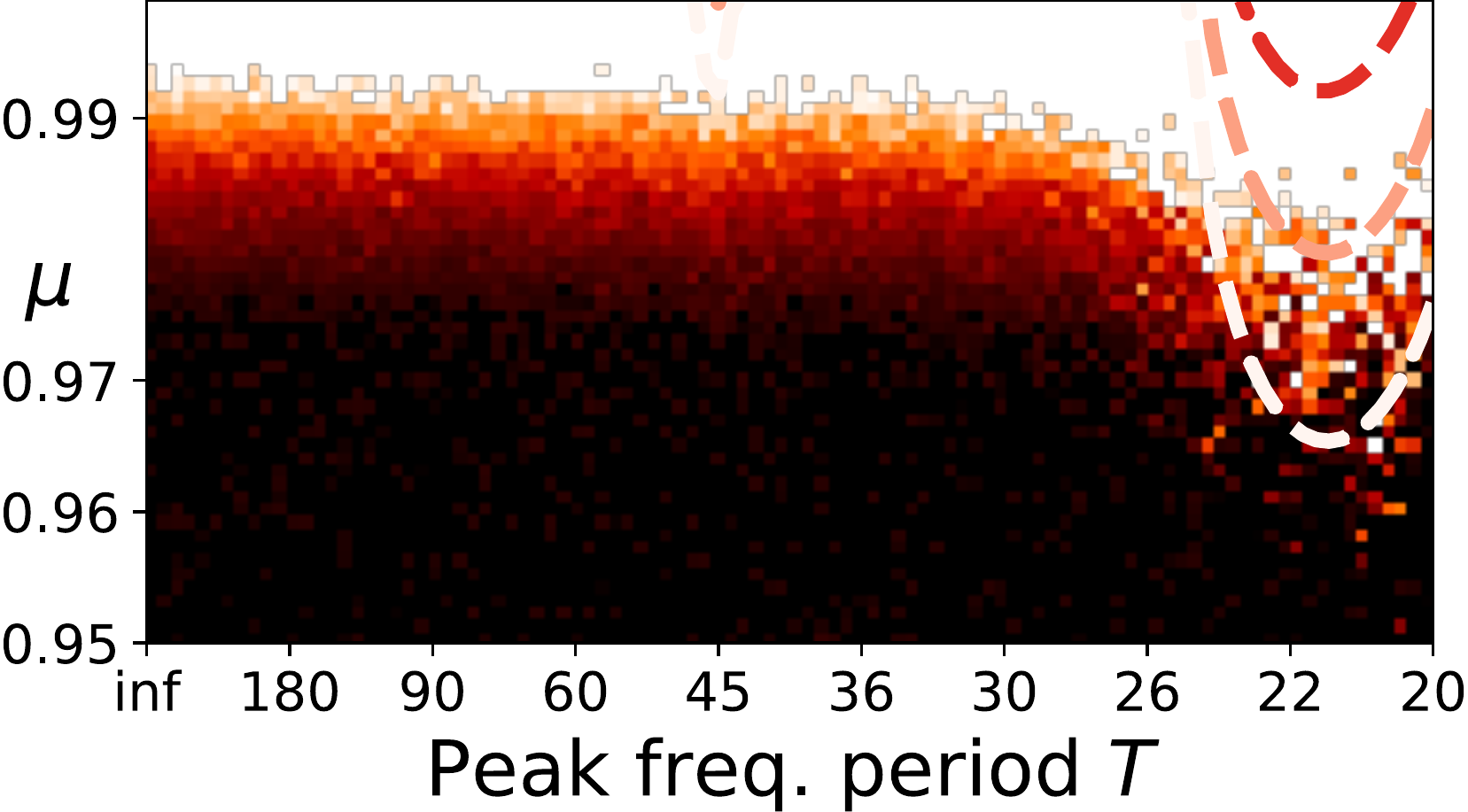}
        \caption{Resonance regions, AR(2) $\mean_k$, $\eta=0.01$}
        \label{fig:ar2-stochastic-eta-0.01}
    \end{subfigure}
    \hfill
    \begin{subfigure}{0.45\textwidth}
        \vspace{1cm}
        \includegraphics[width=\textwidth]{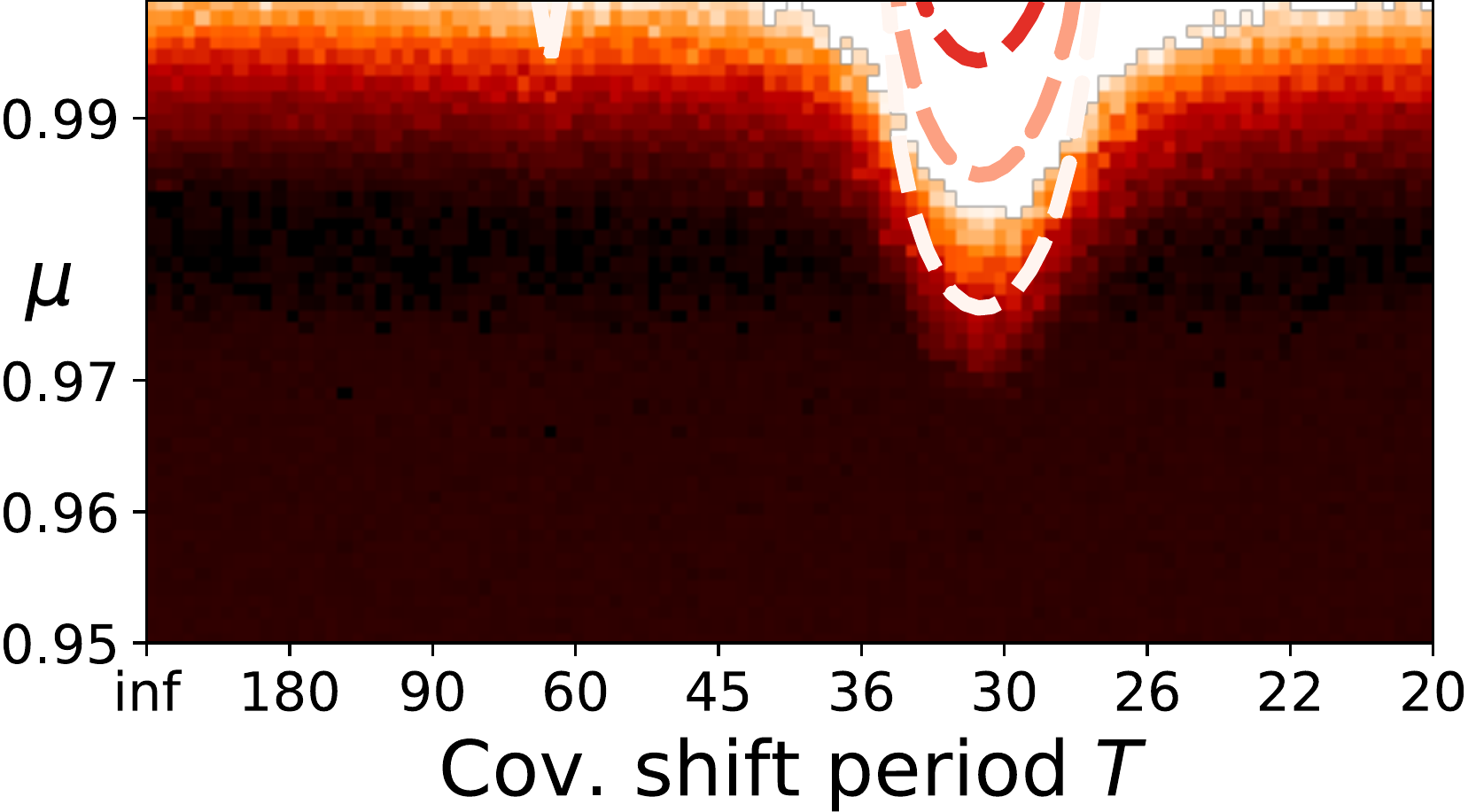}
        \caption{Resonance regions, sinusoidal $\mean_k$, $\eta=0.005$}
        \label{fig:sinusoid-stochastic-eta-0.005}
    \end{subfigure}
    \hfill
    \begin{subfigure}{0.45\textwidth}
        \vspace{1cm}
        \includegraphics[width=\textwidth]{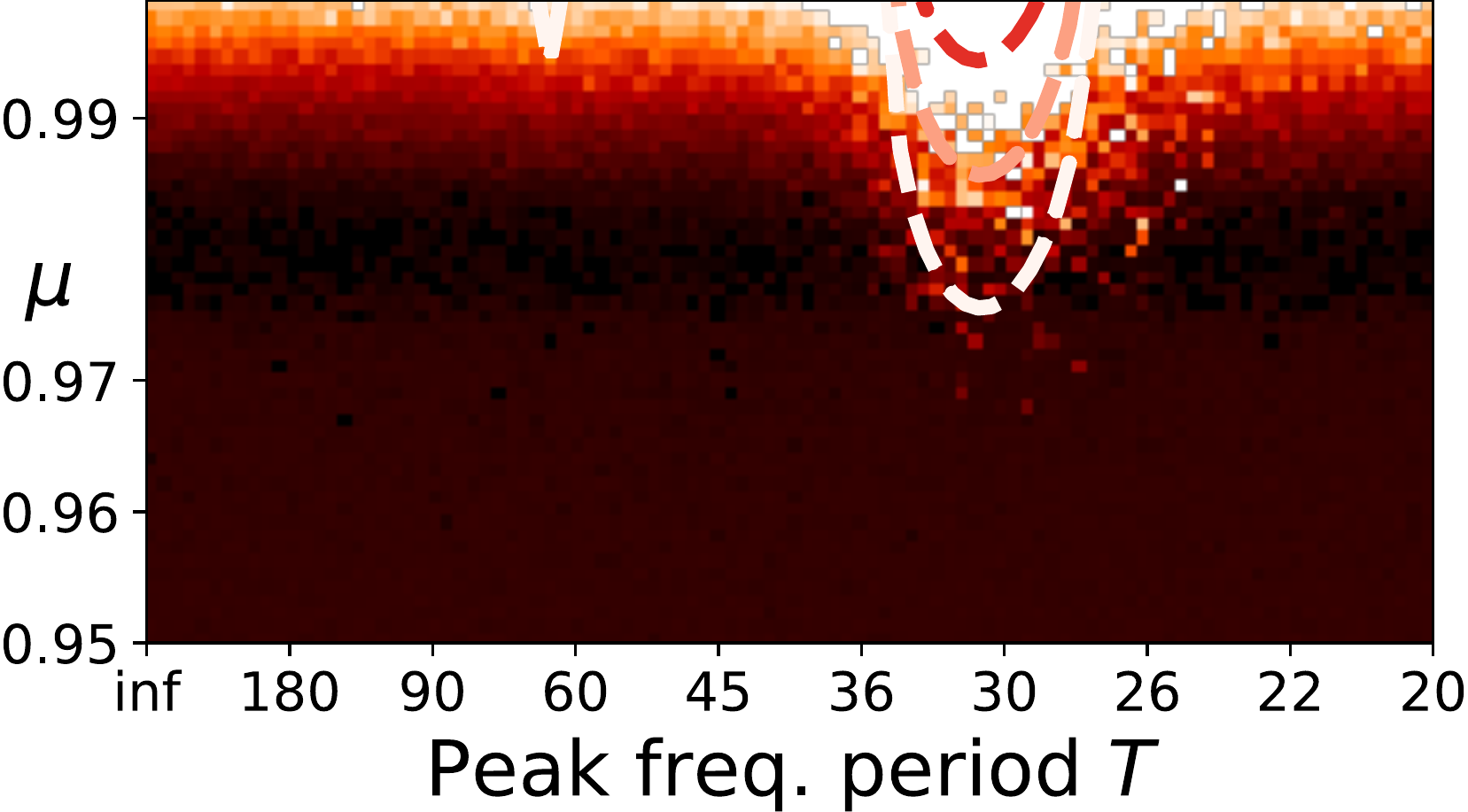}
        \caption{Resonance regions, AR(2) $\mean_k$, $\eta=0.005$}
        \label{fig:ar2-stochastic-eta-0.005}
    \end{subfigure}
    \hfill
    \begin{subfigure}{0.45\textwidth}
        \vspace{1cm}
        \includegraphics[width=\textwidth]{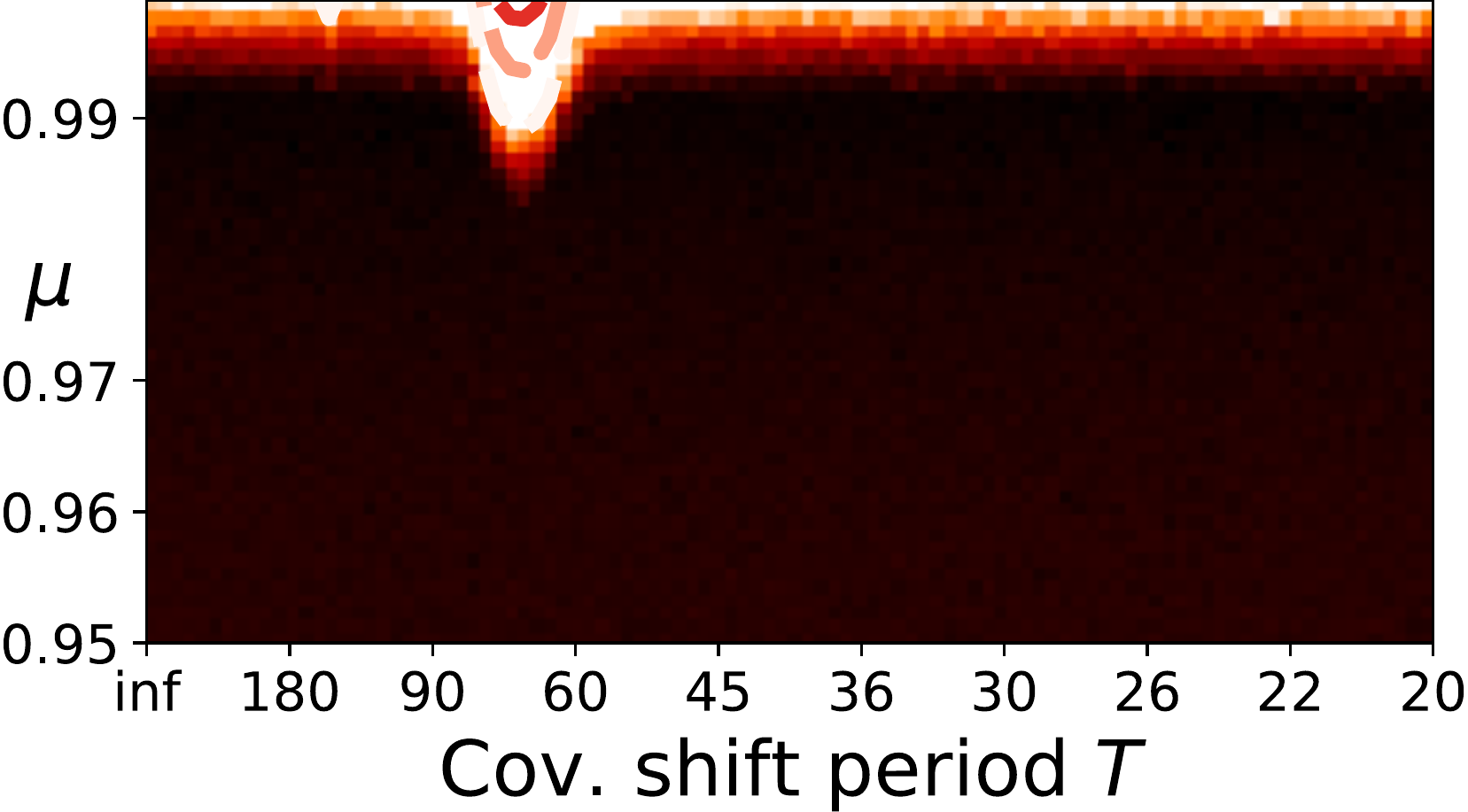}
        \caption{Resonance regions, sinusoidal $\mean_k$, $\eta=0.001$}
        \label{fig:sinusoid-stochastic-eta-0.001}
    \end{subfigure}
    \hfill
    \begin{subfigure}{0.45\textwidth}
        \vspace{1cm}
        \includegraphics[width=\textwidth]{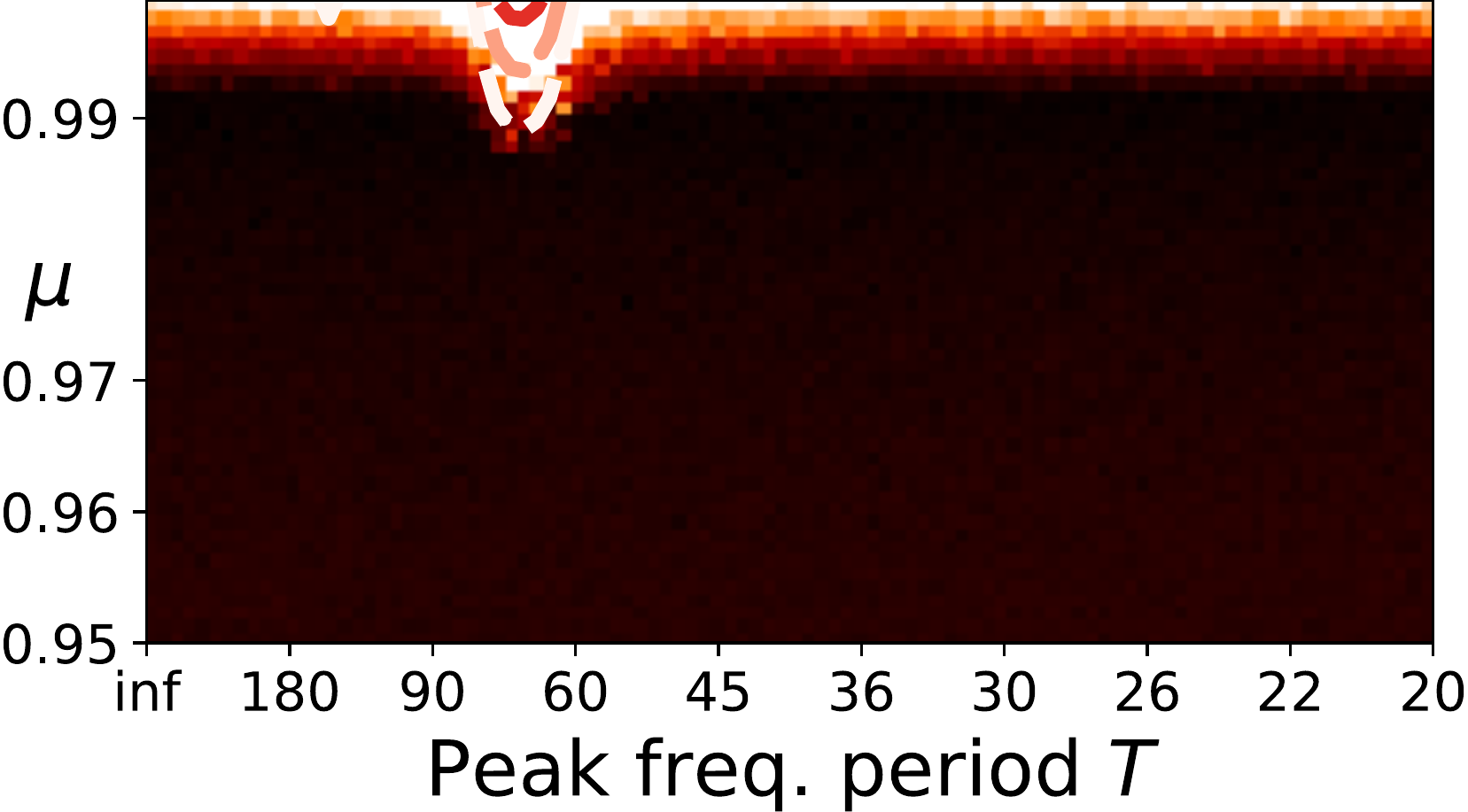}
        \caption{Resonance regions, AR(2) $\mean_k$, $\eta=0.001$}
        \label{fig:ar2-stochastic-eta-0.001}
    \end{subfigure}
    \hfill
    \caption{Re-running with fully stochastic gradients for Experiments \ref{exp:1-sinusoid-heatmap} (\subref{fig:sinusoid-stochastic-eta-0.01}, \subref{fig:sinusoid-stochastic-eta-0.005}, \subref{fig:sinusoid-stochastic-eta-0.001}) and \ref{exp:2-ar2-heatmap} (\subref{fig:ar2-stochastic-eta-0.01}, \subref{fig:ar2-stochastic-eta-0.005}, \subref{fig:ar2-stochastic-eta-0.001}).  In all cases, increasing gradient variance significantly dampens resonant response, but does not change it otherwise.}
    \label{fig:fully-stochastic-heatmaps}
\end{figure}

\subsection{Proofs}
\label{sec:proofs}

\setcounter{assumption}{0}
\setcounter{proposition}{0}
\setcounter{theorem}{0}

\subsubsection{Additional Notation}

Wherever we write the training input generating process $\{X_k\}$, it is implicit that the last dimension is fixed at 1 if one wishes to describe linear models with a bias term.  In this way, the notation $\langle \theta, X_k \rangle$ accommodates linear models with or without a bias term.

We use $\xi$ to denote the {\em phase space} coordinates of weights $\theta$, meaning $\xi = \cvec{\theta \\ \dot{\theta}}$, where vectors $\theta, \dot{\theta} \in \reals^d$ are stacked so that the resulting $\xi \in \reals^{2d}$.  We denote the $d$-dimensional zero vector as $0_d$.  For an ODE system $\dot{\xi}(t) = f(\xi(t), t)$ with arbitrary solution trajectory $\xi(t)$ approximated by a sequence of iterates $\{\xi_k\}$, we denote the integration time step as $h$, and the {\em numerical flow} as $\phi_{h,k}$, which is the map such that $\xi_{k+1} = \phi_{h,k}(\xi_k)$.  We denote the $k$-th discrete timestep as $t_k$, which we always use to refer to the $k$-th multiple of integration time step $h$, i.e. $t_k = h k$.

Similar to the main body, when a function of time $\theta(t)$ is approximated by a discrete time sequence $\{\theta_k\}$, they are denoted with the same symbol, and distinguished by parenthetical argument $t$ and subscript $k$, respectively.  At the risk of abusing notation, we will adhere to this convention and use $\{\dot{\theta}_k\}$ to be a discrete sequence approximating the function of time $\dot{\theta}(t)$, which refers to the time derivative of $\theta(t)$.  This means the symbol $\dot{\theta}_k$ is the $k$-th iterate of a sequence approximating $\dot{\theta}(t)$.

We now restate Assumptions, Propositions, and the Theorem from Section \ref{sec:theory}, providing complete proofs.

\subsubsection{Linear Time-Varying Expected Gradient via Covariate Shift}

\begin{assumption}
    The covariate generating process $\{X_k\}_{k \in \naturals}$ 
    is not identically distributed: $\Corr(X_{k_1}, X_{k_1}) \neq \Corr(X_{k_2}, X_{k_2})$ for some $k_1, k_2 \in \naturals$.
\end{assumption}

\begin{assumption}
    The targets $Y_k$ are a fixed linear function of $X_k$ with iid zero-mean observation noise, $\epsilon_k$ such that $\Ex[\epsilon_k] = 0$.  That is, $Y_k = \langle \theta^*, X_k \rangle + \epsilon_k$ where $\theta^* \in \reals^d$ is fixed for all $k$.
\end{assumption}

\begin{proposition}
     Under Assumptions \ref{ass:non-iid-covariates}, \ref{ass:fixed-linear-target}, a linear model with weights $\theta_k$ making predictions $\widehat{Y}_k = \langle \theta_k, X_k \rangle$ with a mean squared error (MSE) objective will induce time-varying linear expected loss gradients $g_k(\theta_k) = B_k (\theta_k - \theta^*)$ for all $k \in \naturals$, where $B_k \in \reals^{d \times d}$ and $B_{k_1} \neq B_{k_2}$ for some $k_1, k_2$.
\end{proposition}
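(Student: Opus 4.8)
The plan is to directly compute the expected gradient of the MSE objective at an arbitrary but fixed time step $k$ and read off the matrix $B_k$. First I would fix $k$ and write the per-sample loss for the linear model as $L(z_k; \theta) = (\langle \theta, X_k\rangle - Y_k)^2$, whose gradient with respect to $\theta$ is $\nabla_\theta L(z_k;\theta) = 2(\langle \theta, X_k\rangle - Y_k) X_k$. Substituting the fixed linear target from Assumption \ref{ass:fixed-linear-target}, $Y_k = \langle \theta^*, X_k\rangle + \epsilon_k$, the residual becomes $\langle \theta - \theta^*, X_k\rangle - \epsilon_k$, so the gradient is $2(\langle \theta - \theta^*, X_k\rangle - \epsilon_k) X_k = 2 X_k X_k^\top (\theta - \theta^*) - 2\epsilon_k X_k$.

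Next I would take the expectation over $P_k$, i.e. jointly over $X_k$ and the noise $\epsilon_k$. Linearity of expectation gives $g_k(\theta) = 2\,\Ex_{P_k}[X_k X_k^\top](\theta - \theta^*) - 2\,\Ex_{P_k}[\epsilon_k X_k]$. The cross term vanishes: since the observation noise is zero-mean and independent of the covariates (Assumption \ref{ass:fixed-linear-target}), $\Ex[\epsilon_k X_k] = \Ex[\epsilon_k]\,\Ex[X_k] = 0$. Hence $g_k(\theta) = B_k(\theta - \theta^*)$ with $B_k = 2\,\Ex_{P_k}[X_k X_k^\top]$, which in the paper's notation is $2\,\Corr(X_k, X_k)$; the usual second-moment decomposition then yields Equation \eqref{eqn:grad-matrix-defn}, $B_k = 2\,\Cov(X_k, X_k) + 2\,\Ex_{P_k}[X_k]\,\Ex_{P_k}[X_k]^\top$. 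Since $\theta$ and $k$ were arbitrary, this holds for all $k \in \naturals$, establishing the claimed linear-in-$\theta$ form.

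Finally, the claim that $B_{k_1} \neq B_{k_2}$ for some pair is immediate from Assumption \ref{ass:non-iid-covariates}: that assumption states precisely that $\Corr(X_{k_1}, X_{k_1}) \neq \Corr(X_{k_2}, X_{k_2})$ for some $k_1, k_2$, and $B_k$ is exactly twice this matrix.

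I expect the only real subtlety --- hardly an obstacle --- to be the treatment of the cross term $\Ex[\epsilon_k X_k]$, which requires reading Assumption \ref{ass:fixed-linear-target}'s ``iid zero-mean observation noise'' as noise that is also independent of $X_k$ (the standard regression convention); I would state this dependence explicitly before invoking it. A second minor bookkeeping point is that, by the appendix's additional-notation convention, a bias term is absorbed by fixing the last coordinate of $X_k$ at $1$, so the argument above covers the affine predictor without modification.
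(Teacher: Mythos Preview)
Your proposal is correct and follows essentially the same route as the paper's proof: compute the MSE gradient, substitute the linear target from Assumption~\ref{ass:fixed-linear-target}, take expectation over $P_k$, use independence and zero mean of $\epsilon_k$ to kill the cross term, and read off $B_k = 2\,\Ex_{P_k}[X_k X_k^\top] = 2\,\Corr(X_k,X_k)$, then invoke Assumption~\ref{ass:non-iid-covariates} for the time-varying part. Your explicit flagging of the independence assumption on $\epsilon_k$ and the bias-absorption convention are both apt and match the paper's intent.
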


\begin{proof}

We will show that the gradient of MSE loss $\nabla_{\theta_k} L(Z; \theta_k)$ is a random variable whose expectation will take the linear form $B_k(\theta_k - \theta^*)$.  We start with gradient for arbitrary time step $k$
\begin{align*}
    \nabla_{\theta_k} L(Z; \theta_k) &= \nabla_{\theta_k} (\widehat{Y}_k - Y_k)^2 & \text{MSE def'n} \\
    &= \nabla_{\theta_k}[ (\langle \theta_k, X_k \rangle - Y_k)^2 ] & \widehat{Y}_k \text{ def'n from Prop. \ref{thm:cov-shift-gives-ltv-grads}} \\
    &= \nabla_{\theta_k}[ (\langle \theta_k, X_k \rangle - \langle \theta^*, X_k \rangle + \epsilon_k)^2 ] & Y_k \text{ def'n from Ass. \ref{ass:fixed-linear-target}} \\
    &= \nabla_{\theta_k}[ (\langle \theta_k - \theta^*, X_k \rangle + \epsilon_k)^2 ] & \langle \cdot, \cdot \rangle \text{ distributivity} \\
    &= 2(\langle \theta_k - \theta^*, X_k \rangle + \epsilon_k) \nabla_{\theta_k}[ \langle \theta_k - \theta^*, X_k \rangle + \epsilon_k ] & \text{chain rule}\\
    &= 2(\langle \theta_k - \theta^*, X_k \rangle + \epsilon_k) X_k \\
    &= 2\langle \theta_k - \theta^*, X_k \rangle X_k + 2 \epsilon_k X_k  \tag{*} \label{eqn:loss-gradient} \\
\end{align*}
Taking expectation with respect to distribution $P_k$:
\begin{align*}
    g_k(\theta_k) &= \Ex_{P_k} [\nabla_{\theta_k} L(Z; \theta_k) ] & g \text{ def'n} \\
    &= \Ex_{P_k} [ 2\langle \theta_k - \theta^*, X_k \rangle X_k + 2 \epsilon_k X_k ]  & \text{by \eqref{eqn:loss-gradient}} \\
    &= 2 \Ex_{P_k} [ \langle \theta_k - \theta^*, X_k \rangle X_k ] + 2 \Ex_{P_k} [ \epsilon_k] \Ex_{P_k} [ X_k ] & \Ex \text{ linear}, \epsilon_k \text{ indep.} \\
    &= 2 \Ex_{P_k} [ \langle \theta_k - \theta^*, X_k \rangle X_k ] & \Ex[\epsilon_k] = 0 \\
    &= 2 \Ex_{P_k} [ X_k \langle \theta_k - \theta^*, X_k \rangle ] & \text{scalar and vector commute} \\
    &= 2 \Ex_{P_k} [ X_k \langle X_k, \theta_k - \theta^* \rangle ] & \langle \cdot, \cdot \rangle \text{ commutative} \\
    &= 2 \Ex_{P_k} [ X_k ( X_k^T (\theta_k - \theta^*) ) ] & \langle \cdot, \cdot \rangle \text{ matrix form} \\
    &= 2 \Ex_{P_k} [ X_k X_k^T ] (\theta_k - \theta^*) & \text{matrix mult. associative} \\
    &= B_k (\theta_k - \theta^*)
\end{align*}
In the last step, we have defined the matrix $B_k$ as twice the expected outer product of $X_k$ with itself, which is precisely the autocorrelation matrix, and is a function of $X_k$'s first two moments as follows.
\begin{align*}
    B_k &= 2 \Ex_{P_k} [ X_k X_k^T ] \\
    &= 2 \Corr(X_k, X_k) \\
    &= 2 \left( \text{Cov}(X_k, X_k) + \Ex_{P_k}[X_k]\Ex_{P_k}[X_k]^T \right)
\end{align*}
So if there exists $k_1, k_2 \in \naturals$ such that the autocorrelation matrices differ
\begin{align*}
    \Corr(X_{k_1}, X_{k_1}) \neq \Corr(X_{k_2}, X_{k_2})
\end{align*}
then $B_{k_1} \neq B_{k_2}$.  Assumption \ref{ass:non-iid-covariates} provides the necessary $k_1, k_2$, so the proposition is proved.

\end{proof}

\subsubsection{ODE Correspondence} \label{sec:ode-correspondence-appdx}

In Section \ref{sec:theory} of the main body, we defined $B(t)$, Lipschitz continuous in $t$, such that $B_k = B(\sqrt{\eta}k)$. If we assume that $\{X_k\}$ are sampled at intervals of $\sqrt{\eta}$ from an underlying continuous-time stochastic process $X(t)$ for which $\Ex_{P(t)}[X(t)]$ and $\text{Cov} (X(t), X(t))$ are Lipschitz continuous in $t$, this naturally induces a unique Lipschitz continuous $B(t)$. Note $P(t)$ is the distribution at time $t$, and we can extend \eqref{eqn:grad-matrix-defn} as follows.
\begin{equation} \label{eqn:grad-matrix-defn-cont-time}
    B(t) = 2 \text{Cov}(X(t), X(t)) + 2\Ex_{P(t)}[X(t)]\Ex_{P(t)}[X(t)]^T
\end{equation}

This means that $\{B_k\}$ is sampled from a continuous $B(t)$, with subsequent $B_k$ spaced $\sqrt{\eta}$ apart.  In essence, step-size $\eta$ acts as a scaling constant, such that tne unit of $t$ is $\approx \frac{1}{\sqrt{\eta}}$ discrete steps of $k$. (Exact when $\frac{1}{\sqrt{\eta}}$ is an integer.)

\begin{proposition}
    The SGDm iterates $\{\theta_k\}$ numerically integrate the ODE system \eqref{eqn:gradient-flow} with integration step $\sqrt{\eta}$ and first order consistency.
    \begin{equation*}
        \ddot{\theta}(t) + \frac{1 - \mu}{\sqrt{\eta}} \dot{\theta}(t) + B(t)(\theta(t) - \theta^*) = 0 \tag{\ref{eqn:gradient-flow}}
    \end{equation*}
\end{proposition}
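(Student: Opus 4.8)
The plan is to rewrite SGDm in rescaled, recentered coordinates so that it becomes the composition of an explicit and an implicit Euler step, exhibit the matching first-order ODE, and then invoke operator-splitting consistency theory. First I would recenter and pass to the expected gradient: writing $\phi_k \defeq \theta_k - \theta^*$ and using the expected-gradient form $g_k(\theta_k) = B_k \phi_k$ from Proposition \ref{thm:cov-shift-gives-ltv-grads}, the recursion in \eqref{sgdm} becomes $v_{k+1} = \mu v_k - \eta B_k \phi_k$ and $\phi_{k+1} = \phi_k + v_{k+1}$. Setting $h \defeq \sqrt{\eta}$ and rescaling the velocity by $u_k \defeq v_k / \sqrt{\eta}$ turns this into
\begin{equation*}
u_{k+1} = \mu u_k - h\, B_k \phi_k, \qquad \phi_{k+1} = \phi_k + h\, u_{k+1}.
\end{equation*}
In parallel, I would cast the target ODE \eqref{eqn:gradient-flow} into the first-order form \eqref{eqn:ltv-first-order} with state $\xi = \cvec{\phi \\ \dot\phi}$, so that solution trajectories $\theta(t)$ of \eqref{eqn:gradient-flow} are exactly the first block of solutions of $\dot\xi = A(t)\xi$.

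Next I would split $A(t) = A_1 + A_2(t)$ into the nilpotent ``kinematic'' block $A_1 = \cvec{0 & I \\ 0 & 0}$ and the ``dynamic'' block $A_2(t) = \cvec{0 & 0 \\ -B(t) & -\tfrac{1-\mu}{\sqrt{\eta}} I}$, and define the one-step numerical flow $\phi_{h,k}$ (with $h = \sqrt{\eta}$) as the Lie--Trotter composition of (i) one explicit-Euler step for $\dot\xi = A_2(t)\xi$ with the coefficient frozen at $t_k$, followed by (ii) one implicit-Euler step for $\dot\xi = A_1 \xi$. The implicit step is harmless: since $A_1^2 = 0$ we have $(I - hA_1)^{-1} = I + hA_1$ in closed form, so no nonlinear solve is ever needed.

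I would then verify the equivalence by a short block computation. Step (i) maps $\cvec{\phi_k \\ u_k}$ to $\cvec{\phi_k \\ \mu u_k - hB_k\phi_k}$ --- exactly the $u$-update above --- and step (ii) then maps $\cvec{\phi_k \\ u_{k+1}}$ to $\cvec{\phi_k + h u_{k+1} \\ u_{k+1}}$ --- exactly the $\phi$-update. Hence $\phi_{h,k}$ reproduces the rescaled SGDm recursion exactly; the ordering (velocity first, then position) is precisely what makes the split-Euler composition coincide with heavy ball rather than with some $O(h^2)$-perturbed variant of it.

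It remains to establish consistency. Each of explicit and implicit Euler is a first-order-consistent one-step method for a linear ODE whose coefficient matrix is Lipschitz in $t$ --- trivially for $A_1$, and for $A_2(t)$ because $B(t)$ is Lipschitz by construction --- and the Lie--Trotter composition of first-order-consistent methods is again first-order consistent for the sum system (Hairer--Lubich--Wanner; the operator-splitting literature); so $\phi_{h,k}$ integrates $\dot\xi = A(t)\xi$ with local error $O(h^2) = O(\eta)$, and restricting to the first block carries the conclusion to $\{\theta_k\}$ and \eqref{eqn:gradient-flow}. I expect this last step to be the main obstacle, because the standard consistency theorems are stated for autonomous systems: one must separately bound the error incurred by freezing $A_2$ at the left endpoint $t_k$ rather than following its exact flow, which is exactly where Lipschitz continuity of $t \mapsto B(t)$ (equivalently, the assumption that $\{X_k\}$ comes from a continuous-time process with Lipschitz first two moments) enters. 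One must also keep in mind that consistency is asserted for each fixed $\eta$, since \eqref{eqn:gradient-flow} itself depends on $\eta$ through the damping coefficient $\tfrac{1-\mu}{\sqrt{\eta}}$; the constant in the $O(\eta)$ local-error bound is accordingly $\eta$-dependent but finite, which is all that the downstream argument (exponential divergence dominating the linearly accumulating integration error) requires.
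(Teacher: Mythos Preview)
Your proposal is correct and follows essentially the same approach as the paper: the same phase-space reduction, the same splitting $A(t)=A_1+A_2(t)$ into kinematic and dynamic blocks, the same explicit-then-implicit Euler composition, and the same rescaling $u_k=v_k/\sqrt{\eta}$ with $h=\sqrt{\eta}$ to recover SGDm exactly. The paper resolves the non-autonomous consistency concern you flag by citing results specific to time-varying linear systems (Farag\'o 2011 for order-1 splitting error; Csom\'os 2008 for the minimum-order rule), which is precisely the ingredient you anticipated needing.
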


\begin{proof}

The proof proceeds two parts.  First we derive a first order operator-splitting integrator for the ODE \eqref{eqn:gradient-flow}, then we show that it is equivalent to SGDm \eqref{sgdm}.

{\em (Part 1: Operator-Splitting Integrator)}

Let $\xi: \reals \mapsto \reals^{2d}$ be the vector valued function of time whose first $d$ elements are $\theta(t)$, and last $d$ elements are $\dot{\theta}(t)$:
\begin{align}
    \xi(t) = \cvec{\theta(t) - \theta^* \\ \dot{\theta}(t)}
\end{align}
i.e. $\xi$ is a phase space transformation allowing us to rewrite ODE \eqref{eqn:gradient-flow} in the form of \eqref{eqn:ltv-first-order}, which can then be split into a sum of separate systems $f^{[1]}, f^{[2]}$ as follows
\begin{align}
    \dot{\xi}(t) &= f(\xi(t), t)  \label{eqn:overall-system-ode}\\
    \dot{\xi}(t) &= \cvec{0_{d \times d} & I_{d \times d} \\ - B(t) & - \frac{1 - \mu}{\sqrt{\eta}} I_{d \times d}} \xi(t) \notag \\
    \dot{\xi}(t) &= \cvec{0_{d \times d} & I_{d \times d} \\ 0_{d \times d} & 0_{d \times d}} \xi(t) + \cvec{0_{d \times d} & 0_{d \times d} \\ - B(t) & - \frac{1 - \mu}{\sqrt{\eta}} I_{d \times d}} \xi(t) \notag \\
    \dot{\xi}(t) &= f^{[1]}(\xi(t), t) + f^{[2]}(\xi(t), t)\notag
\end{align}
Let $h>0$ be an integration time step, $\phi^{[1]}_{h,k}(\xi_k)$ be the implicit Euler numerical flow of $f^{[1]}(\xi(t), t)$:
\begin{align}
    \phi^{[1]}_{h,k}(\xi_k) &= \xi_k + h f^{[1]}(\phi^{[1]}_{h,k}(\xi_k), t_{k+1}) \notag \\
    &= \xi_k + h \cvec{0_{d \times d} & I_{d \times d} \\ 0_{d \times d} & 0_{d \times d}} \phi^{[1]}_{h,k}(\xi_k) \label{eqn:implicit-sys-1}
\end{align}
Let $\phi^{[2]}_{h,k}(\xi_k)$ be explicit Euler numerical flow of $f^{[2]}(\xi(t), t)$:
\begin{align}
    \phi^{[2]}_{h,k}(\xi_k) &= \xi_k + h f^{[2]}(\xi_k, t_k) \notag \\
    &= \xi_k + h \cvec{0_{d \times d} & 0_{d \times d} \\ - B(t_k) & - \frac{1 - \mu}{\sqrt{\eta}} I_{d \times d}} \xi_k \label{eqn:explicit-sys-2}
\end{align}
The composed flow
\begin{equation} \label{eqn:split-numerical-flow}
    \phi_{h,k} \defeq \phi^{[1]}_{h,k} \circ \phi^{[2]}_{h,k}
\end{equation}
is a sequentially split operator, which has splitting error order 1 because \eqref{eqn:overall-system-ode} is time-varying linear system \citep{farago2011ontheorder}.  The operators being composed, implicit and explicit Euler, are both order 1 consistent with their respective systems \citep{iserles2008afirstcourse}.  The overall order of consistency of a split operator is the minimum of splitting error, and the orders of the composed flows \citep{csomos2008error}.  So we have that \eqref{eqn:split-numerical-flow} approximates \eqref{eqn:overall-system-ode} with order 1 consistency.

{\em (Part 2: SGDm Equivalency)}

We will now show that \eqref{eqn:split-numerical-flow} is equivalent to SGDm when integration timestep $h = \sqrt{\eta}$, where $\eta$ is the SGDm step-size.  We start with the definition of $\xi_{k+1}$ as the numerical flow of $\xi_k$:
\begin{align*}
    \xi_{k+1} &\defeq \phi_{h,k}(\xi_k) \\
    &= \phi^{[1]}_{h,k} ( \phi^{[2]}_{h,k} ( \xi_k ) ) & \text{by \eqref{eqn:split-numerical-flow}}\\
    &= \phi^{[2]}_{h,k} ( \xi_k ) + h \cvec{0_{d \times d} & I_{d \times d} \\ 0_{d \times d} & 0_{d \times d}} \phi^{[1]}_{h,k} ( \phi^{[2]}_{h,k} ( \xi_k ) ) & \text{subst'n from \eqref{eqn:implicit-sys-1}} \\
    &= \phi^{[2]}_{h,k} ( \xi_k ) + h \cvec{0_{d \times d} & I_{d \times d} \\ 0_{d \times d} & 0_{d \times d}} \xi_{k+1} & \text{def'n of $\xi_{k+1}$} \\
    &= \phi^{[2]}_{h,k} ( \xi_k ) + h \cvec{\dot{\theta}_{k+1} \\ 0_d} & \text{matrix mult.} \\
    &= \left( \xi_k + h \cvec{0_{d \times d} & 0_{d \times d} \\ - B(t_k) & - \frac{1 - \mu}{\sqrt{\eta}} I_{d \times d}} \xi_k \right) + h \cvec{\dot{\theta}_{k+1} \\ 0_d} & \text{subst'n from \eqref{eqn:explicit-sys-2}} \\
    &= \left( \xi_k + h \cvec{0_d \\ -B(t_k) (\theta_k - \theta^*) - \frac{1 - \mu}{\sqrt{\eta}} \dot{\theta}_k} \right) + h \cvec{\dot{\theta}_{k+1} \\ 0_d} & \text{matrix mult.} \\
    \xi_{k+1} &= \xi_k + h \cvec{\dot{\theta}_{k+1}  \\ -B(t_k) (\theta_k - \theta^*) - \frac{1 - \mu}{\sqrt{\eta}} \dot{\theta}_k} & \text{simplification}
\end{align*}
Recalling that $\xi_k = \cvec{\theta_k - \theta^* \\ \dot{\theta}_k}$, the above immediately provides the following two recurrence relations:
\begin{align*}
    \theta_{k+1} &= \theta_k + h \dot{\theta}_{k+1} &
    \dot{\theta}_{k+1} &= \dot{\theta}_k + h \left( -B(t_k) (\theta_k - \theta^*) - \frac{1 - \mu}{\sqrt{\eta}} \dot{\theta}_k \right)
\end{align*}
Now let integration time step $h$ be the square root of SGDm step-size, $h = \sqrt{\eta}$, and define $v_k = \sqrt{\eta}\dot{\theta_k}$.  Substituting $h, v_k$, we proceed via elementary algebra:
\begin{align*}
    \theta_{k+1} &= \theta_k + \sqrt{\eta} \left( \frac{v_{k+1}}{\sqrt{\eta}} \right) &
    \left( \frac{v_{k+1}}{\sqrt{\eta}} \right) &= \left( \frac{v_k}{\sqrt{\eta}} \right) + \sqrt{\eta} \left( -B(t_k) \theta_k - \frac{1 - \mu}{\sqrt{\eta}} \left( \frac{v_k}{\sqrt{\eta}} \right) \right) \\
    \theta_{k+1} &= \theta_k + v_{k+1} &
    v_{k+1} &= v_k + \eta \left( -B(t_k) (\theta_k - \theta^*) - \frac{(1 - \mu) v_k}{\eta} \right) \\
    &&
    v_{k+1} &= v_k - \eta B(t_k) (\theta_k - \theta^*) - (1 - \mu) v_k \\
    &&
    v_{k+1} &= \mu v_k - \eta B(t_k) (\theta_k - \theta^*) \\
\end{align*}
Note that $B(t_k) = B_k$, and that $B_k$ defines the time-varying gradients induced by covariate shift and linear regression to a linear target (Proposition \ref{thm:cov-shift-gives-ltv-grads}).  Hence, under those conditions, we have arrived at SGDm \eqref{sgdm}.  The proof is complete.

\end{proof}

The reader may observe that, given a solution $\xi(t)$ to the system $\dot{\xi}(t) = A(t)\xi(t)$, the proof above shows that the SGDm iterates $\{\theta_k\}$ are precisely a first order numerical approximation of the first $d$ dimensions of $\xi(t) = \cvec{\theta(t) - \theta^* \\ \dot{\theta}(t)}$, but the remaining $d$ dimensions are approximated only up to a scale factor $\sqrt{\eta}$ by iterates $\{v_k\}$.  However, this is not an issue.  Solutions $\theta(t)$ to the system \eqref{eqn:gradient-flow} are embedded within the first $d$ dimensions of $\xi(t)$, so the remaining $d$ dimensions and iterates $\{v_k\}$ do not affect the result.

\subsubsection{Parametric Resonance for ODE Convergence and Divergence}

\begin{theorem}
    When $B(t)$ is periodic such that $B(t) = B(t + T)$, the spectral radius $\rho$ of $\psi(T)$ characterizes the stability of solution trajectories of \eqref{eqn:gradient-flow} as follows:
    \begin{itemize}
        \item $\rho > 1 \implies$ trivial solution $\theta(t) = \theta^*$ is unstable.  All other solutions diverge as $\theta(t) \to \infty$ exponentially with rate $\rho$.
        \item $\rho < 1 \implies$ trivial solution is asymptotically stable, all other solutions converge as $\theta(t) \to \theta^*$ exponentially with rate $\rho$.
    \end{itemize}
\end{theorem}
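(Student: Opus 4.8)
This is an instance of Floquet theory, so the plan is to cast \eqref{eqn:gradient-flow} in the first-order periodic form \eqref{eqn:ltv-first-order}, $\dot{\xi}(t) = A(t)\xi(t)$ with $\xi = \cvec{\theta(t) - \theta^* \\ \dot{\theta}(t)}$, and note that periodicity of $B(t)$ forces $A(t) = A(t+T)$. Let $\psi(t)$ be the fundamental solution matrix normalized by $\psi(0) = I_{2d}$. Two facts drive everything. First, uniqueness of solutions for the (piecewise-)continuous coefficient $A(t)$ — available because $B(t)$ is Lipschitz continuous — gives the cocycle identity $\psi(t + T) = \psi(t)\,\psi(T)$, hence $\psi(t + nT) = \psi(t)\, M^n$ with monodromy matrix $M \defeq \psi(T)$ and spectral radius $\rho$. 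Second, on the compact interval $[0,T]$ the continuous map $\psi$ is bounded, $\sup_{t \in [0,T]} \|\psi(t)\| \le C_0 < \infty$. Every solution is $\xi(t) = \psi(t)\xi_0$ and its first $d$ coordinates are exactly $\theta(t) - \theta^*$, so it suffices to control $\|\xi(t)\|$ through powers of $M$.

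\textbf{Case $\rho < 1$.} Here I would use that the Jordan form of $M$ (equivalently Gelfand's formula) yields, for any $\epsilon > 0$ with $\rho + \epsilon < 1$, a constant $C_\epsilon$ with $\|M^n\| \le C_\epsilon (\rho + \epsilon)^n$ for all $n$. Writing $t = nT + s$ with $s \in [0,T)$ and $n = \lfloor t/T \rfloor$, this gives $\|\xi(t)\| = \|\psi(s)\,M^n\,\xi_0\| \le C_0 C_\epsilon (\rho + \epsilon)^{t/T - 1}\|\xi_0\|$, so $\theta(t) \to \theta^*$ exponentially for every initial condition, with per-period rate $\rho$ (the polynomial-in-$n$ prefactors coming only from nontrivial Jordan blocks); asymptotic stability of the trivial solution $\theta(t) \equiv \theta^*$ is then immediate.

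\textbf{Case $\rho > 1$.} I would pick an eigenpair $(\lambda, v)$ of $M$ with $|\lambda| = \rho > 1$ (using the real and imaginary parts of $v$ if $\lambda \notin \reals$). The solution with $\xi(0) = v$ satisfies $\xi(nT) = M^n v = \lambda^n v$, so $\|\xi(nT)\| = \rho^n \|v\| \to \infty$; rescaling $v$ arbitrarily small shows $\theta(t) \equiv \theta^*$ is Lyapunov-unstable, and any solution with a nonzero component along the Floquet subspace of $\lambda$ has $\|\theta(t) - \theta^*\|$ growing like $\rho^{t/T}$, i.e.\ exponentially with rate $\rho$. This is precisely the parametric-resonance condition; the blanket statement that \emph{all} other solutions diverge is the standard presentation in the parametric-oscillator literature \citep{halanay1966diffeqCH1}.

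\textbf{Main obstacle.} The conceptual content is light; it is the bookkeeping that needs care. The cocycle identity $\psi(t+T) = \psi(t)\psi(T)$ must be justified from existence–uniqueness for a merely piecewise-continuous $A(t)$ (which is the relevant regularity once $B(t)$ comes from a switching covariate mean), and converting ``spectral radius $\rho$'' into a clean exponential rate requires absorbing the Jordan-block polynomial factors, which is why the theorem asserts an exponential rate rather than an exact one. The only genuinely delicate point is the ``all other solutions diverge'' clause when $\rho > 1$: Liouville's formula gives $\det M = \exp\!\big(\int_0^T \Tr A(s)\,ds\big) = \exp\!\big(-\tfrac{d(1-\mu)}{\sqrt{\eta}}\,T\big) < 1$ for $\mu \in (0,1)$, so $M$ cannot have \emph{all} multipliers outside the unit disk; I would therefore either restrict that clause to initial data with a nonzero unstable component, or defer it to the cited dynamical-systems references while keeping the rigorous content as ``$\rho > 1 \Rightarrow$ the trivial solution is unstable and a residual set of solutions diverges at rate $\rho$.''
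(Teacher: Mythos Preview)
Your proposal is correct and takes essentially the same Floquet-theoretic route as the paper --- reduce \eqref{eqn:gradient-flow} to the first-order periodic system $\dot{\xi}=A(t)\xi$ via $\xi=\cvec{\theta-\theta^*\\\dot{\theta}}$, identify the monodromy $M=\psi(T)$, and read off stability from its spectral radius. The difference is one of depth: the paper's proof simply verifies the hypotheses (periodicity, piecewise continuity of $A(t)$) and then invokes Theorems~1.9--1.10 of \citep{halanay1966diffeqCH1} as a black box, whereas you unpack the Floquet machinery yourself (cocycle identity, Gelfand bound on $\|M^n\|$, eigenvector argument for instability). Your version is more self-contained and makes the exponential rate $\rho$ explicit; the paper's is shorter but opaque to a reader without the reference.

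You also catch something the paper does not address. Your Liouville computation $\det M=\exp\!\big(-\tfrac{d(1-\mu)}{\sqrt{\eta}}T\big)<1$ is correct and shows that when $\rho>1$ there must still be multipliers inside the unit disk, so the clause ``all other solutions diverge'' is literally false for initial data lying entirely in the stable Floquet subspace. The paper's proof inherits this imprecision directly from the cited theorem and does not flag it; your proposed resolution (restrict to initial data with a nonzero unstable component, or soften to ``a residual set of solutions diverges'') is the right fix and is worth stating explicitly.
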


\begin{proof}
The proof of this theorem relies heavily on the well-established mathematics of Floquet theory and the stability result is contained in Theorem 1.9 and Theorem 1.10 of \citep{halanay1966diffeqCH1}. Theorem 1.10 states that for a system of differential equations of the form
\begin{equation}  \label{eqn:ltv-periodic-system}
    \dot{\xi} = A(t)\xi, \quad \quad A(t + T) = A(t)
\end{equation}
where $A(t)$ is piecewise continuous, the stability of the trivial solution $\xi(t) \equiv 0$ is determined by the spectral radius of the system's {\em monodromy matrix} $M$ defined below
\begin{align*}
    &M = \psi^{-1}(0)\psi(T) & \text{where } \dot{\psi}(t) = A(t) \psi(t)
\end{align*}
{\em (Monodromy Matrix)}

Matrix-valued functions $\psi(t)$ are the system's {\em fundamental solution matrix}, and elementary existence results for linear systems allow one to choose a $\psi(t)$ such that $\psi(0) = I$ so that the monodromy matrix simplifies as
\begin{align*}
    M &= \psi^{-1}(0) \psi(T) \\
    &= I^{-1} \psi(T) \\
    &= I \psi(T) \\
    M &= \psi(T)
\end{align*}
Below we denote the spectral radius of $\psi(T)$ (and hence of $M$) as $\rho$.

{\em (First Order Linear Form, Stability via Floquet)}

Below we show that \eqref{eqn:gradient-flow} can be transformed to the form \eqref{eqn:ltv-periodic-system} with $A(t)$ continuous
such that trivial solution stability of \eqref{eqn:ltv-periodic-system} is equivalent to stability of the solution $\theta(t) = \theta^*$.

Let $\xi(t)$ be the phase space transformation of $\theta(t)$, similarly to the proof of Proposition \ref{thm:sgdm-integrates-tv-ode}
\begin{align*}
    \xi(t) \defeq \cvec{\theta(t) - \theta^* \\ \dot{\theta}(t)}
\end{align*}
so that for each $t$, $\theta(t) \in \reals^d$ and $\xi(t) \in \reals^{2d}$.  This means \eqref{eqn:gradient-flow} (restated below)
\begin{align*}
    \ddot{\theta}(t) + \frac{1 - \mu}{\sqrt{\eta}} \dot{\theta}(t) + B(t)(\theta(t) - \theta^*) = 0
\end{align*}
is equivalent to the following first order linear form
\begin{align*}
    &\dot{\xi}(t) = A(t) \xi(t) & \text{where } A(t) = \cvec{0_{d \times d} & I_{d \times d} \\ -B(t) & -\frac{1 - \mu}{\sqrt{\eta}} I_{d \times d}}
\end{align*}
Since $B(t)$ is periodic and continuous, which is stronger than the requisite piecewise continuity.  Since all other submatrices of $A(t)$ are constant, we have that $A(t)$ is also periodic and (piecewise) continuous.  Now Theorem 1.10 in \citep{halanay1966diffeqCH1} immediately provides the following

\begin{itemize}
    \item $\rho > 1 \implies$ trivial solution $\xi(t) = 0$ is unstable.  All other solutions diverge as $\xi(t) \to \infty$ exponentially with rate $\rho$.
    \item $\rho < 1 \implies$ trivial solution is asymptotically stable, all other solutions converge as $\xi(t) \to 0$ exponentially with rate $\rho$.
\end{itemize}

Since $\xi(t) = \cvec{\theta(t) - \theta^* \\ \dot{\theta}(t)}$:

\begin{itemize}
    \item The trivial solution $\xi(t) = 0$ is equivalent to $\theta(t) = \theta^*$
    \item $\xi(t) \to 0$ is equivalent to $\theta(t) \to \theta^*$
    \item $\xi(t) \to \infty$ is equivalent to $\theta(t) \to \infty$
\end{itemize}

The theorem is proved.

\end{proof}

\subsection{Detailed descriptions of experiments} \label{sec:experiment-details}
Below we provide details of the experiments from section \ref{sec:experiments} in the main body. The tables below contain details about the data generating process, learning algorithm and optimizer, and hyperparameter choices.  

Experiments \ref{exp:1-sinusoid-heatmap}-\ref{exp:4-signal-amplitude} introduce new data generating processes $\{X_k\}$ via their mean sequences $\{\mean_k\}$. For those experiments, a sample trajectory is visualized for several values of the process' frequency tuning parameter, either $f$ or $T$.  In addition to the sample trajectory, the frequency content of each process is shown in two ways: the power spectral density (PSD) of $\{X_k\}$ estimated via a long sample trajectory, as well as the PSD of the underlying mean sequence $\{\mean_k\}$.  As one would expect from the sampling relationship between $X_k$ and $\mean_k$, the PSD of $\{X_k\}$ is simply the PSD of $\{\mean_k\}$ with a noise floor.  

Experiments \ref{exp:5-adam}, \ref{exp:6-neural-net} reuse the process from experiment \ref{exp:4-signal-amplitude}, so the visual depiction is elided.

\clearpage
\subsubsection{Experiment \ref{exp:1-sinusoid-heatmap} Details}

\begin{figure}[ht]
    \centering
    \begin{subfigure}{0.45\textwidth}
        \includegraphics[width=\textwidth]{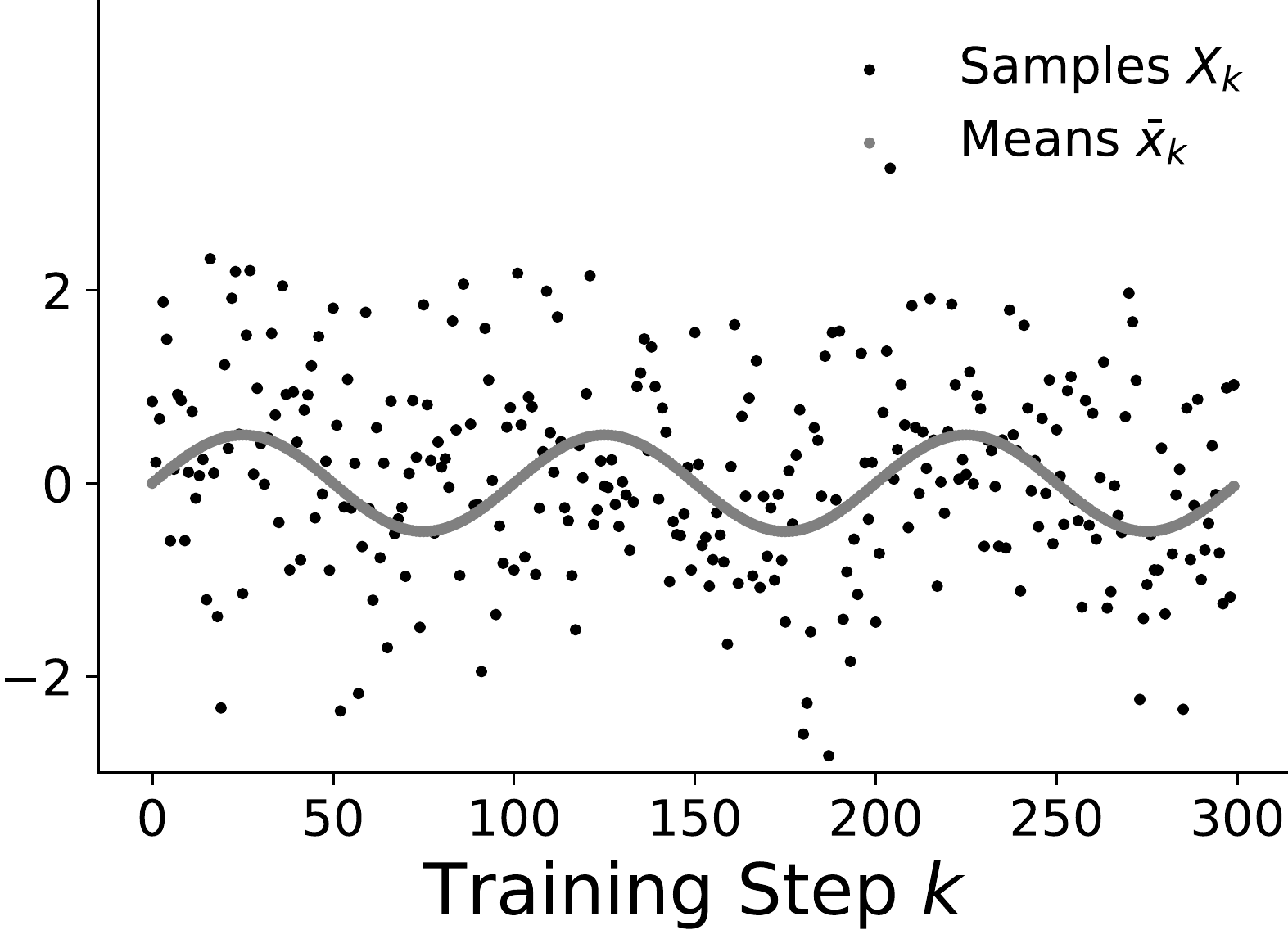}
        \caption{Sinusoidal mean $\{X_k\}$ sample trajectory for $f=0.01$}
    \end{subfigure}
    \hfill
    \begin{subfigure}{0.45\textwidth}
        \includegraphics[width=\textwidth]{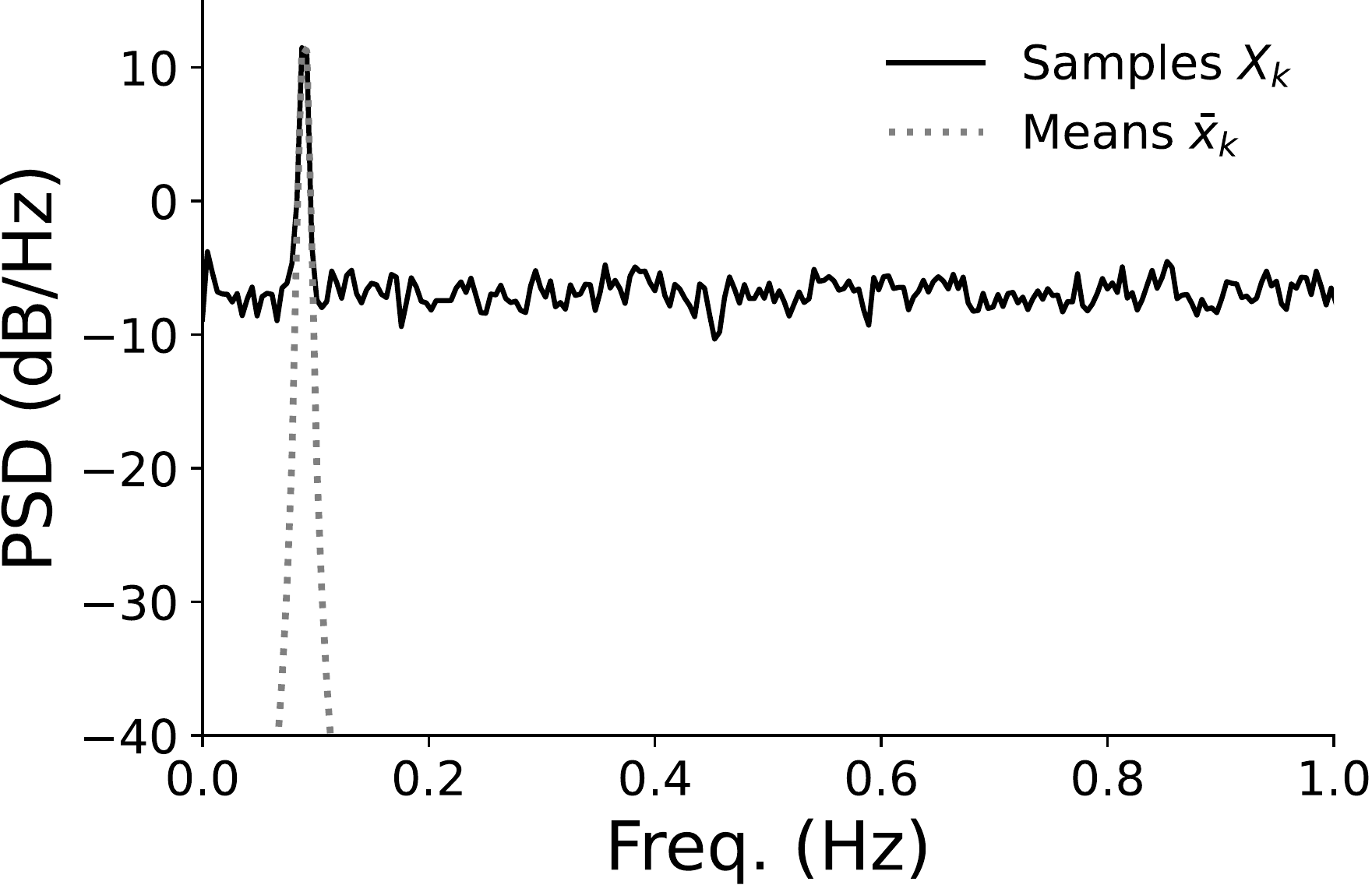}
        \caption{Sinusoidal mean $\{X_k\}$ frequency content for $f=0.01$}
    \end{subfigure}
    \hfill
    \begin{subfigure}{0.45\textwidth}
        \vspace{1cm}
        \includegraphics[width=\textwidth]{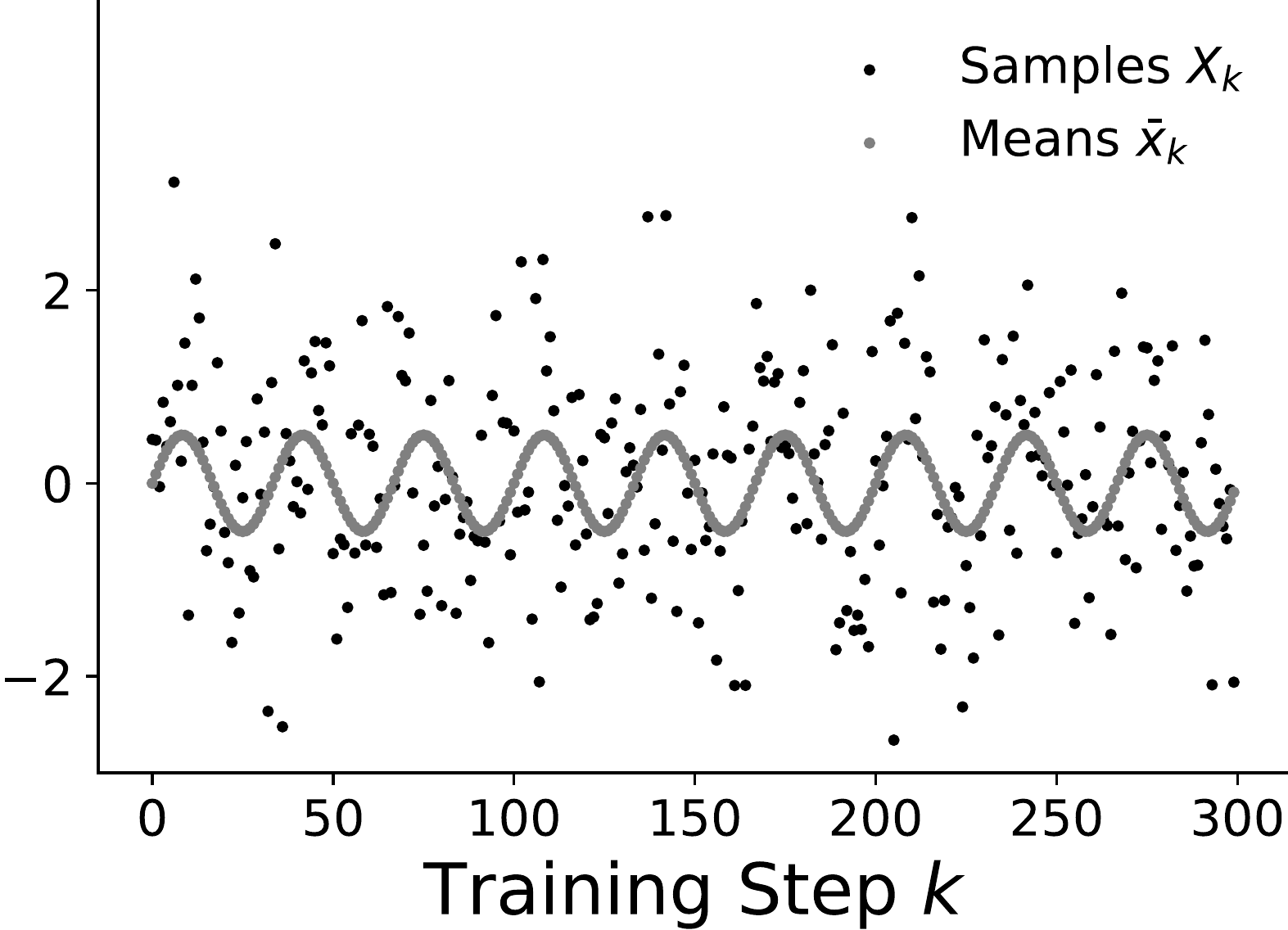}
        \caption{Sinusoidal mean $\{X_k\}$ sample trajectory for $f=0.03$}
    \end{subfigure}
    \hfill
    \begin{subfigure}{0.45\textwidth}
        \vspace{1cm}
        \includegraphics[width=\textwidth]{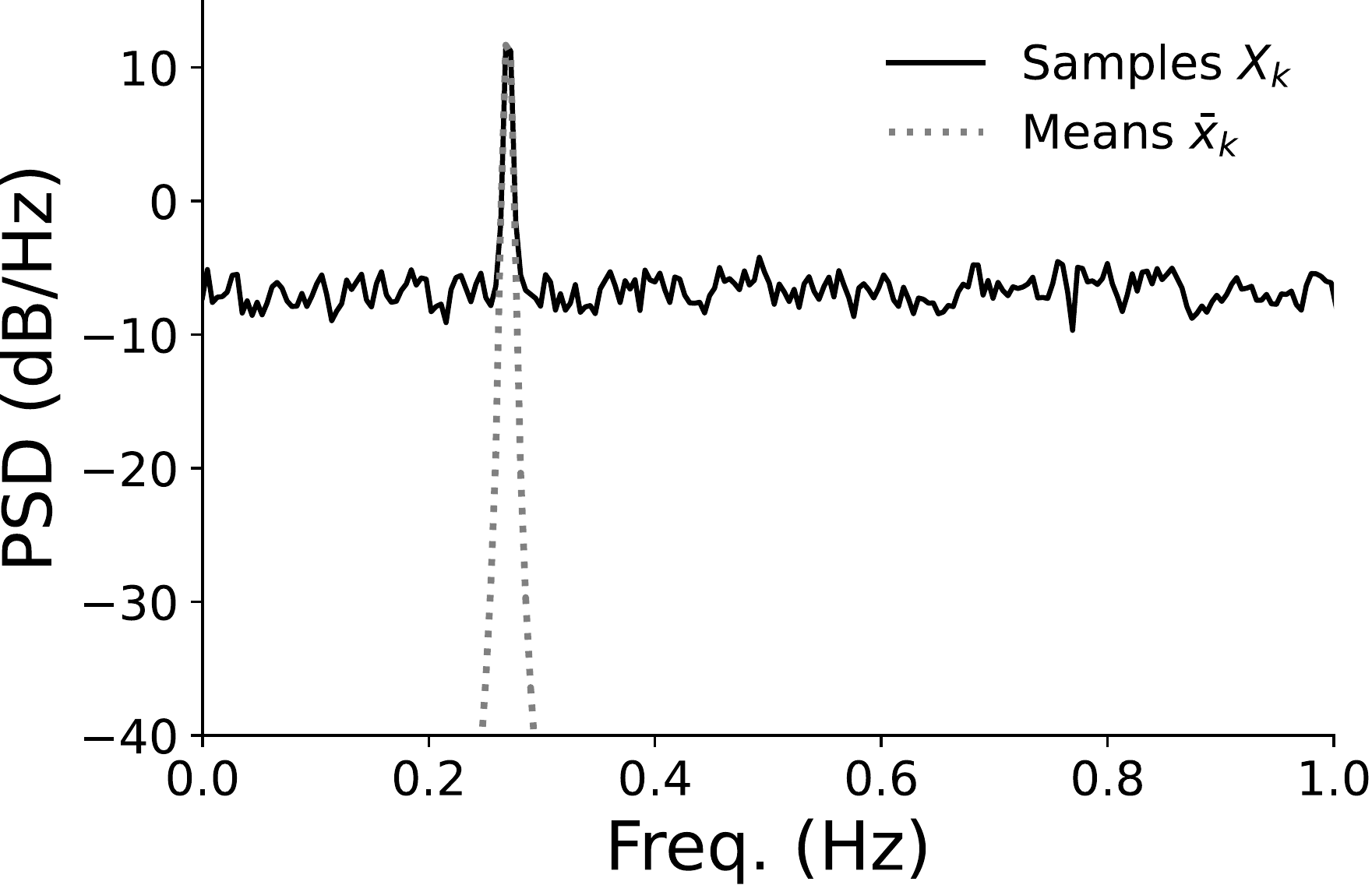}
        \caption{Sinusoidal mean $\{X_k\}$ frequency content for $f=0.03$}
    \end{subfigure}
    \hfill
    \begin{subfigure}{0.45\textwidth}
        \vspace{1cm}
        \includegraphics[width=\textwidth]{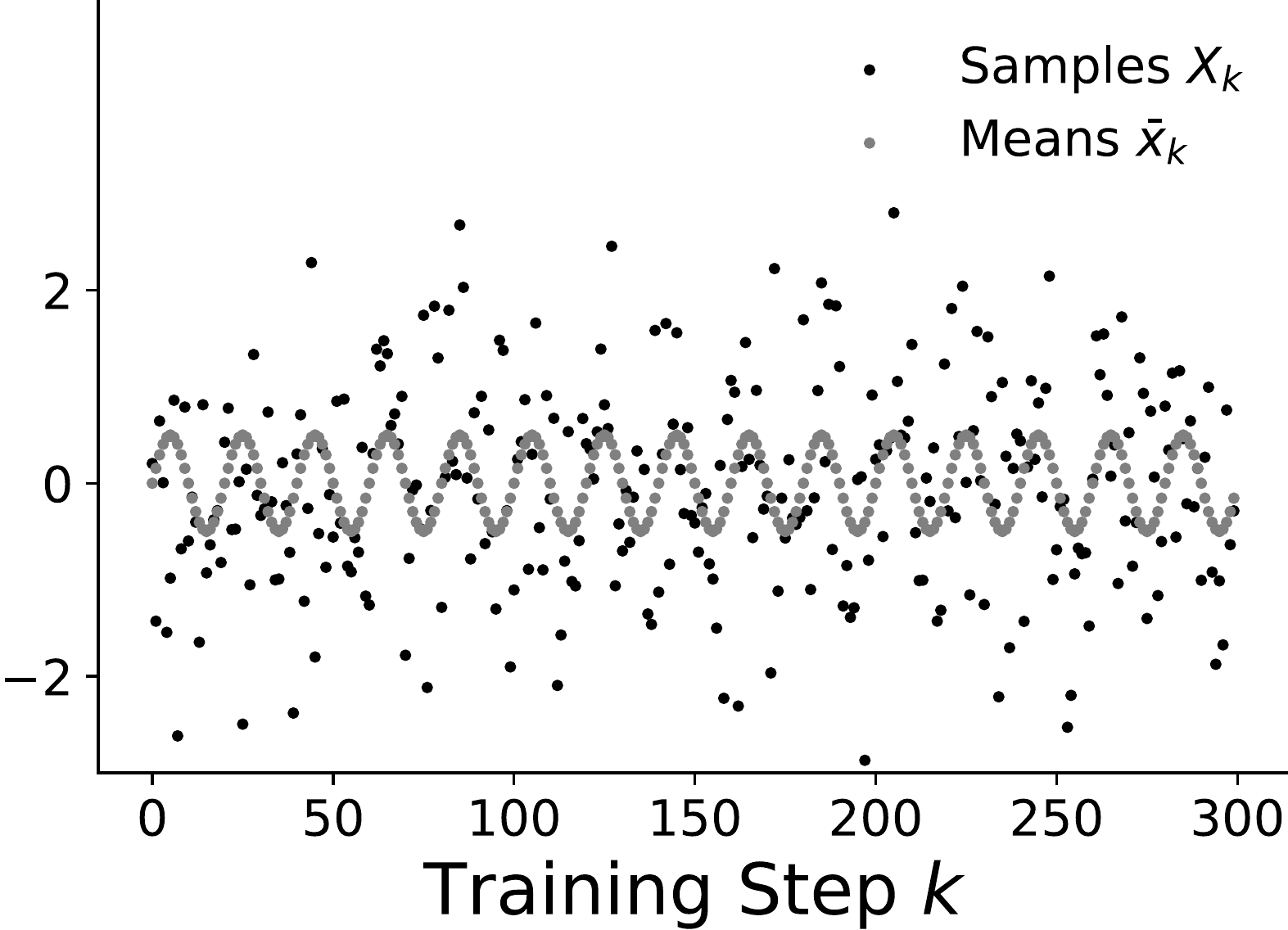}
        \caption{Sinusoidal mean $\{X_k\}$ sample trajectory for $f=0.05$}
    \end{subfigure}
    \hfill
    \begin{subfigure}{0.45\textwidth}
        \vspace{1cm}
        \includegraphics[width=\textwidth]{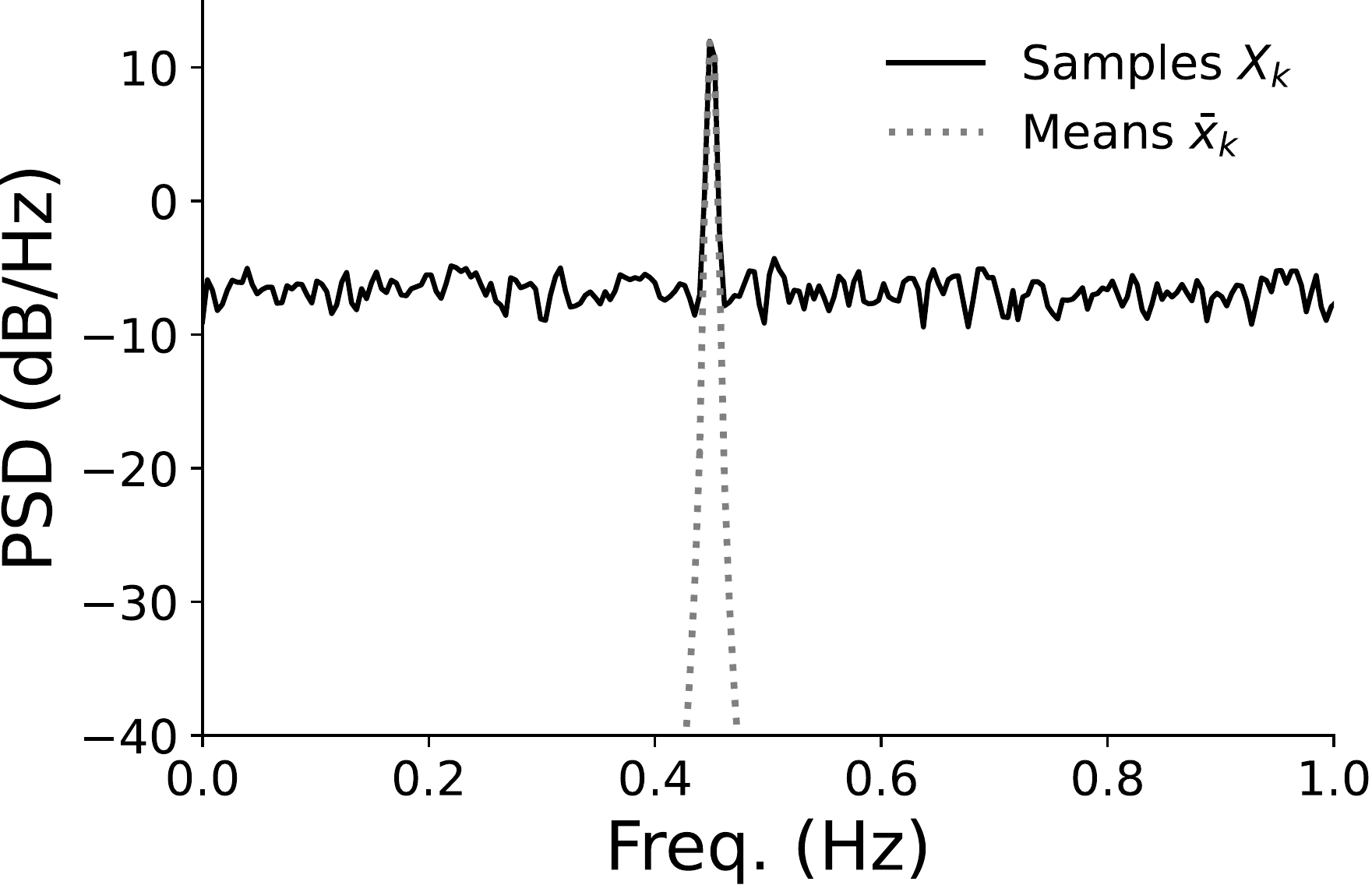}
        \caption{Sinusoidal mean $\{X_k\}$ frequency content for $f=0.05$}
    \end{subfigure}
    \hfill
    \caption{Sample trajectories (left) and power spectral densities (right) for Experiment \ref{exp:1-sinusoid-heatmap} (Table \ref{tab:linear-regression-sinusoid}) for three values of frequency tuning parameter $f$.  The signal is a sinusoid, so we observe a single frequency peak translated left or right by $f$.}
    \label{fig:psd-sinusoidal}
\end{figure}

\begin{table}[ht]
    \caption{Details for linear regression sinusoidal covariate shift problem.}
    \label{tab:linear-regression-sinusoid}
    \centering
    \begin{tabular}{p{0.4\textwidth}p{0.5\textwidth}}
        \toprule
        Sinusoid Mean Frequency &
        $f \in [0, 0.05]$ \\
        \midrule
        Covariate Shift Mean &
        $\mean_k = 0.5 \sin(2 \pi f k)$ \\
        \midrule
        Input Sampling &
        $X_k \sim \normal(\mean_k, 1)$ \\
        \midrule
        Target Function (Fixed $\forall k$) &
        $\begin{aligned}
            &Y_k = \theta^*_1 X_k + \theta^*_2 \\
            &\theta^*_1, \theta^*_2 \sim \text{Uniform}[-1, 1]
        \end{aligned}$ \\
        \midrule
        Model and Optimizer&
        $\begin{aligned}
            &\widehat{Y_k} = \theta_{k,1} X_k + \theta_{k,2} \\
            &\theta_{0,1}, \theta_{0,2} \sim \text{Uniform}[-1, 1] \\
            &(\theta_k)_{k \in \naturals} \leftarrow \text{SGDm}(\eta, \mu) \\
            &\eta = 0.01, \mu \in [0.95, 0.999] \\
            &k \in [0, 10^4]
        \end{aligned}$ \\
        \bottomrule
    \end{tabular}
\end{table}

\clearpage
\subsubsection{Experiment \ref{exp:2-ar2-heatmap} Details}

\begin{figure}[ht]
    \centering
    \begin{subfigure}{0.45\textwidth}
        \includegraphics[width=\textwidth]{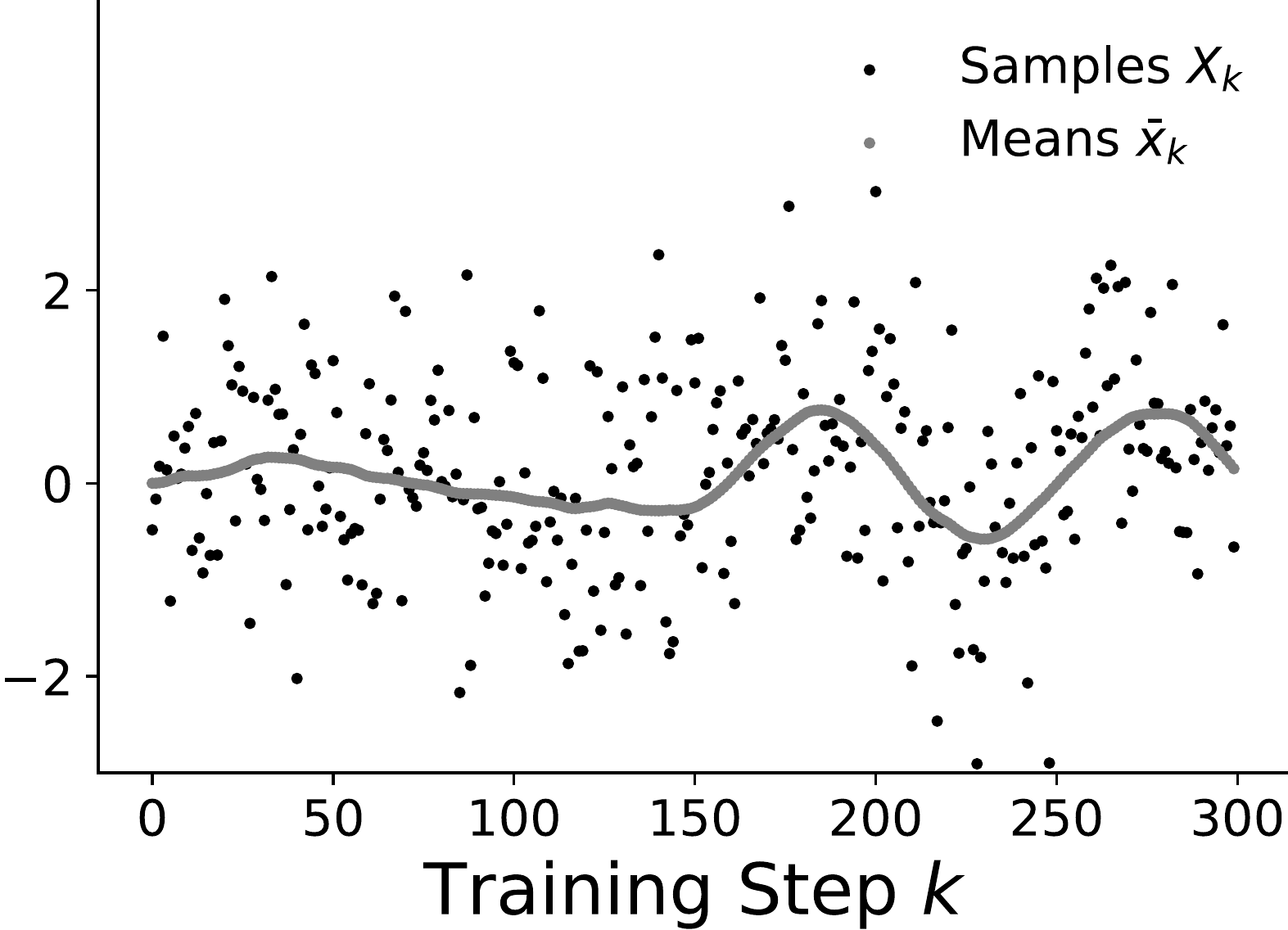}
        \caption{AR(2) mean $\{X_k\}$ sample trajectory for $f=0.01$}
    \end{subfigure}
    \hfill
    \begin{subfigure}{0.45\textwidth}
        \includegraphics[width=\textwidth]{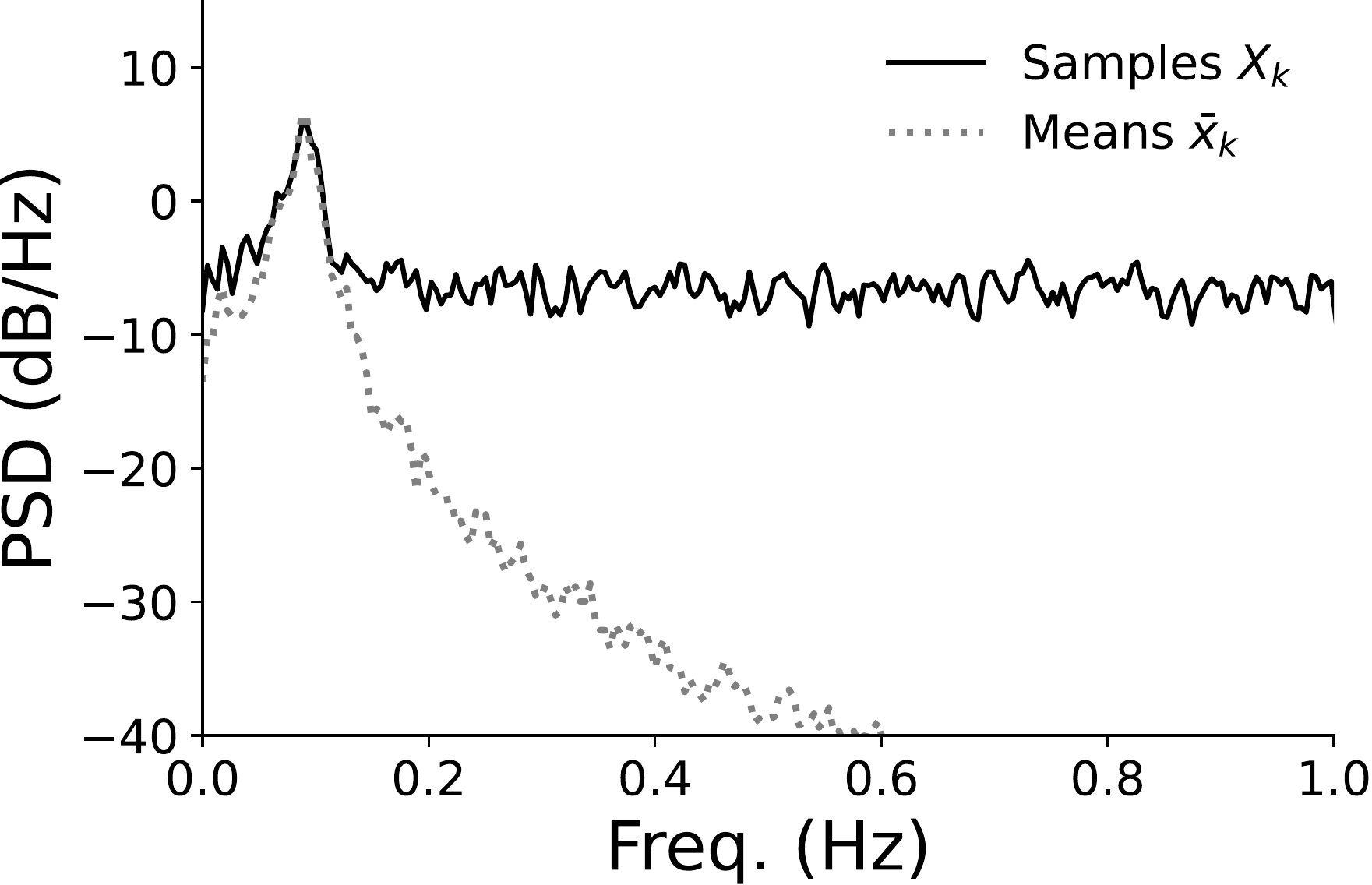}
        \caption{AR(2) mean $\{X_k\}$ frequency content for $f=0.01$}
    \end{subfigure}
    \hfill
    \begin{subfigure}{0.45\textwidth}
        \vspace{1cm}
        \includegraphics[width=\textwidth]{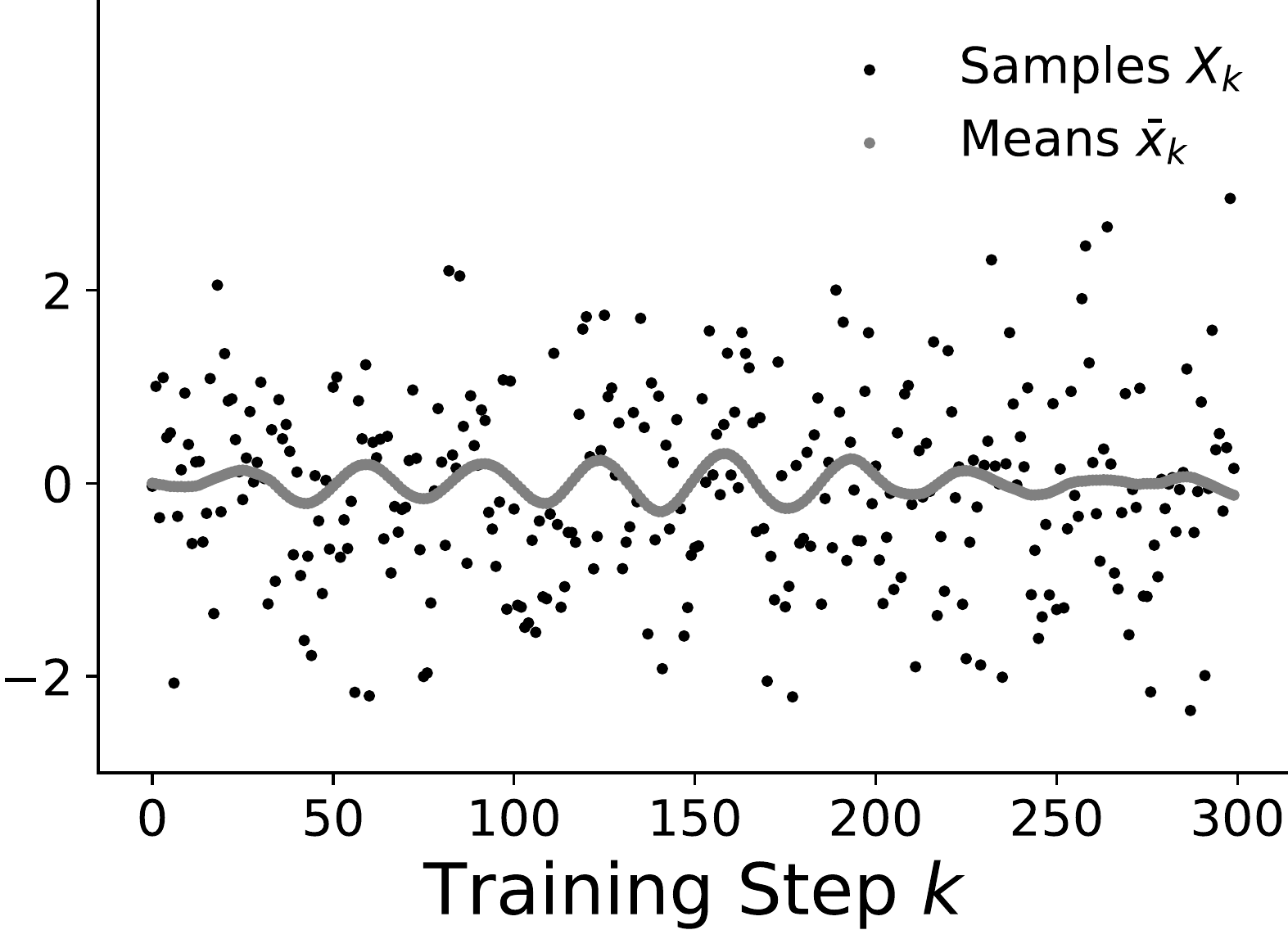}
        \caption{AR(2) mean $\{X_k\}$ sample trajectory for $f=0.03$}
    \end{subfigure}
    \hfill
    \begin{subfigure}{0.45\textwidth}
        \vspace{1cm}
        \includegraphics[width=\textwidth]{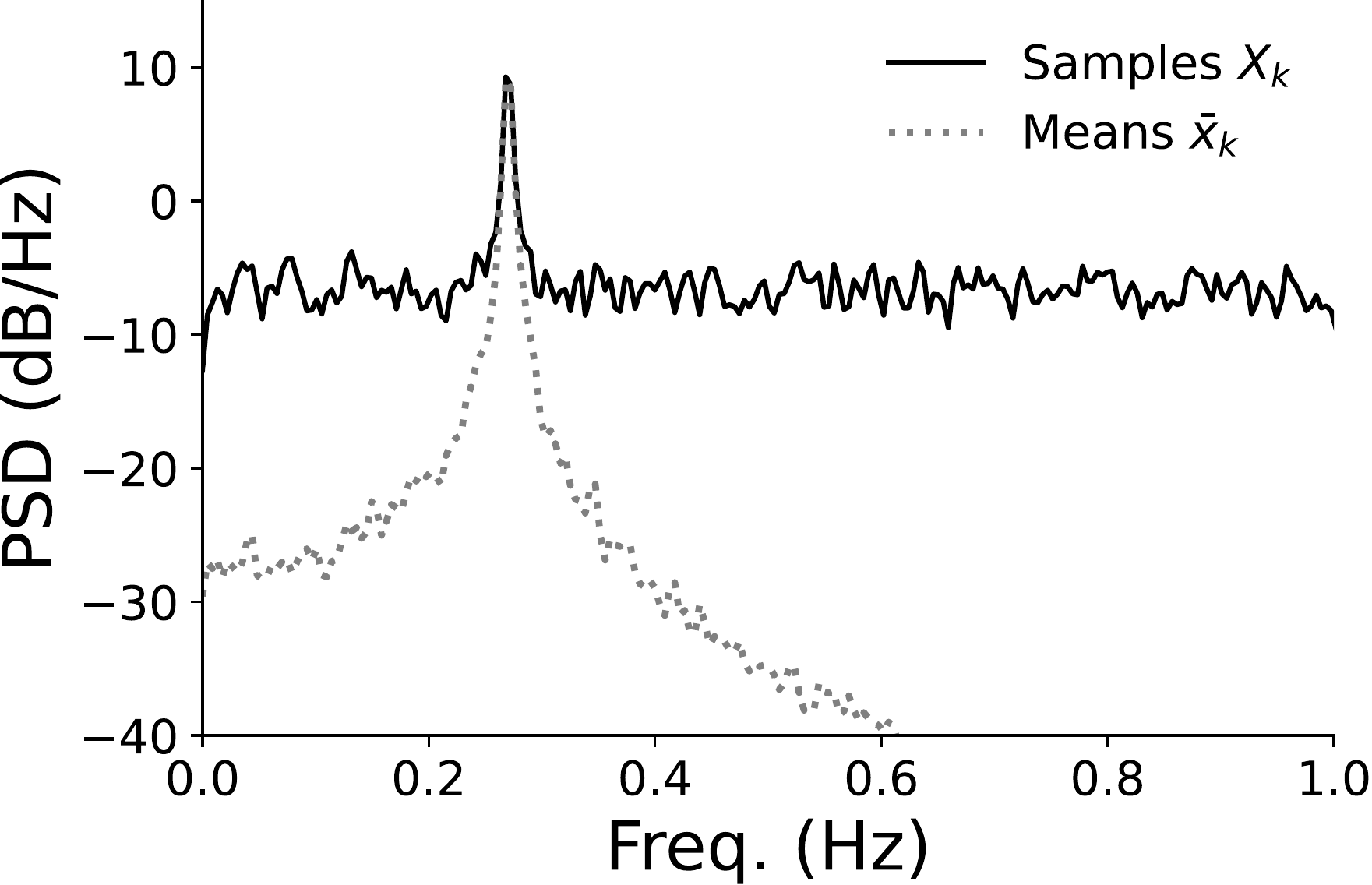}
        \caption{AR(2) mean $\{X_k\}$ frequency content for $f=0.03$}
    \end{subfigure}
    \hfill
    \begin{subfigure}{0.45\textwidth}
        \vspace{1cm}
        \includegraphics[width=\textwidth]{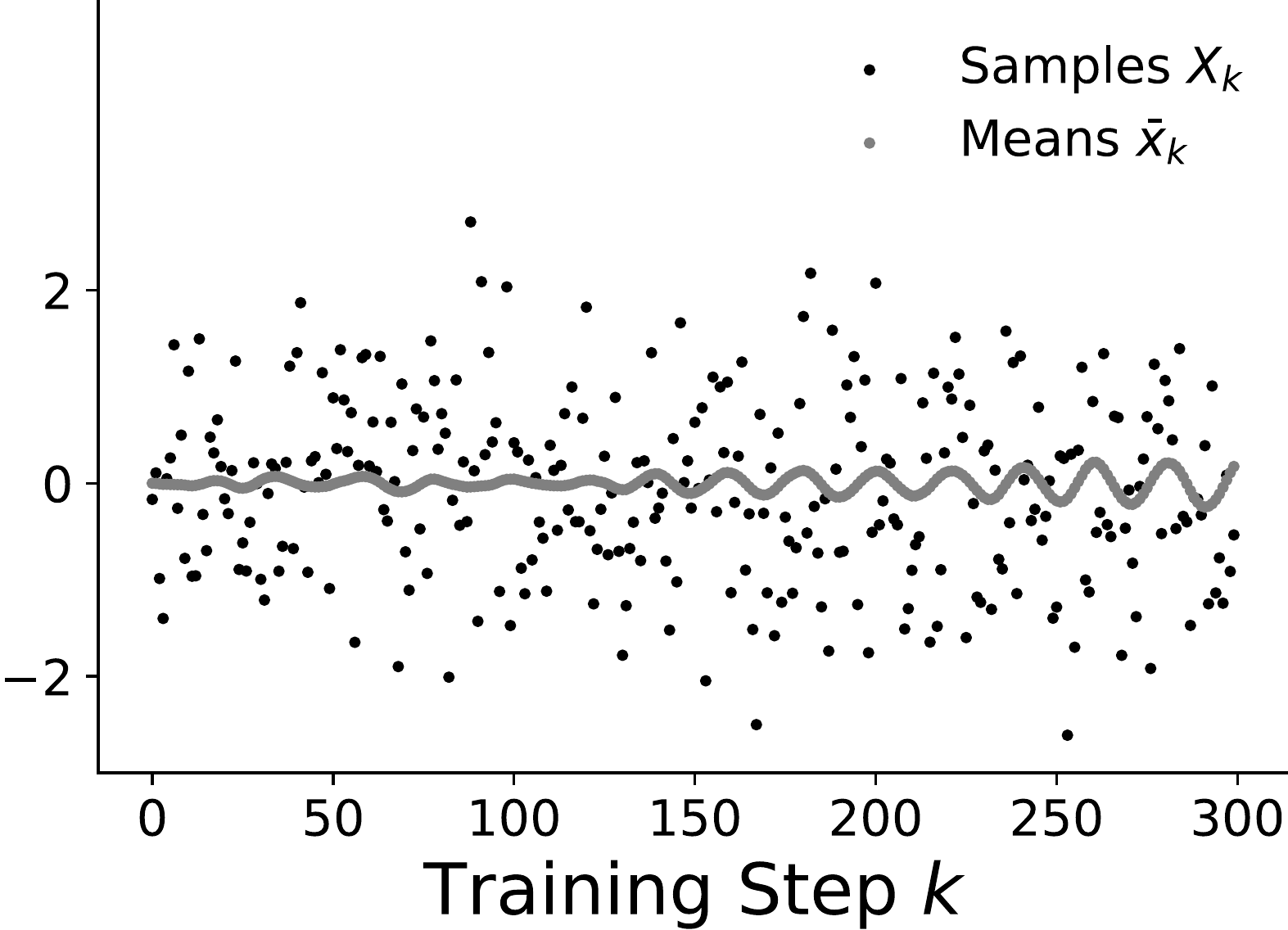}
        \caption{AR(2) mean $\{X_k\}$ sample trajectory for $f=0.05$}
    \end{subfigure}
    \hfill
    \begin{subfigure}{0.45\textwidth}
        \vspace{1cm}
        \includegraphics[width=\textwidth]{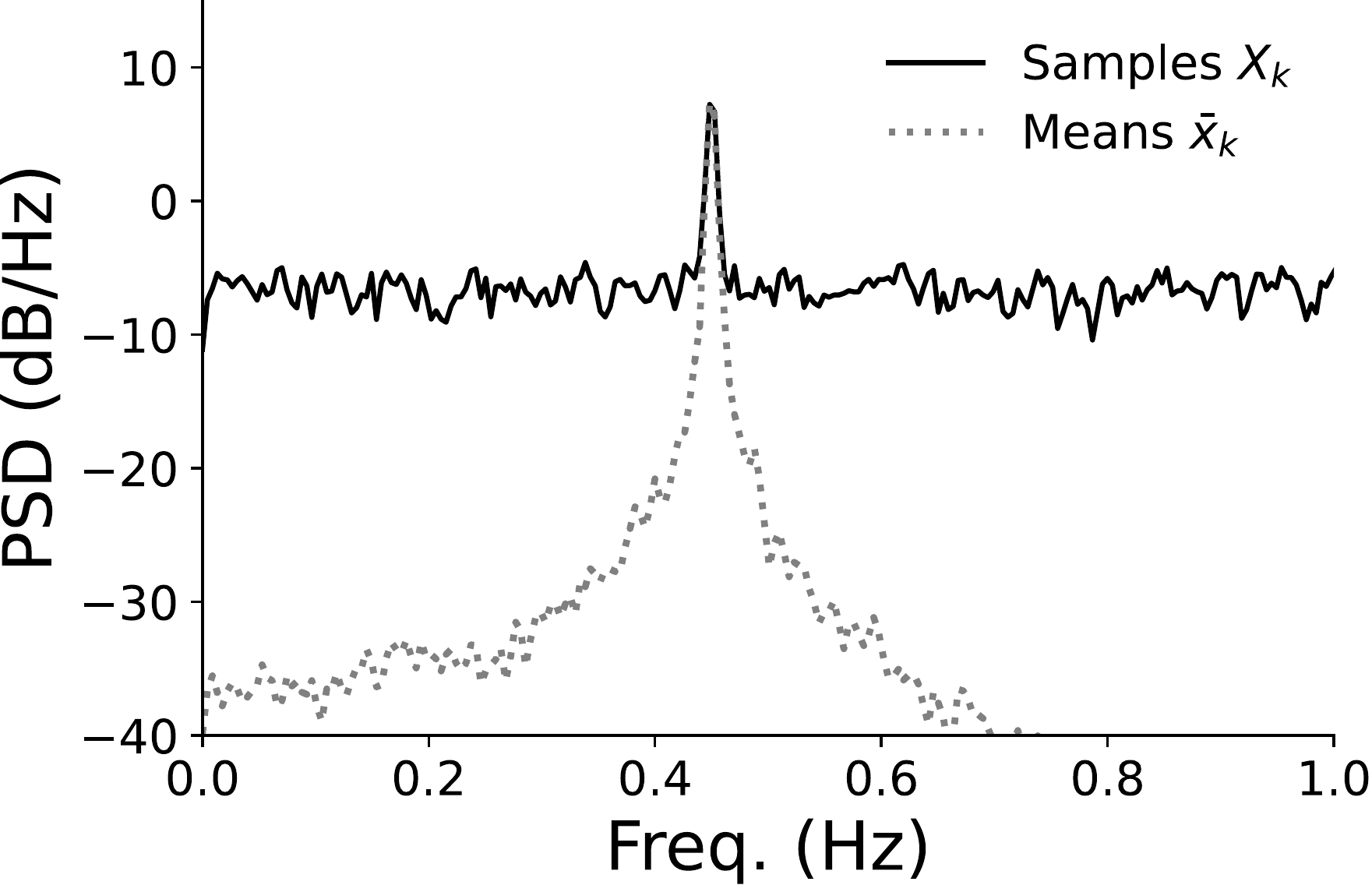}
        \caption{AR(2) mean $\{X_k\}$ frequency content for $f=0.05$}
    \end{subfigure}
    \hfill
    \caption{Sample trajectories (left) and power spectral densities (right) for Experiment \ref{exp:2-ar2-heatmap} (Table \ref{tab:linear-regression-ar2}) for three values of frequency tuning parameter $f$.  The signal is a  AR(2) process designed for a single frequency peak, as observed.  The process is stochastic, so its frequency peak is widened by the imperfect correlation.}
    \label{fig:psd-ar2}
\end{figure}

\begin{table}[ht]
    \caption{Details for linear regression sinusoidal covariate shift problem.  Rather than choosing a frequency $f$ and using it directly as in Table \ref{tab:linear-regression-sinusoid}, we use $f$ together with the stationary distribution variance $0.1$ to compute AR(2) coefficients $\phi_1, \phi_2$.}
    \label{tab:linear-regression-ar2}
    \centering
    \begin{tabular}{p{0.4\textwidth}p{0.5\textwidth}}
        \toprule
        Expected Dominant Freq. in $\mean_k$ &
        $f \in [0, 0.05]$ \\
        \midrule
        $\mean_k$ Stationary Dist. (Fixed $\forall f$) &
        $P =\normal(0, 0.1)$ \\
        \midrule
        Covariate Shift Mean &
        $\begin{aligned}
            &\mean_k = \phi_1 \mean_{k-1} + \phi_2 \mean_{k-2} + \xi_k \\
            &\xi_k \sim\normal(0, 10^{-5}) \; \text{iid} \\
            &\mean_1, \mean_2 \sim P \\
            &\phi_1 = \frac{4 \phi_2}{\phi_2 - 1} \cos(2 \pi f) \\
            &\phi_2 \; \text{s.t.} \; [\mean_k \, | \, \mean_{k-1} \sim P] \sim P
        \end{aligned}$ \\
        \midrule
        Remaining Parameters &
        $X_k, Y_k, \widehat{Y_k}, \theta^*, \theta_0, (\theta_k)_{k \in \naturals}, \eta, \mu, k$ same as Table \ref{tab:linear-regression-sinusoid} \\
        \bottomrule
    \end{tabular}
\end{table}

\clearpage
\subsubsection{Experiment \ref{exp:3-to-stochastic-updates} Details}

\begin{figure}[ht]
    \centering
    \begin{subfigure}{0.45\textwidth}
        \includegraphics[width=\textwidth]{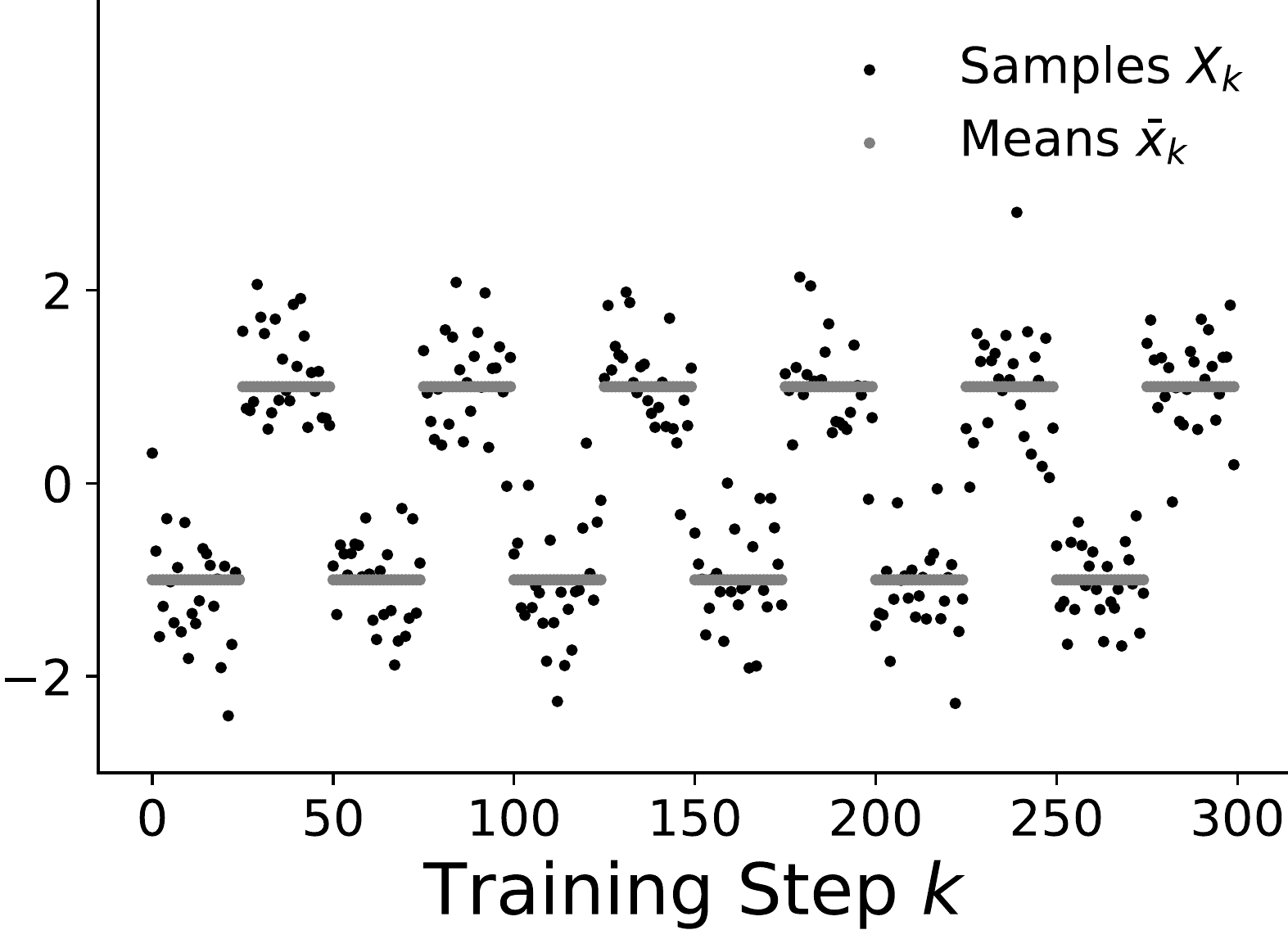}
        \caption{Periodic mean $\{X_k\}$ sample trajectory for $T=50$}
    \end{subfigure}
    \hfill
    \begin{subfigure}{0.45\textwidth}
        \includegraphics[width=\textwidth]{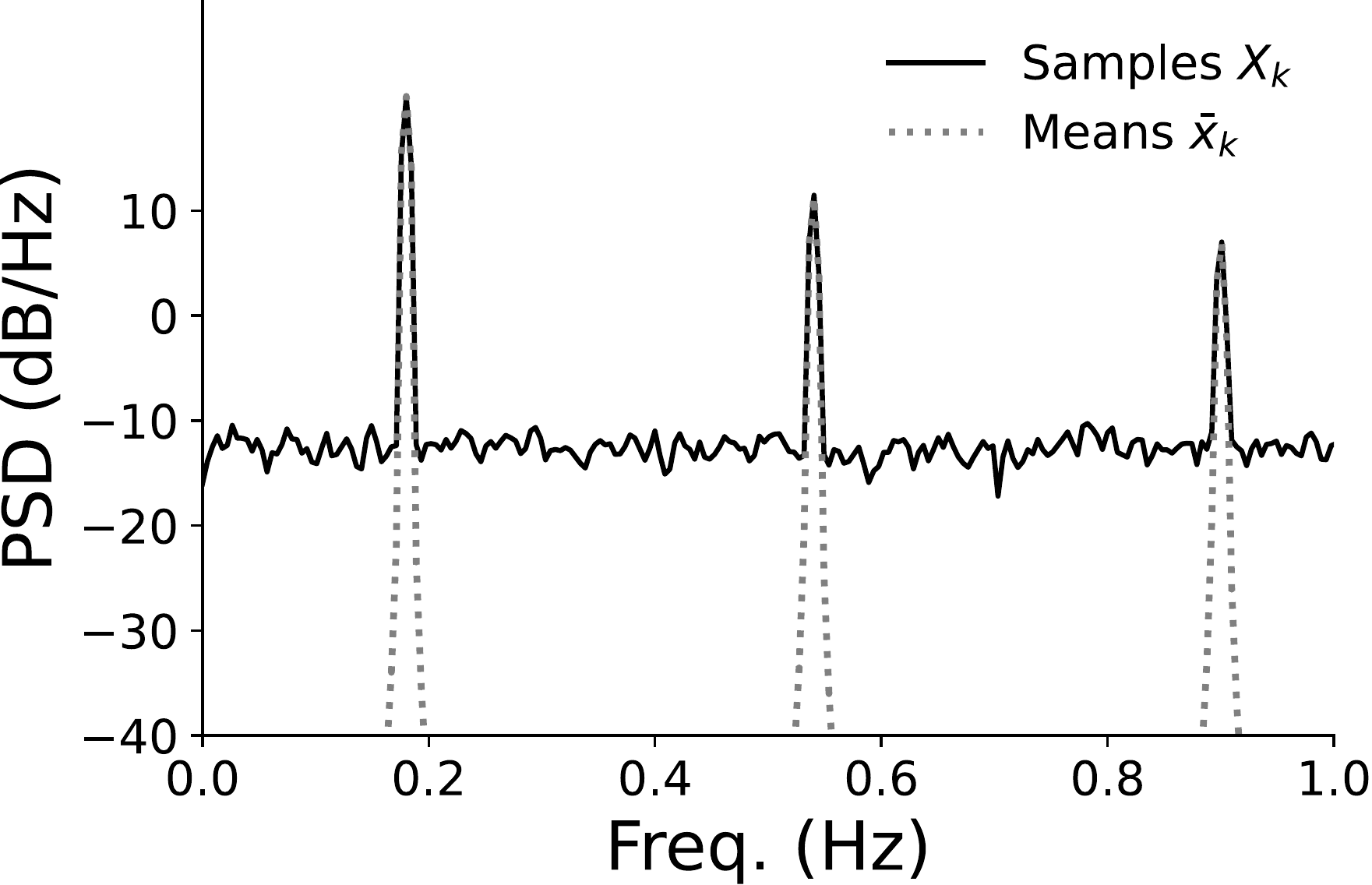}
        \caption{Periodic mean $\{X_k\}$ frequency content for $T=50$}
    \end{subfigure}
    \hfill
    \begin{subfigure}{0.45\textwidth}
        \vspace{1cm}
        \includegraphics[width=\textwidth]{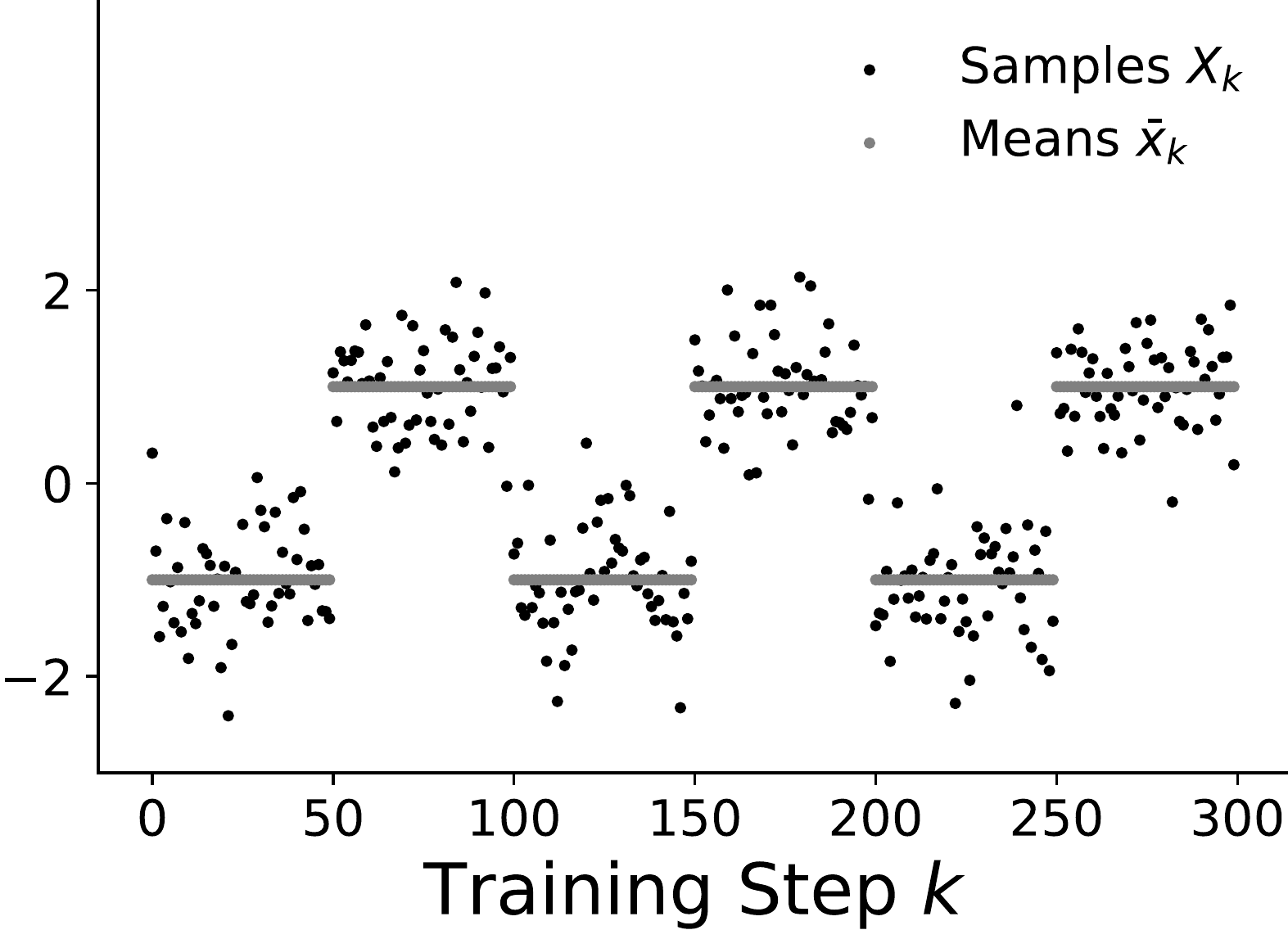}
        \caption{Periodic mean $\{X_k\}$ sample trajectory for $T=100$}
    \end{subfigure}
    \hfill
    \begin{subfigure}{0.45\textwidth}
        \vspace{1cm}
        \includegraphics[width=\textwidth]{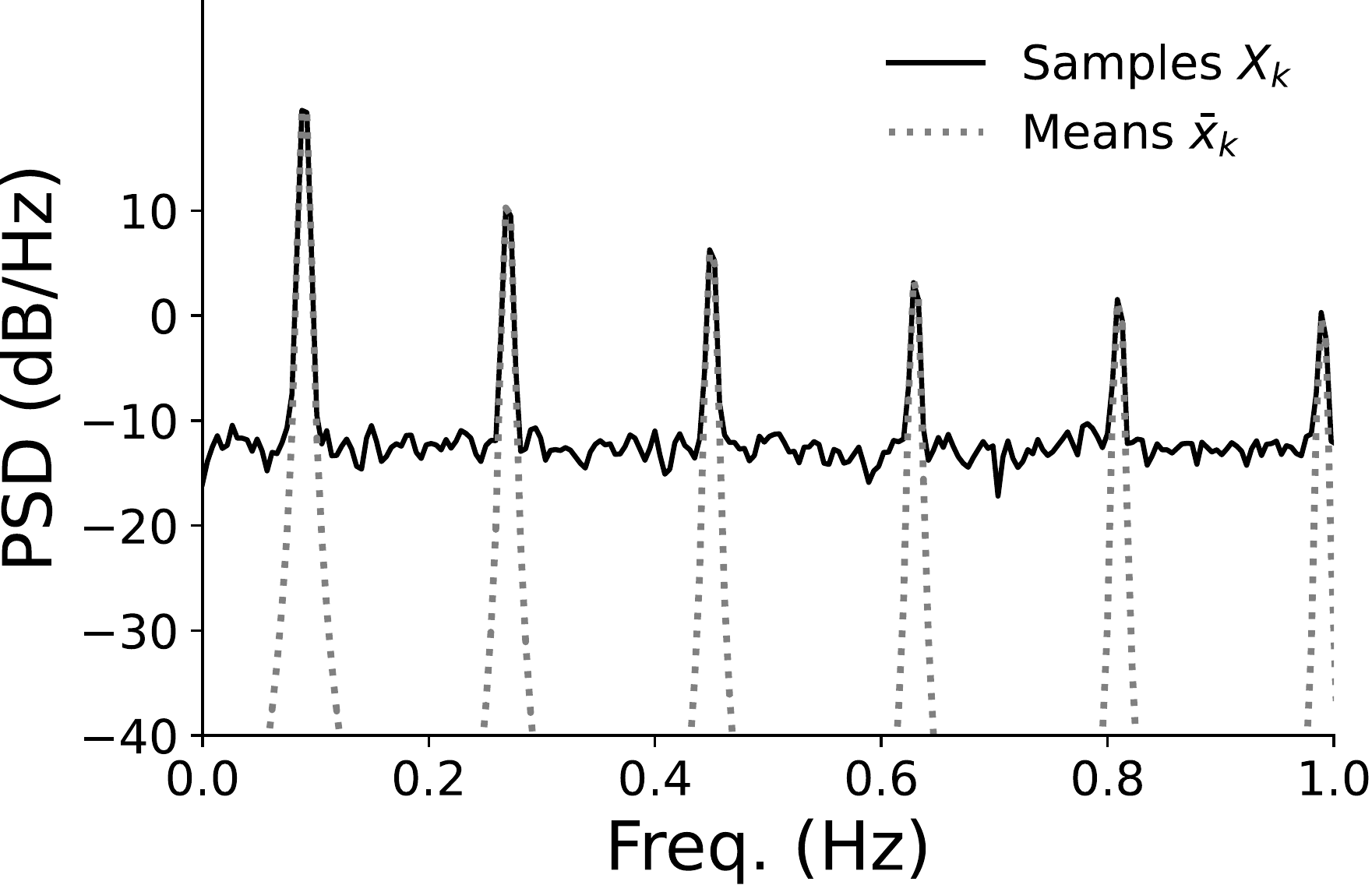}
        \caption{Periodic mean $\{X_k\}$ frequency content for $T=100$}
    \end{subfigure}
    \hfill
    \begin{subfigure}{0.45\textwidth}
        \vspace{1cm}
        \includegraphics[width=\textwidth]{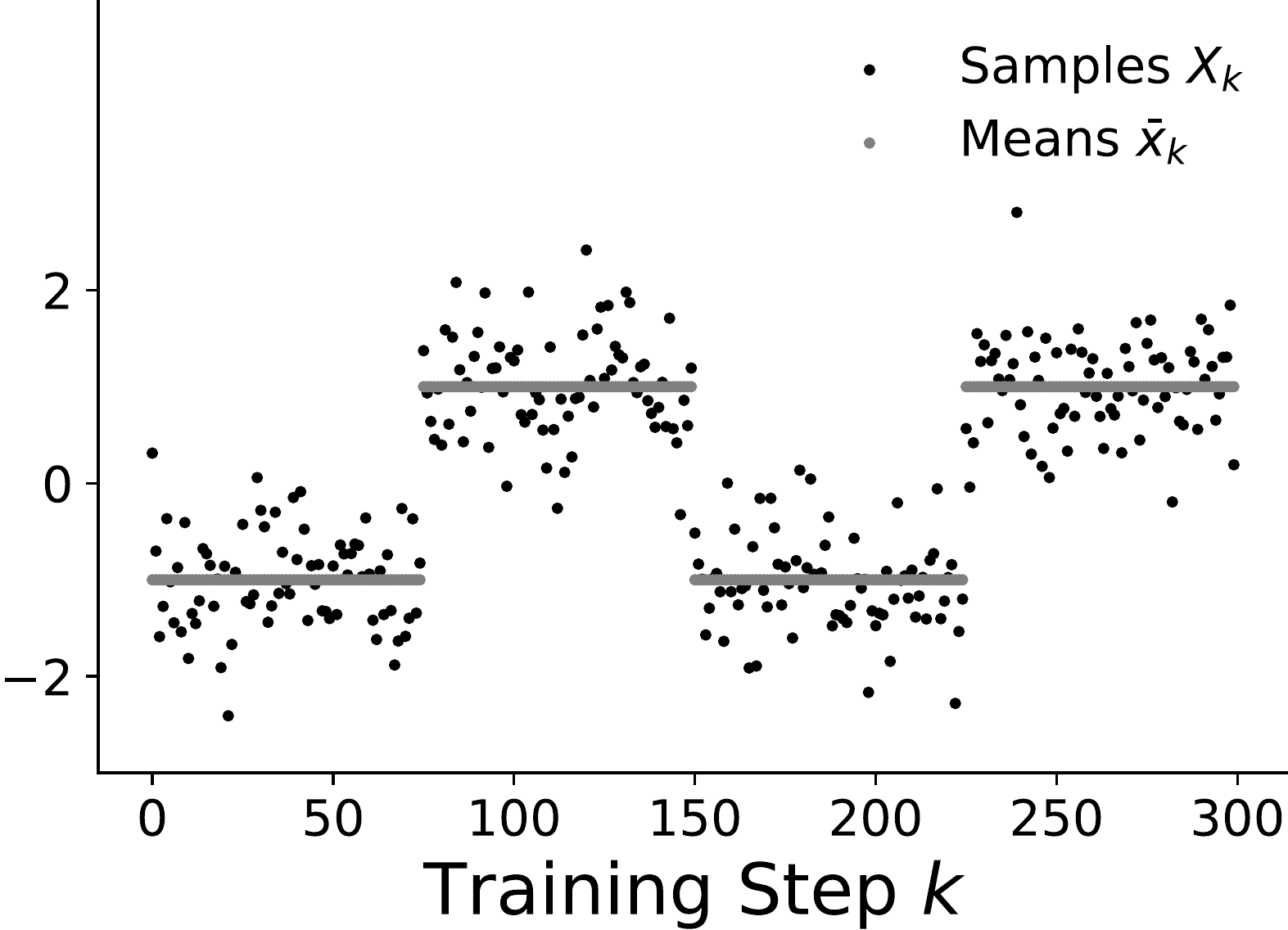}
        \caption{Periodic mean $\{X_k\}$ sample trajectory for $T=150$}
    \end{subfigure}
    \hfill
    \begin{subfigure}{0.45\textwidth}
        \vspace{1cm}
        \includegraphics[width=\textwidth]{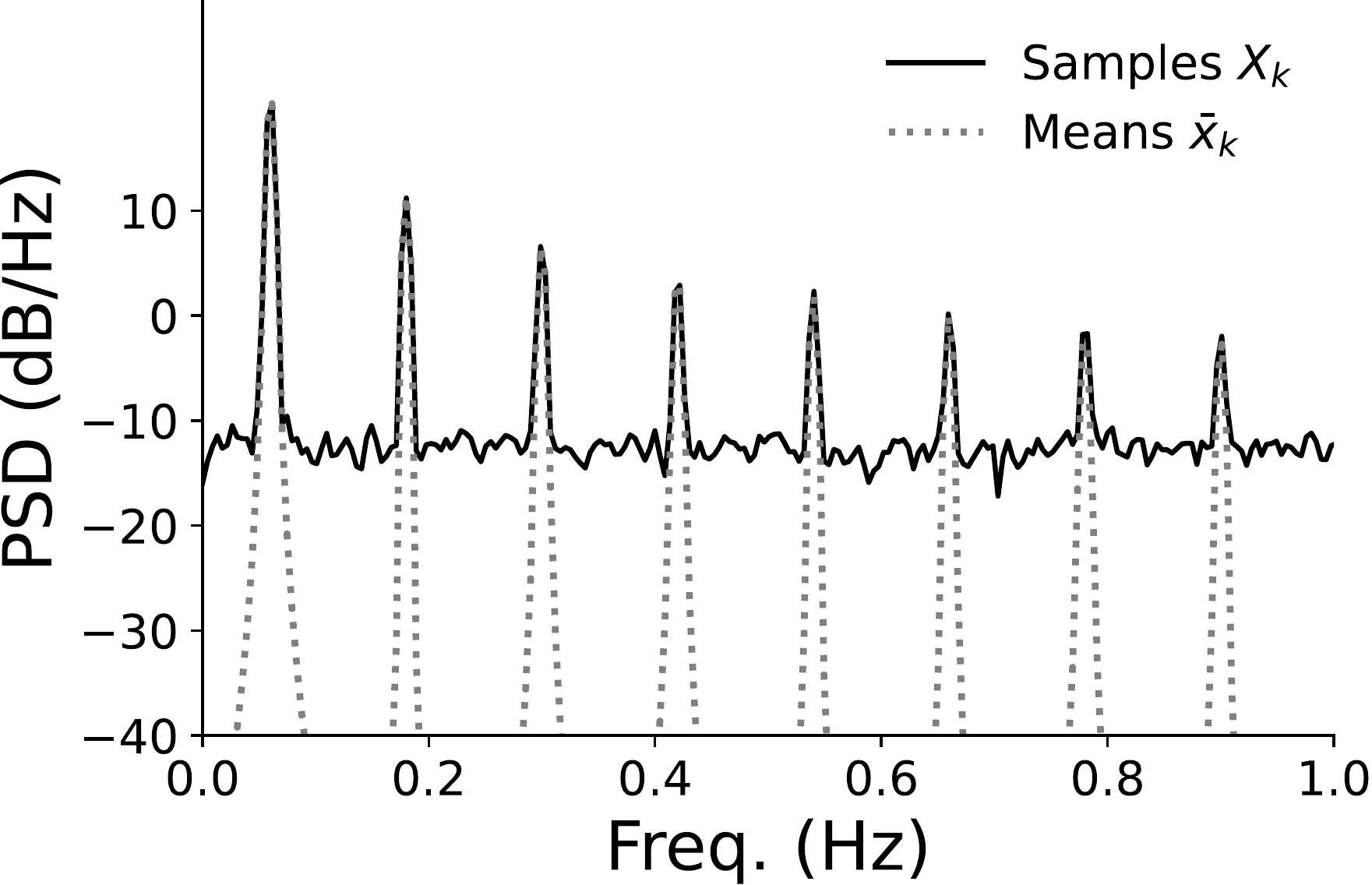}
        \caption{Periodic mean $\{X_k\}$ frequency content for $T=150$}
    \end{subfigure}
    \hfill
    \caption{Sample trajectories (left) and power spectral densities (right) for Experiment \ref{exp:3-to-stochastic-updates} (Table \ref{tab:linear-regression-square-wave}) for three values of frequency tuning parameter $T$.  The signal is a square wave in a single dimension for the figure, but is a square wave randomly oriented in higher dimensions for the actual experiment.  There are multiple frequency peaks, with the frequency domain contracted by changing $T$.}
    \label{fig:psd-periodic}
\end{figure}

\begin{table}[ht]
    \caption{Details for linear regression square wave covariate shift problem.}
    \label{tab:linear-regression-square-wave}
    \centering
    \begin{tabular}{p{0.4\textwidth}p{0.5\textwidth}}
        \toprule
        Square Wave Mean Period &
        $T \in [0, 120]$ \\
        \midrule
        Input Dimensionality &
        $d = 5$ \\
        \midrule
        Covariate Shift Mean &
        $\begin{aligned}
            &\mean_k = \begin{cases}
                &\frac{\xi}{2 ||\xi||} \; \text{if} \; \lfloor \frac{2 k}{T} \rfloor \equiv 0 \mod{1} \\
                - &\frac{\xi}{2 ||\xi||} \; \text{if} \; \lfloor \frac{2 k}{T} \rfloor \equiv 1 \mod{1} \\
            \end{cases} \\
            &\xi \sim\normal(0, I_{d \times d}) \; \text{iid}
        \end{aligned}$ \\
        \midrule
        Input Sampling &
        $X_k \sim\normal(\mean_k, 0.25 I_{d \times d})$ \\
        \midrule
        Target Function (fixed for all $k$) &
        $\begin{aligned}
            &Y_k = \langle \theta^*_{[1:d]} , X_k \rangle + \theta^*_{d+1} + \epsilon_k \\
            &\theta^* \sim\normal(0, c I_{d+1 \times d+1}) \; \text{where} \; c = 0.25 \\
            &\epsilon_k \sim\normal(0, 0.1) \\
        \end{aligned}$ \\
        \midrule
        Model and Optimizer&
        $\begin{aligned}
            &\widehat{Y_k} = \langle \theta_{[1:d]} , X_k \rangle + \theta_{d+1} \\
            &\theta_0 \sim\normal(0, c I_{d+1 \times d+1}) \; \text{where} \; c = 0.25 \\
            &(\theta_k)_{k \in \naturals} \leftarrow \text{SGDm}(\eta, \mu) \\
            &\eta = 0.01, \mu = 0.95 \\
            &k \in [0, 10^4]
        \end{aligned}$ \\
        \bottomrule
    \end{tabular}
\end{table}

\clearpage
\subsubsection{Experiment \ref{exp:4-signal-amplitude} Details}

\begin{figure}[ht]
    \centering
    \begin{subfigure}{0.45\textwidth}
        \includegraphics[width=\textwidth]{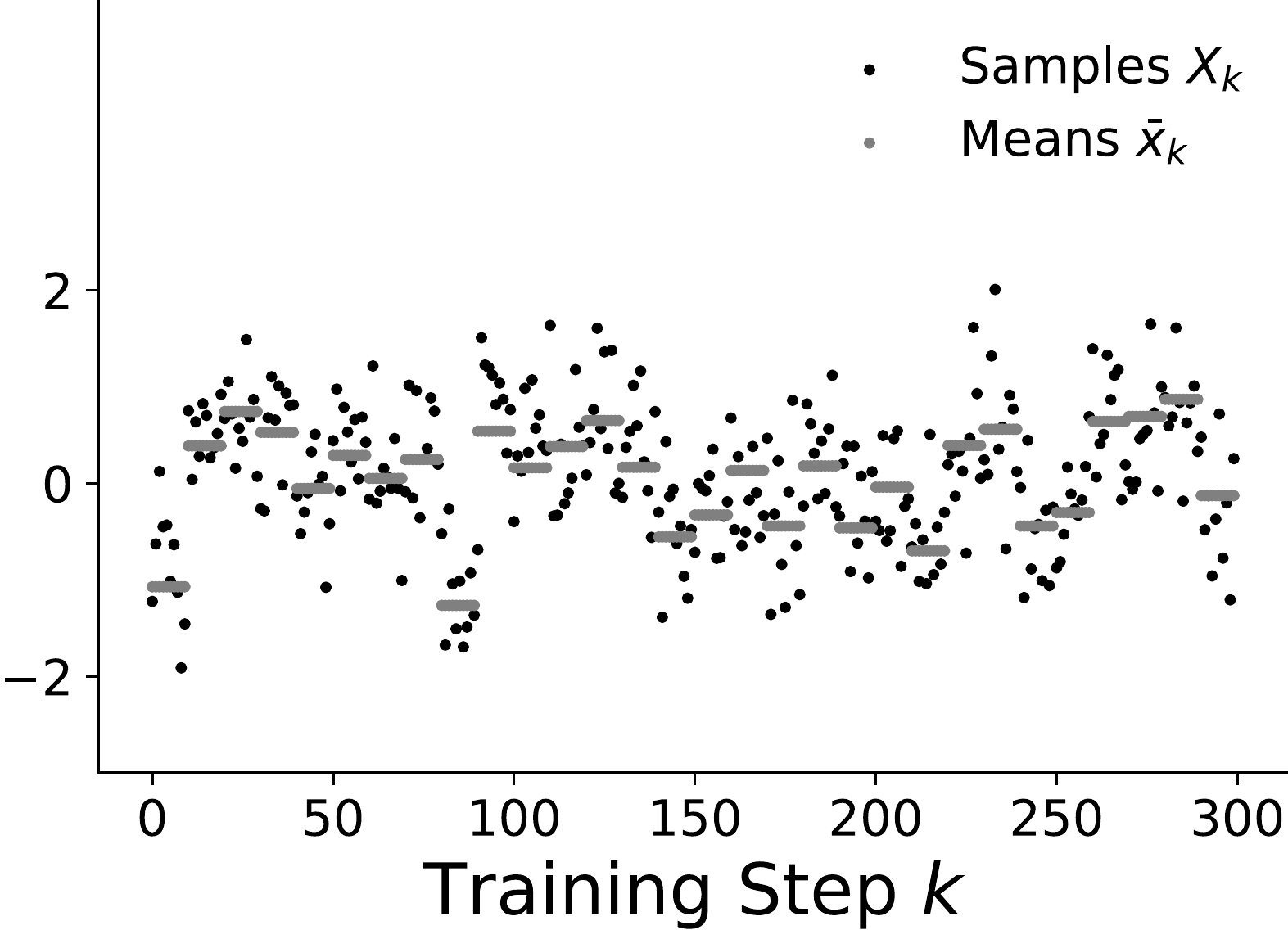}
        \caption{Switching mean $\{X_k\}$ sample trajectory for $T=10$}
    \end{subfigure}
    \hfill
    \begin{subfigure}{0.45\textwidth}
        \includegraphics[width=\textwidth]{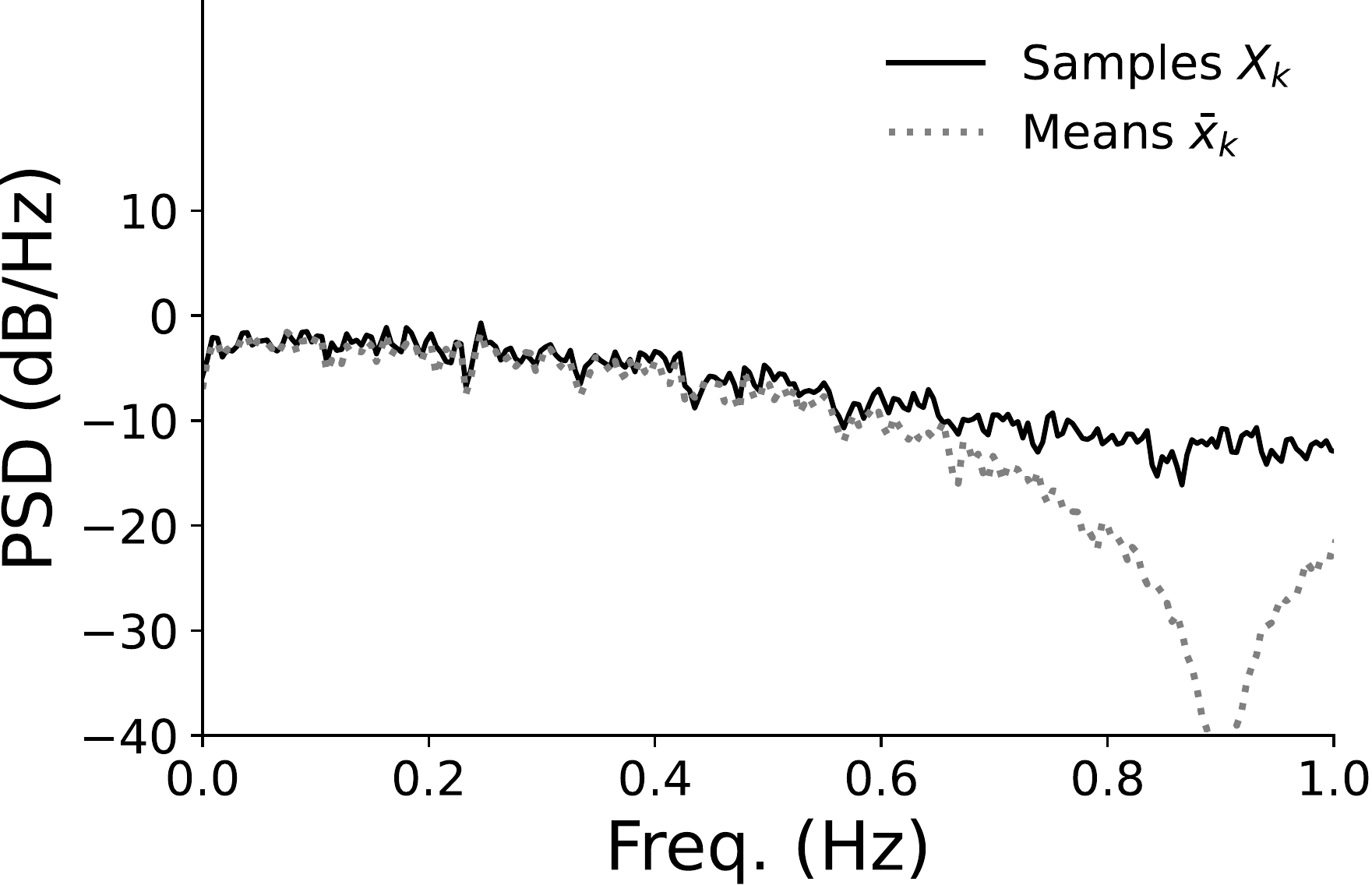}
        \caption{Switching mean $\{X_k\}$ frequency content for $T=10$}
    \end{subfigure}
    \hfill
    \begin{subfigure}{0.45\textwidth}
        \vspace{1cm}
        \includegraphics[width=\textwidth]{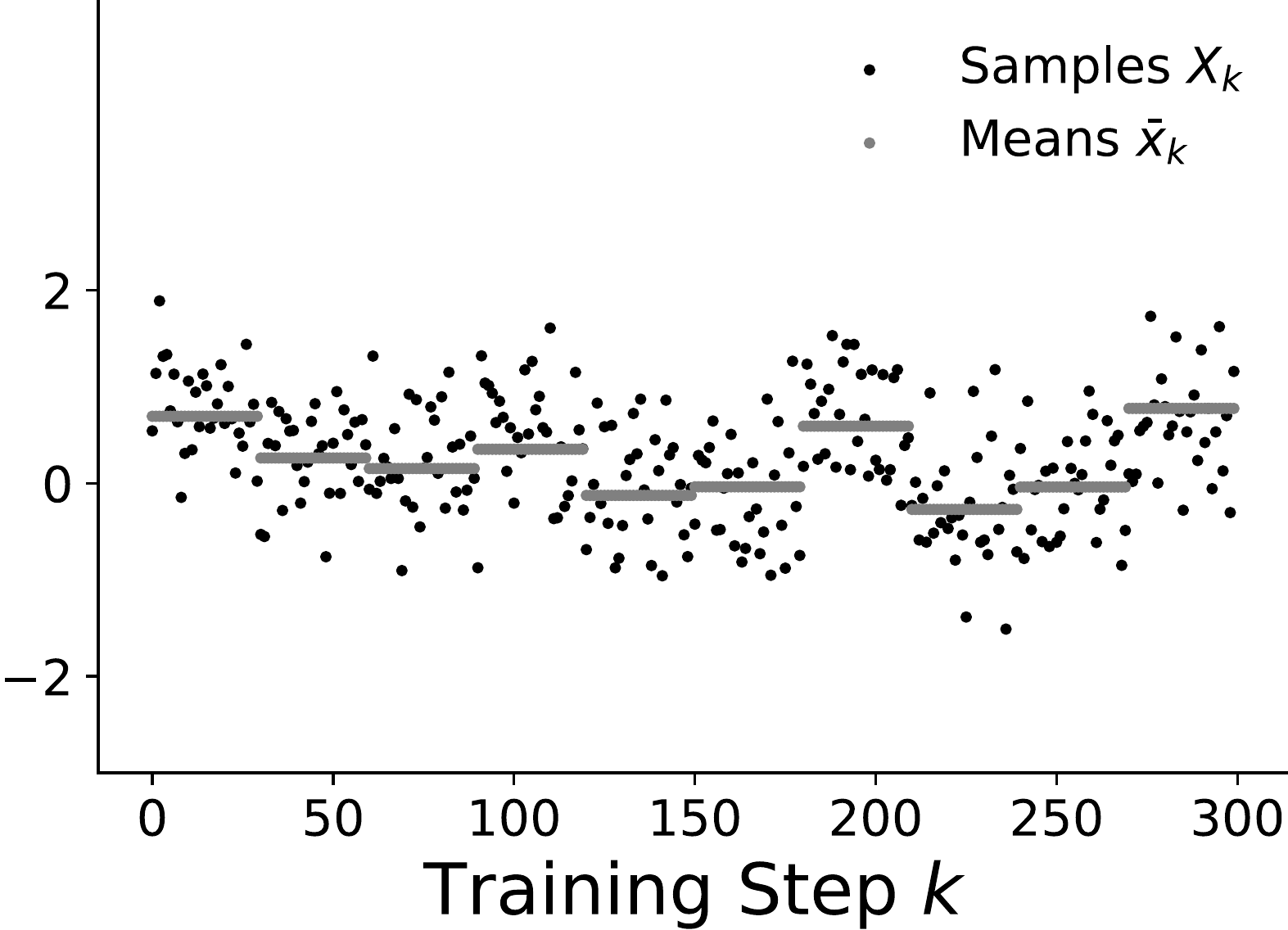}
        \caption{Switching mean $\{X_k\}$ sample trajectory for $T=30$}
    \end{subfigure}
    \hfill
    \begin{subfigure}{0.45\textwidth}
        \vspace{1cm}
        \includegraphics[width=\textwidth]{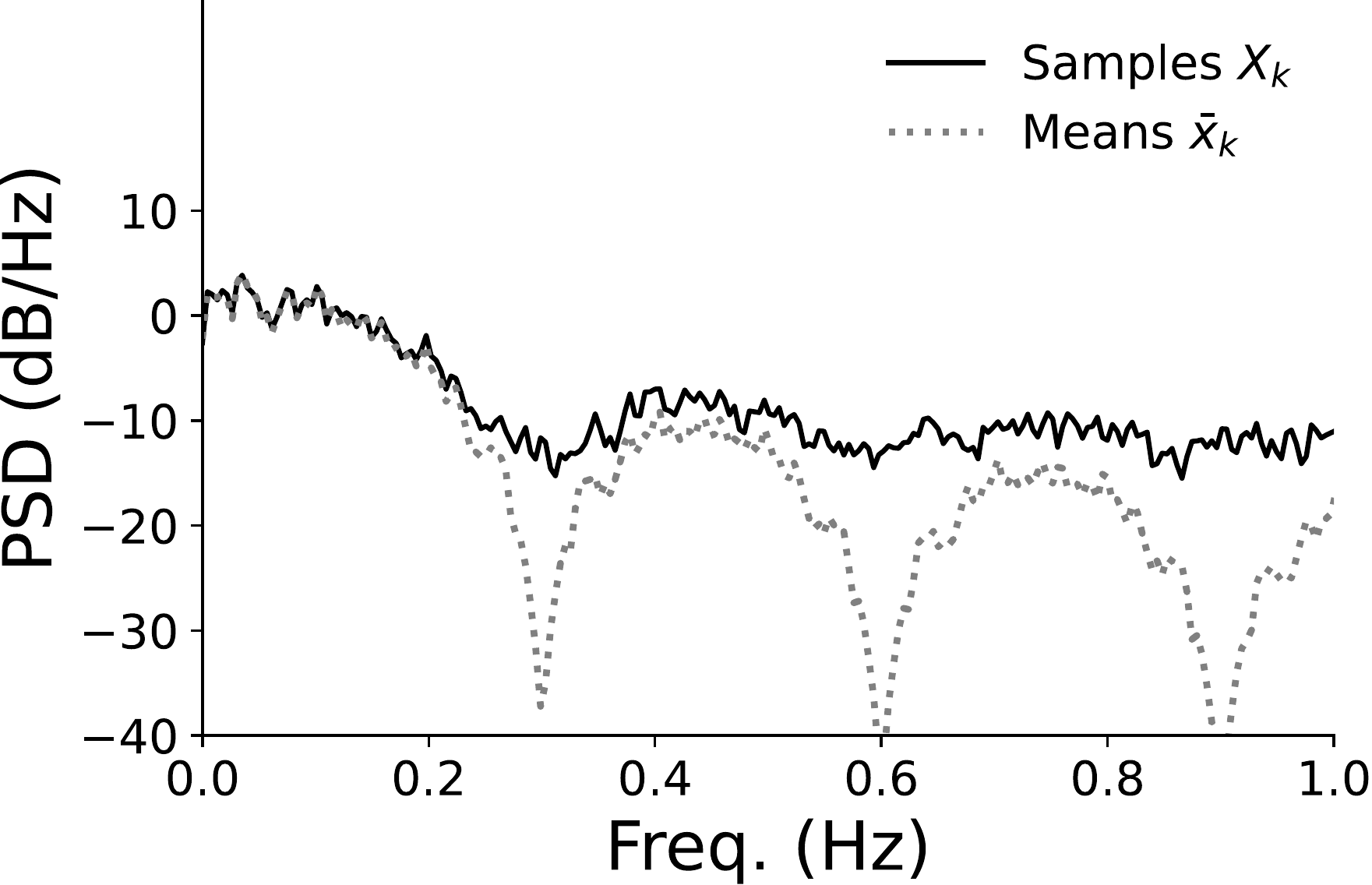}
        \caption{Switching mean $\{X_k\}$ frequency content for $T=30$}
    \end{subfigure}
    \hfill
    \begin{subfigure}{0.45\textwidth}
        \vspace{1cm}
        \includegraphics[width=\textwidth]{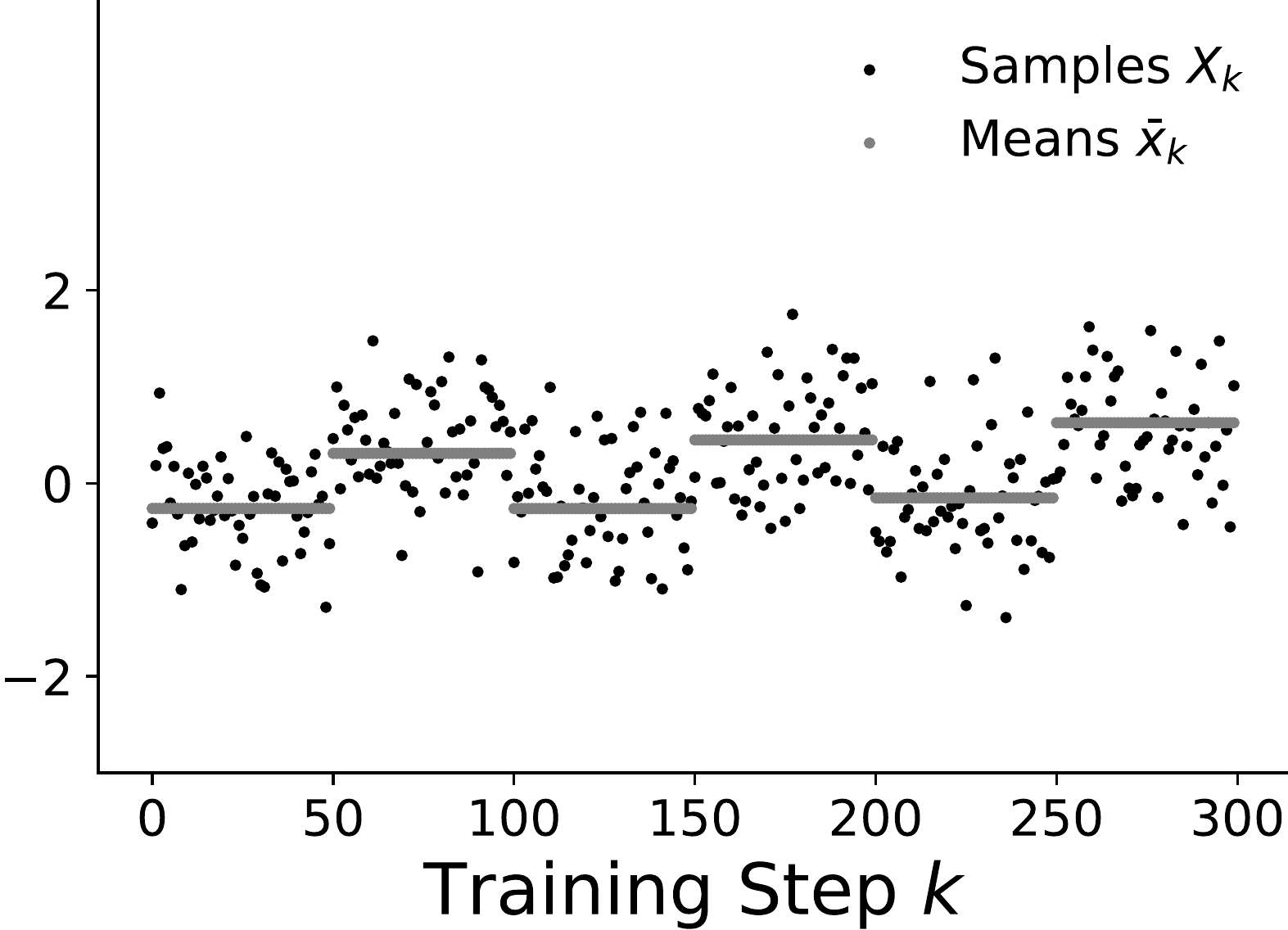}
        \caption{Switching mean $\{X_k\}$ sample trajectory for $T=50$}
    \end{subfigure}
    \hfill
    \begin{subfigure}{0.45\textwidth}
        \vspace{1cm}
        \includegraphics[width=\textwidth]{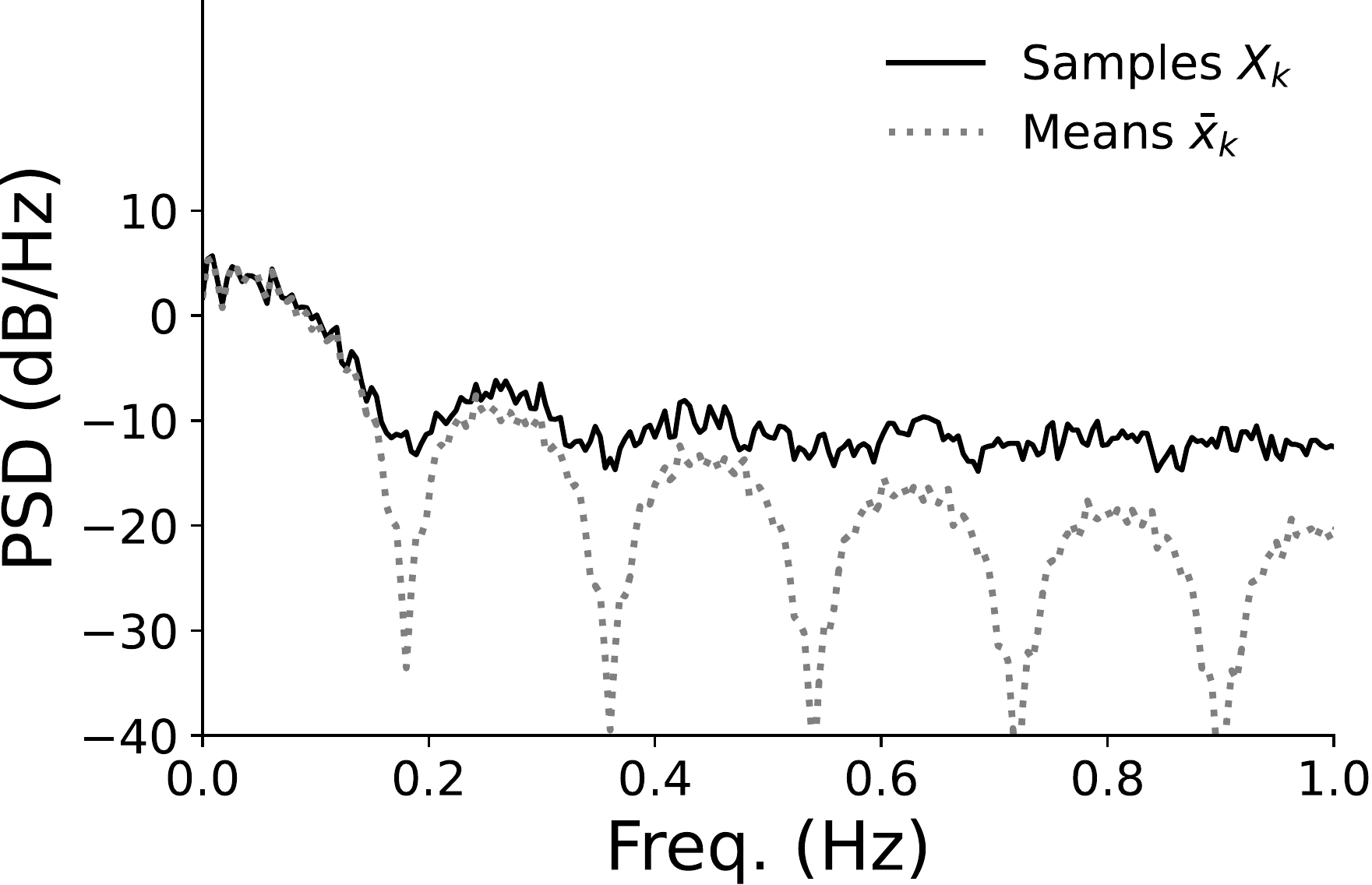}
        \caption{Switching mean $\{X_k\}$ frequency content for $T=50$}
    \end{subfigure}
    \hfill
    \caption{Sample trajectories (left) and power spectral densities (right) for Experiments \ref{exp:4-signal-amplitude}-\ref{exp:6-neural-net} (Tables \ref{tab:linear-regression-square-wave}-\ref{tab:linear-regression-stochistic-switching-neural}) for three values of frequency tuning parameter $T$.  The mean signal is essentially white noise `stretched' according to $T$.  The signal is a single dimension for the figure, but is of higher dimensions for the actual experiment.  The frequency content is a vertical reflection of Figure \ref{fig:psd-periodic} (right,) with frequency troughs instead of peaks.  Note that the frequency content is the mildest of all settings, having the lowest and least-defined peaks, which aligns with intuition: there is nothing even remotely oscillatory in its definition. It is reasonable to expect that many `natural' processes would have more aggressive frequency content.}
    \label{fig:psd-switching}
\end{figure}

\begin{table}[ht]
    \caption{Details for linear regression problem with stochastically switching covariate shift mean.}
    \label{tab:linear-regression-stochastic-switching}
    \centering
    \begin{tabular}{p{0.4\textwidth}p{0.5\textwidth}}
        \toprule
        Mean Switching Interval &
        $T \in [0, 50]$ \\
        \midrule
        Mean Switching Variance &
        $v = 0.25$ for Figure \ref{fig:rainbow-switching-increasing-dimensionality}, $v \in [0, 0.4]$ for Figure \ref{fig:rainbow-switching-increasing-amplitude}\\
        \midrule
        Input Dimensionality &
        $d \in [1, 9]$ for Figure \ref{fig:rainbow-switching-increasing-dimensionality}, $d = 5$ for Figure \ref{fig:rainbow-switching-increasing-amplitude}\\
        \midrule
        Covariate Shift Mean &
        $\begin{aligned}
            &\mean_k = \xi_i \; \text{where} \; i = \left\lfloor \frac{k}{T} \right\rfloor \\
            &\xi_i \sim\normal(0, v I_{d \times d}) \; \text{iid}
        \end{aligned}$ \\
        \midrule
        Remaining Parameters &
        $X_k, Y_k, \widehat{Y_k}, \theta^*, \theta_0, (\theta_k)_{k \in \naturals}, \eta, \mu, k$ same as Table \ref{tab:linear-regression-square-wave} \\
        \bottomrule
    \end{tabular}
\end{table}

\clearpage
\subsubsection{Experiment \ref{exp:5-adam} Details}

For visual depiction, see Figure \ref{fig:psd-switching}.

\begin{table}[ht]
    \caption{Details for linear regression problem with stochastically switching covariate shift, optimized with ADAM instead of SGDm.}
    \label{tab:linear-regression-stochistic-switching-adam}
    \centering
    \begin{tabular}{p{0.4\textwidth}p{0.5\textwidth}}
        \toprule
        Mean Switching Interval &
        $T \in [0, 100]$ \\
        \midrule
        Mean Switching Variance &
        $v = 1.0$ \\
        \midrule
        Input Dimensionality &
        $d = 5$ \\
        \midrule
        Covariate Shift Mean &
        $\mean_k$ identical to Table \ref{tab:linear-regression-stochastic-switching}\\
        \midrule
        Input Sampling &
        $X_k \sim\normal(\mean_k, 0.1 I_{d \times d})$ \\
        \midrule
        Target Function (fixed for all $k$) &
        $Y_k, \theta^*, \epsilon_k$ identical to Table \ref{tab:linear-regression-stochastic-switching}\\
        \midrule
        Model &
        $\widehat{Y_k}, \theta_0$ identical to Table \ref{tab:linear-regression-stochastic-switching}\\
        \midrule
        Optimizer &
        $\begin{aligned}
            &(\theta_k)_{k \in \naturals} \leftarrow \text{ADAM}(\eta, \beta_1, \beta_2) \\
            &\eta = 0.01, \beta_1 \in [0.9, 0.99], \beta_2 = 0.999 \\
            &k \in [0, 10^4]
        \end{aligned}$ \\
        \bottomrule
    \end{tabular}
\end{table}

\subsubsection{Experiment \ref{exp:6-neural-net} Details}

For visual depiction, see Figure \ref{fig:psd-switching}.

\begin{table}[ht]
    \caption{Details for neural network regression problem with stochastically switching covariate shift.}
    \label{tab:linear-regression-stochistic-switching-neural}
    \centering
    \begin{tabular}{p{0.4\textwidth}p{0.5\textwidth}}
        \toprule
        Mean Switching Interval &
        $T \in [0, 100]$ \\
        \midrule
        Mean Switching Variance &
        $v \in [0, 0.4]$ \\
        \midrule
        Input Dimensionality &
        $d = 2$ \\
        \midrule
        Covariate Shift Mean &
        $\mean_k$ identical to Table \ref{tab:linear-regression-stochastic-switching}\\
        \midrule
        Input Sampling &
        $X_k \sim\normal(\mean_k, 0.1 I_{d \times d})$ \\
        \midrule
        Target Function (fixed for all $k$) &
        $\begin{aligned}
            &Y_k = \cos(\pi ||X_k||) + \epsilon_k \\
            &\epsilon_k \sim\normal(0, 0.1)
        \end{aligned}$ \\
        \midrule
        Model &
        $\begin{aligned}
            &\widehat{Y_k} = f(X_k; \theta_k) \; \text{two hidden layers of 20 activations} \\
            &\theta_0 \; \text{initialized as He et. al.}
        \end{aligned}$ \\
        \midrule
        Optimizer &
        $\begin{aligned}
            &(\theta_k)_{k \in \naturals} \leftarrow \text{SGDm}(\eta, \mu) \\
            &\eta = 0.01, \mu = 0.95 \\
            &k \in [0, 2 \times 10^4]
        \end{aligned}$ \\
        \bottomrule
    \end{tabular}
\end{table}

\end{document}